\newtheorem{theorem}{Theorem}
\newtheorem{definition}[theorem]{Definition}
\newtheorem{remark}{Remark}
\newtheorem{lemma}[theorem]{Lemma}
\newtheorem{corollary}[theorem]{Corollary}
\newtheorem{asm}{Assumption}
\newcommand{\indi}{\mathds{1}}
\newcommand{\RR}{\mathbb{R}}
\newcommand{\EE}{\mathbb{E}}
\newcommand{\diag}{\operatorname{diag}}
\newcommand{\vecx}{\mathbf{x}}
\newcommand{\vecv}{\mathbf{v}}
\newcommand{\vecy}{\mathbf{y}}
\newcommand{\vecg}{\mathbf{g}}
\newcommand{\vecn}{\mathbf{n}}
\newcommand{\Exp}{\mathbb{E}}
\newcommand{\matP}{\mathbf{P}}
\newcommand{\matD}{\mathbf{D}}
\newcommand{\HW}{\textsc{Hogwild!}}
\newcommand{\setS}{\mathcal{S}}
\newcommand{\matS}{{\bf S}}
\newcommand{\CM}{\textsc{KroMagnon}}
\newcommand{\cdeg}{\overline{\Delta}_\text{C}}
\newcommand{\ct}{\mathcal{O}(1)}
\numberwithin{equation}{section}
\begin{document}

\title{\huge Perturbed Iterate Analysis \\for Asynchronous Stochastic Optimization}

\author{
Horia Mania$^{\alpha,\epsilon}$, Xinghao Pan$^{\alpha, \epsilon}$, Dimitris Papailiopoulos$^{\alpha,\epsilon}$\\
Benjamin Recht$^{\alpha,\epsilon, \sigma}$, Kannan Ramchandran$^{\epsilon}$, and Michael I. Jordan$^{\alpha, \epsilon, \sigma}$\\
{\normalsize$^{\alpha}$AMPLab, $^{\epsilon}$EECS at UC Berkeley, $^{\sigma}$Statistics at UC Berkeley}
} 

\date{}

\maketitle

\begin{abstract}
We introduce and analyze stochastic optimization methods where the input to each update is perturbed by bounded noise.  
We show that this framework forms the basis of a unified approach to analyze asynchronous implementations of stochastic optimization algorithms, 
%In this framework, asynchronous stochastic optimization algorithms can be thought of 
by viewing them as  serial methods operating on noisy inputs.  
Using our perturbed iterate framework, we provide new analyses of the \HW{} algorithm and asynchronous stochastic coordinate descent, that are simpler than earlier analyses, remove many assumptions of previous models, and in some cases yield improved upper bounds on the convergence rates.  
We proceed to apply our framework to develop and analyze \CM{}: a novel, parallel, sparse stochastic variance-reduced gradient (SVRG) algorithm.  
We demonstrate experimentally on a 16-core machine that the sparse and parallel version of SVRG is in some cases more than four orders of magnitude faster than the standard SVRG algorithm.
\\

\textbf{Keywords:} stochastic optimization, asynchronous algorithms, parallel machine learning.
\end{abstract}

\section{Introduction}

Asynchronous parallel stochastic optimization algorithms have recently gained significant traction in algorithmic machine learning.  A large body of recent work has demonstrated that near-linear speedups are achievable, in theory and practice, on many common machine learning tasks
\cite{niu2011hogwild,
recht2012factoring,
zhuang2013fast,
yun2013nomad,
liu2014asynchronous1,
duchi2013estimation,
wang2014asynchronous,
hsieh2015passcode}.  Moreover, when these lock-free algorithms are applied to non-convex optimization, significant speedups are still achieved with no loss of statistical accuracy.  This behavior has been demonstrated in practice in 
state-of-the-art deep learning systems such as Google's Downpour SGD~\cite{dean2012large}  and Microsoft's Project Adam~\cite{chilimbi2014project}.

Although asynchronous stochastic algorithms are simple to implement and enjoy excellent performance in practice, they are challenging to analyze theoretically. 
The current analyses require lengthy derivations and several assumptions that may not reflect realistic system behavior.
Moreover, due to the difficult nature of the proofs, the algorithms analyzed are often simplified versions of those actually run in practice.

In this paper, we propose a general framework for deriving convergence rates for parallel, lock-free, asynchronous first-order stochastic algorithms. 
We interpret the algorithmic effects of asynchrony as perturbing the stochastic iterates with bounded noise.
This interpretation allows us to show how a variety of asynchronous first-order algorithms can be analyzed as their serial counterparts operating on noisy inputs. 
The advantage of our framework is that it yields elementary convergence proofs, can remove or relax simplifying assumptions adopted in prior art,  and can yield improved bounds when compared to earlier work.

We demonstrate the general applicability of our framework by providing new convergence analyses for \HW{}, \emph{i.e.}, the asynchronous stochastic gradient method (SGM), for asynchronous stochastic coordinate descent (ASCD), and \CM{}: a novel asynchronous sparse version of the stochastic variance-reduced gradient (SVRG) method~ \cite{johnson2013accelerating}. In particular, we provide a modified version of SVRG that allows for sparse updates, we show that this method can be parallelized in the asynchronous model,  and we provide convergence guarantees using our framework.  Experimentally, the asynchronous, parallel sparse SVRG achieves nearly-linear speedups on a machine with 16 cores and is sometimes four orders of magnitude faster than the standard (dense) SVRG method.

 \subsection{Related work}
 
 The algorithmic tapestry of parallel stochastic optimization is rich and diverse extending back at least to the late 60s
\cite{chazan1969chaotic}.
Much of the contemporary work in this space is built upon the foundational work of Bertsekas, Tsitsiklis et al.
\cite{bertsekas1989parallel,
tsitsiklis1986distributed}; the shared memory access model that we are using in this work, is very similar to the partially asynchronous model introduced in the aforementioned manuscripts.
Recent advances in parallel and distributed computing technologies have generated renewed interest in the theoretical understanding and practical implementation of parallel stochastic algorithms
\cite{zinkevich2009slow,
zinkevich2010parallelized,
gemulla2011large,
agarwal2011distributed,
richtarik2012parallel,
jaggi2014communication}.

 The power of lock-free, asynchronous stochastic optimization on shared-memory multicore systems was first demonstrated  in the work of \cite{niu2011hogwild}.
The authors introduce \HW{}, a completely lock-free and asynchronous parallel stochastic gradient method (SGM) that exhibits nearly linear speedups for a variety of machine learning tasks.
Inspired by \HW{}, several authors developed lock-free and asynchronous algorithms that move beyond SGM, such as the work of~Liu et al. on parallel stochastic coordinate descent~ \cite{liu2014asynchronous1,liu2015asynchronous}.  Additional work in first order optimization and beyond
\cite{duchi2013estimation,
wang2014asynchronous,
hong2014distributed,
hsieh2015passcode,
feyzmahdavian2015asynchronous},
extending to parallel iterative linear solvers \cite{liu2014asynchronous2,avron2014revisiting}, has further shown that linear speedups are possible in the asynchronous shared memory model.

%While this paper was in preparation, several new independent studies on the analysis of asynchronous optimization algorithms have appeared~\cite{reddi2015variance,de2015taming,lian2015asynchronous, peng2015arock}: 
%\cite{reddi2015variance} proposes an analysis for asynchronous parallel, but dense, SVRG, under assumptions similar to those found in \cite{niu2011hogwild}. The authors of
%\cite{de2015taming} offer a new analysis for the ``coordinate-wise" update version of \HW{} using martingales, with similar assumptions to \cite{niu2011hogwild}, that however does not require any sparsity assumptions on the gradient updates, and can be applied to some non-convex problems. 
%Furthermore, \cite{lian2015asynchronous} presents an analysis for stochastic gradient methods on smooth, potentially nonconvex functions.
%Finally, \cite{peng2015arock} introduces a new framework for analyzing coordinate-wise fixed point stochastic iterations.

%%% Local Variables:
%%% mode: latex
%%% TeX-master: "perturbed_arxiv"
%%% End:

\section{Perturbed Stochastic Gradients}
\label{section: noisy input stochastic gradients}
\paragraph{Preliminaries and Notation}
We study parallel asynchronous iterative algorithms that minimize convex functions $f(\vecx)$ with $\vecx\in\RR^d$. 
The computational model is the same as that of Niu et al.~\cite{niu2011hogwild}: a number of cores have access to the same shared memory, and each of them can read and update components of $\vecx$ in the shared memory.
The algorithms that we consider are asynchronous and lock-free: cores do not coordinate their reads or writes, and while a core is reading/writing other cores can update the shared variables in $\vecx$. 

We focus our analysis on functions $f$ that are $L$-smooth and $m$-strongly convex. 
A function $f$ is $L$-smooth if it is differentiable and has Lipschitz gradients
\begin{equation}
%\label{eq: L smoothness}
\nonumber
\|\nabla f(\vecx) - \nabla f(\vecy)\| \leq L \|\vecx - \vecy\| \text{ for all } \vecx,\vecy\in \RR^d,
\end{equation}
where $\|\cdot\|$ denotes the Euclidean norm. 
Strong convexity with parameter $m > 0$ imposes a curvature condition on $f$:
\begin{equation}
%\label{eq: m strong convexity}
\nonumber
f(\vecx) \geq f(\vecy) + \langle\nabla f(\vecy), \vecx - \vecy\rangle + \frac{m}{2}\|\vecx - \vecy\|^2 \text{ for all } \vecx, \vecy \in \RR^d.
\end{equation}
Strong convexity implies that $f$ has a unique minimum $\vecx^*$ %, and that the following inequality holds #SHORT
and satisfies
\begin{equation}
\langle\nabla f(\vecx) - \nabla f(\vecy), \vecx -\vecy\rangle\geq m\|\vecx - \vecy\|^2.
\nonumber
\end{equation}
In the following, we use $i$, $j$, and $k$ to denote iteration counters, while reserving $v$ and $u$ to denote coordinate indices. We use $\ct$ to denote absolute constants. 

\paragraph{Perturbed Iterates}
A popular way to minimize convex functions is via {\it first-order stochastic} algorithms.
These algorithms can be described using the following general iterative expression:
\begin{equation}
\vecx_{j+1} = \vecx_j - \gamma \vecg(\vecx_j, \xi_j),
\label{eq:sgm_iterate}
\end{equation}
where $\xi_j$ is a random variable independent of $\vecx_j$ and $\vecg$ is an unbiased estimator of the true gradient of $f$ at $\vecx_j$: $\EE_{\xi_j} \vecg (\vecx_j, \xi_j) = \nabla f(\vecx_j)$.
The success of first-order stochastic techniques partly lies in their computational efficiency: the small computational cost of using noisy gradient estimates trumps the gains of using true gradients. %#SHORT

A major advantage of the iterative formula in \eqref{eq:sgm_iterate} is that---in combination with strong convexity, and smoothness inequalities---one can easily track algorithmic progress and establish convergence rates to the optimal solution.
Unfortunately, the progress of asynchronous parallel algorithms cannot be precisely described or analyzed using the above iterative framework. 
Processors do not read from memory actual iterates $\vecx_j$, as there is no global clock that synchronizes reads or writes while different cores write/read ``stale" variables.

In the subsequent sections, we show that the following simple perturbed variant of Eq.~\eqref{eq:sgm_iterate} can capture the algorithmic progress of asynchronous stochastic algorithms. Consider the following iteration
\begin{equation}
\label{eq: general algorithm}
\vecx_{j+1} = \vecx_j - \gamma \vecg(\vecx_j + \vecn_j, \xi_j),
\end{equation}
where $\vecn_j$ is a stochastic error term. For simplicity let $\hat\vecx_j = \vecx_j+\vecn_j$.
%\footnote{The notion of gradient perturbation has been used in different contexts in the past  (e.g., see \cite{duchi2012randomized} and references therein).} 
Then,
\begin{align}
&\|\vecx_{j+1} - \vecx^*\|^2 
= \|\vecx_j - \gamma \vecg(\hat\vecx_j,\xi_j) - \vecx^*\|^2  \nonumber\\
&= \|\vecx_j - \vecx^*\|^2- 2\gamma
\langle \vecx_j - \vecx^*, \vecg(\hat\vecx_j,\xi_j)\rangle + \gamma^2 \|\vecg(\hat\vecx_j,\xi_j)\|^2\label{eq:aj_first}\\
&= \|\vecx_j - \vecx^*\|^2- 2\gamma
\langle \hat\vecx_j - \vecx^*, \vecg(\hat\vecx_j,\xi_j)\rangle + \gamma^2 \|\vecg(\hat\vecx_j,\xi_j)\|^2
+ 2\gamma\langle \hat\vecx_j - \vecx_j, \vecg(\hat\vecx_j,\xi_j)\rangle, \nonumber
\end{align}
where in the last equation we added and subtracted the term $2\gamma\langle \hat\vecx_j , \vecg(\hat\vecx_j,\xi_j)\rangle$. 

We assume that $\hat\vecx_j$ and $\xi_j$ are independent.
However, in contrast to recursion \eqref{eq:sgm_iterate}, we no longer require $\vecx_j$ to be independent of $\xi_j$.
The importance of the above independence assumption will become clear in the next section. 

We now take the expectation of both sides in \eqref{eq:aj_first}. 
Since $\hat\vecx_j$ and $\vecx^*$ are independent of $\xi_j$, we use iterated expectations to obtain
$\EE \langle \hat\vecx_j -\vecx^*, \vecg(\hat\vecx_j,\xi_j)\rangle = \EE \langle \hat\vecx_j - \vecx^*, \nabla f(\hat\vecx_j)\rangle.$
Moreover, since $f$ is $m$-strongly convex, we know that
\begin{equation}
\label{equation: strong conv lower bound}
\langle \hat\vecx_j - \vecx^*, \nabla f(\hat\vecx_j)\rangle \geq m \|\hat\vecx_j - \vecx^*\|^2 \geq \frac{m}{2}\|\vecx_j  - \vecx^*\|^2 - m\|\hat\vecx_j - \vecx_j\|^2,
\end{equation}
where the second inequality is a simple consequence of the triangle inequality. 
Now, let $a_j = \EE\|\vecx_j - \vecx^*\|^2$ and substitute \eqref{equation: strong conv lower bound} back into Eq.~\eqref{eq:aj_first} to get
{\small
\begin{align}
a_{j+1} &\leq (1-\gamma m) a_j  +  \gamma^2\underbrace{\EE\|g(\hat\vecx_j,\xi_j)\|^2}_{R_0^j} + 2\gamma m \underbrace{\Exp\|\hat\vecx_j - \vecx_j\|^2}_{R_1^j} +  2\gamma \underbrace{\EE \langle \hat\vecx_j - \vecx_j, \vecg(\hat\vecx_j,\xi_j)\rangle}_{R_2^j}.
\label{eq:aj_main}
\end{align}}The recursive equation \eqref{eq:aj_main} is key to our analysis. We show that for given $R_0^j$, $R_1^j$, and $R_2^j$, we can obtain convergence rates through elementary algebraic manipulations.
Observe that there are three ``error" terms in \eqref{eq:aj_main}: $R_0^j$ captures the stochastic gradient decay with each iteration, $R_1^j$ captures the mismatch  between the true iterate and its noisy estimate, and $R_2^j$ measures the size of the projection of that mismatch on the gradient at each step.
The key contribution of our work is to show that 1) this iteration can capture the algorithmic progress of asynchronous algorithms, and 2) the error terms can be bounded to obtain a $\mathcal{O}(\log(1/\epsilon)/\epsilon)$ rate for \HW{}, and linear rates of convergence for asynchronous SCD and asynchronous sparse SVRG.

%%% Local Variables:
%%% mode: latex
%%% TeX-master: t
%%% End:
 
%\vspace{-0.3cm}
\section{Analyzing \HW{}}
\label{section: hogwild}
%\vspace{-0.3cm}

In this section, we provide a simple analysis of \HW{}, the asynchronous implementation of SGM.
We focus on functions $f$ that are decomposable into $n$ terms:
\begin{equation}
\label{eq:decomposable-f}
f(\vecx) = \frac{1}{n}\sum_{i=1}^n f_{e_i}(\vecx),
\end{equation}
where $\vecx \in \RR^d$, and each $f_{e_i}(\vecx)$  depends only on the coordinates indexed by the subset $e_i$ of $\{1,2,\ldots,d\}$.
For simplicity we assume that the terms of $f$ are differentiable; our results can be readily extended to non-differentiable $f_{e_i}$s.

We refer to the sets $e_i$ as \emph{hyperedges} and denote the set of hyperedges by $\mathcal{E}$.
We sometimes refer to $f_{e_i}$s as the \emph{terms} of $f$.
As shown in Fig.~\ref{fig:bipartite_graph},
the hyperedges induce a bipartite graph between the $n$ terms and the $d$ variables in $\vecx$, and a conflict graph between the $n$ terms. 
Let  $\cdeg$ be the average degree in the conflict graph; that is, the average number of terms that are in conflict with a single term.
We assume that $\cdeg\ge 1$, otherwise we could decompose the problem into smaller independent sub-problems.
As we will see,
under our perturbed iterate analysis framework the convergence rate of asynchronous algorithms depends on $\cdeg$.

\begin{figure}[t]
\centerline{\includegraphics[width=0.62\columnwidth]{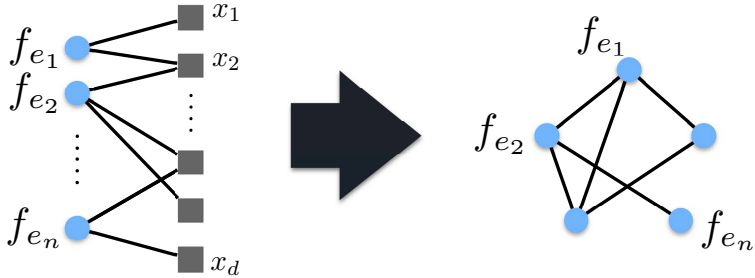}}
\caption{\small
The bipartite graph on the left has as its leftmost vertices the $n$ function terms and as its rightmost vertices the coordinates of $\vecx$.
A term $f_{e_i}$ is connected to a coordinate $x_j$ if hyperedge $e_i$ contains $j$ (\emph{i.e.}, if the $i$-th term is a function of that coordinate).
The graph on the right depicts a conflict graph between the function terms.
The vertices denote the function terms, and two terms are joined by an edge if they conflict on at least one coordinate in the bipartite graph.
}
\label{fig:bipartite_graph}  
\hrule
\end{figure}

\HW{} (Alg.~\ref{algo:HW}) is a method to parallelize SGM in the asynchronous setting~\cite{niu2011hogwild}.
It is deployed on multiple cores that have access to shared memory, where the optimization variable $\vecx$ and the data points that define the $f$ terms are stored.
During its execution
each core samples uniformly at random a hyperedge $s$ from $\mathcal{E}$. It reads the coordinates $v\in s$ of the shared vector $\vecx$, evaluates $\nabla f_s$ at the point read, and finally adds $- \gamma \nabla f_s$ to the shared variable. 

\begin{center}
%\begin{minipage}{0.70\columnwidth}
\begin{algorithm}[h]
   \caption{\HW{}}
\begin{algorithmic}[1]
\PWHILE{ number of sampled hyperedges $\le T$}
\STATE sample a random hyperedge $s$
\STATE $[\hat\vecx]_s$ = an inconsistent read of the shared variable $[\vecx]_s$ 
\STATE $[{\bf u}]_s = -\gamma\cdot \vecg([\hat\vecx]_s, s) $ 
\FOR{$v\in s$}
\STATE $[\vecx]_v = [\vecx]_v + [{\bf u}]_v$ \hfill{\color{gray}// atomic write}
\ENDFOR
\ENDPWHILE
\end{algorithmic}
   \label{algo:HW}
 \end{algorithm}
%\end{minipage}
\end{center}

During the execution of \HW{} cores do not synchronize or follow an order between reads or writes.
Moreover, they access ({\it i.e.}, read or write) a set of coordinates in $\vecx$ without the use of any locking mechanisms that would ensure a conflict-free execution.
This implies that the reads/writes of distinct cores can intertwine in arbitrary ways, \emph{e.g.}, while a core updates a subset of variables, before completing its task, other cores can read/write the same subset of variables.

{\color{black}
In \cite{niu2011hogwild}, the authors analyzed a variant of \HW{} in which several simplifying
assumptions were made.  Specifically, in \cite{niu2011hogwild}
{\it 1)}  only a single coordinate per sampled hyperedge is updated (\emph{i.e.}, the for loop in \HW{} is replaced with a single coordinate update);
{\it 2)} the authors assumed \emph{consistent reads}, {\it i.e.}, it was assumed that  while a core is reading the shared variable, no writes from other cores occur;
{\it 3)} the authors make an implicit assumption on the uniformity of the processing times of cores (explained in the following), that does not generically hold in practice.
These simplifications alleviate some of the challenges in analyzing \HW{} and allowed the authors to provide a convergence result.  As we show in the current paper, however, these simplifications are not necessary to obtain a convergence analysis.  Our perturbed iterates framework can be used in an elementary way to analyze the original version of \HW{}, yielding improved bounds compared to earlier analyses.
}

\subsection{Ordering the samples}

{\color{black}
A subtle but important point in the analysis of \HW{} is the need to define an order for the sampled hyperedges.  
A key point of difference of our work is that \emph{we order the samples based on the order in which they were sampled}, not the order in which cores complete the processing of the samples. 

\begin{definition}
We denote by $s_i$ the $i$-th sampled hyperedge in a run of Alg.~\ref{algo:HW}.
\end{definition}
That is, $s_i$ denotes the sample obtained when line $2$ in Alg.~\ref{algo:HW} is executed for the $i$-th time.  This is different from the original work of \cite{niu2011hogwild}, in which the samples were ordered according to the completion time of each thread.  
The issue with such an ordering is that the distribution of the samples, conditioned on the ordering, is not always uniform; for example, hyperedges of small cardinality are more likely to be ``early" samples.  
A uniform distribution is needed for the theoretical analysis of stochastic gradient methods, a point that is disregarded in \cite{niu2011hogwild}.  Our ordering according to sampling time resolves this issue by guaranteeing uniformity among samples in a trivial way.

\subsection{Defining read iterates and clarifying independence assumptions}
Since the shared memory variable can change inconsistently during reads and writes, we also have to be careful about the notion of {\it iterates} in \HW{}.
\begin{definition}
\label{def:iterates}
We denote by $\overline \vecx_i$ the contents of the shared memory before the $i$-th execution of line $2$.
%\footnote{line $2$ in Alg.~\ref{algo:HW} is a function whose execution takes some amount of time. For each coordinate $v$, we allow $[\overline \vecx_i]_v$ to be any value stored in the shared memory at position $v$ during the execution of line $2$. If there are multiple such values, ties are broken arbitrarily.}
%\end{definition}
%
%\begin{definition}
Moreover, we denote by $\hat{\vecx}_i\in \RR^d$ the vector, that in coordinates  $v\in s_i$ contains exactly what the core that sampled $s_i$ read.
We then define $[\hat\vecx_i]_v = [\overline \vecx_i]_{v}$ for all  $v\not\in s_i$.
Note that we do not assume {\it consistent reads}, \emph{i.e.}, the contents of the shared memory can potentially change while a core is reading. 
\end{definition}

%\subsection{Defining the read iterates}
At this point we would like to briefly discuss an {\it independence} assumption held by all prior work.
In the following paragraph, we explain why this assumption is not always true in practice.
In Appendix~\ref{sec:lift_independence}, we show how to lift such independence assumption, but for ease of exposition we do adopt it in our main text.
%We show how this assuThe following assumption 

\begin{asm} 
%For the analysis \emph{only one} of the following two conditions is needed. 
%\begin{romannum}
%\item 
The vector $\hat{\vecx}_i$ is independent of the sampled hyperedge $s_i$.
%\item The terms $f_{e_i}(\vecx)$ are smooth with $L$-Lipschitz gradients. 
%\end{romannum}
%Throughout this section we assume $(i)$ is true. The analysis when $(ii)$ holds is deferred to the Appendix. 
\label{asm:ind}
\end{asm}

The above independence assumption is important when establishing the convergence rate of the algorithm, and has been held explicitly or implicitly in prior work \cite{niu2011hogwild,duchi2013estimation,liu2014asynchronous1,liu2015asynchronous}. 
Specifically, when proving convergence rates for these algorithms we need to show via iterated expectations that
$\EE\left<\hat{\bf x}_i-{\bf x}^*, g(\hat{\bf x}_i, s_i)\right> = \left<\hat{\bf x}_i-{\bf x}^*, \nabla(\hat{\bf x}_i)\right>$, which follows from the independence of $\hat{\bf x}_i$ and  $s_i$. 
However,  observe that although $\overline \vecx_i$ is independent of $s_i$  by construction, this is not the case for the vector $\hat{\vecx}_i$ read by the core that sampled $s_i$.
For example, consider the scenario of two consecutively sampled  hyperedges in Alg.~\ref{algo:HW} that overlap on a subset of coordinates.
Then, say one core is reading the coordinates of the shared variables indexed by its hyperedge, while the second core is updating a subset of these coordinates. 
In this case, the values read by the first core depend on the support of the sampled hyperedge.
%This dependence poses technical issues when establishing convergence rates.
%in our computations through the perturbed iterates.
%Let us further denote by 
%$\hat{\vecx}_i\in \RR^d$ the contents read by the core that sampled $s_i$. 
%Since the core reads only the coordinates indexed by $s_i$, we define $[\hat\vecx_i]_v = [\overline \vecx_i]_{v}$ for all  $v\not\in s_i$. 
%The sample $s_i$ influences $\hat\vecx_i$, as it determines the coordinates only through determining what coordinates will the corresponding processor read. The actual values read are determined by other samples. Therefore, in a sense, the dependence between $s_i$ and $\hat\vecx_i$ is weak. 

One way to rigorously enforce the independence of $\hat\vecx_i$ and $s_i$ is to require the processors to read the {\it entire} shared variable $\vecx$ before sampling a new hyperedge. 
%However, this is unreasonable in practice, as 
%We can rigorously enforce Assumption~\ref{asm:ind} by requiring each core to read the {\it entire} shared memory vector before sampling a hyperedge (\emph{i.e.}, by switching the order of lines 2 and 3 in Alg.~\ref{algo:HW}).
However, this might not be reasonable in practice, as the dimension of $\vecx$ tends to be considerably larger than the sparsity of the hyperedges.
As we mentioned earlier, in Appendix~\ref{sec:lift_independence}, we show how to overcome the issue of dependence and thereby remove Assumption~\ref{asm:ind}; 
however, this results in a slightly more cumbersome analysis.
To ease readability, in our main text we do adopt Assumption~\ref{asm:ind}.
%For the rest

%Such a modification is undesirable in practice, but it does not change the asynchronous nature of the algorithm and previous proof techniques would have to same drawbacks in this setting. These points suggest that it is reasonable to assume that $\vecx_i$ and $s_i$ are independent. 

}
%For the analysis \emph{only one} of the following two conditions is needed. 

\subsection{The perturbed iterates view of asynchrony}
In this work, we assume that all writes are atomic, in the sense that they will be successfully recorded in the shared memory at some point.
Atomicity is a reasonable assumption in practice, as it can be strictly enforced through compare-and-swap operations~\cite{niu2011hogwild}.

\begin{asm}
\label{asm:writes}
Every write in line 6 of Alg.~\ref{algo:HW} will complete successfully.
\end{asm}

This assumption implies that all writes will appear in the shared memory by the end of the execution, in the form of coordinate-wise updates. 
Due to commutativity the order in which these updates are recorded in the shared memory is irrelevant.
Hence, after processing a total of $T$ hyperedges the shared memory contains:
\begin{equation}
\underbrace{\overbrace{\vecx_0 - \gamma \vecg(\hat\vecx_0, s_0) }^{\vecx_1} - \ldots - \gamma \vecg(\hat\vecx_{T - 1},s_{T-1})}_{\vecx_{T}},
\label{eq:RAMcontents}
\end{equation}
where $\vecx_0$ is the initial guess and {\color{black}${\bf x}_i$ is defined as the vector that contains {\it all} gradient updates up to sample $s_{i-1}$.}

\begin{remark}
Throughout this section we denote $\vecg(\vecx, s_j) = \nabla f_{s_j}(\vecx)$, which we assume to be bounded: $\|\vecg(\vecx, s) \| \leq M$.
Such a uniform bound on the norm of the stochastic gradient is true when operating on a bounded $\ell_\infty$ ball; this can in turn be enforced by a simple, coordinate-wise thresholding operator.
We can refine our analysis by avoiding the uniform bound on $\|\vecg(\vecx,s)\|$, through a simple application of the co-coercivity lemma as it was used in \cite{needell2014stochastic}; 
in this case, our derivations would only require a uniform bound on $\|\vecg(\vecx^*,s)\|$. 
Our subsequent derivations can be adapted to the above, however to keep our derivations elementary we will use the uniform bound on $\|\vecg(\vecx, s) \|$.
\end{remark}

\begin{remark}
Observe that although a core is only reading the subset of variables that are indexed by its sampled hyperedge, in \eqref{eq:RAMcontents} we use the entire vector $\hat {\bf x}$ as the input to the sampled gradient.
%However, observe that $\vecg(\hat\vecx_{k},s_{k})=\vecg({\bf D}_{s_k}\hat\vecx_{k},s_{k})$, where ${\bf D}_{s_k}$ is a diagonal matrix that has ones on the diagonal elements supported on the sampled hyperedge, and zero everywhere else.
%In other words, the value of 
We can do this since $\vecg(\hat\vecx_{k},s_{k})$ is independent of the coordinates of $\hat\vecx_{k}$ outside the support of hyperedge $s_k$.
\end{remark}

Using the above definitions, we define the perturbed iterates of \HW{} as 
\begin{equation}
\vecx_{i+1} = \vecx_i - \gamma \vecg(\hat\vecx_i, s_i),
\label{eq:hogwild-updates}
\end{equation}
for $i = 0,1,..., T-1$, where $s_i$ is the $i$-th uniformly sampled hyperedge.
Observe that all but the first and last of these iterates are ``fake": 
there might not be an actual time when they exist in the shared memory during the execution.
However, $\vecx_{0}$ is what is stored in memory before the execution starts, and $\vecx_{T}$ is exactly what is stored in shared memory at the end of the execution. 

We observe that the iterates in \eqref{eq:hogwild-updates} place \HW{} in the perturbed gradient framework introduced in \S \ref{section: noisy input stochastic gradients}:
$$
a_{j+1} 
\leq (1-\gamma m) a_j  + \gamma^2\underbrace{\EE\|g(\hat\vecx_j,s_j)\|^2}_{R_0^j} + 2\gamma m \underbrace{\Exp\|\hat\vecx_j - \vecx_j\|^2}_{R_1^j} +  2\gamma \underbrace{\EE \langle \hat\vecx_j - \vecx_j, \vecg(\hat\vecx_j,s_j)\rangle}_{R_2^j}.$$
We are only left to bound the three error terms $R_0^j$, $R_1^j$, and $R_2^j$.
{\color{black}
Before we proceed, we note that for the technical soundness of our theorems, we have to also define a random variable that captures the {\it system randomness}.
 %\begin{definition}
 In particular, let $\xi$ denote a random variable that encodes the randomness of the system ({\it i.e.}, random delays between reads and writes, gradient computation time, etc). 
%\end{definition}
%This assumption has been made implicitly in all previous work.
Although we do not  explicitly use $\xi$, its distribution is required implicitly to compute the expectations for the convergence analysis.
This is because the random samples $s_0, s_1, \ldots, s_{T-1}$ do not fully determine the output of Alg.~\ref{algo:HW}.
However, $s_0, \ldots,s_{T-1}$ along with $\xi$ completely determine the time of all reads and writes.
% it is also further required to rigorously assume a property on the delay of processing among cores.
We continue with our final assumption needed by our analysis, that is also needed by prior art.}

\begin{asm} [Bounded overlaps]
Two hyperedges $s_i$ and $s_j$ overlap in time if they are processed concurrently at some point during the execution of \HW{}. 
The time during which a hyperedge $s_i$ is being processed begins when the sampling function is called and ends after the last coordinate of $\vecg(\hat\vecx_{i},s_i)$ is written to the shared memory.
We assume that there exists a number $\tau \geq 0$, such that the maximum number of sampled hyperedges that can overlap in time with a particular sampled hyperedge cannot be more than $\tau$. 
\label{asm:tau}
\end{asm}

The usefulness of the above assumption is that it essentially abstracts away all system details relative to delays, processing overlaps, and number of cores into a single parameter.
Intuitively, $\tau$ can be perceived as a proxy for the number of cores, {\it i.e.}, we would expect that no more than roughly $O(\#\text{cores})$ sampled hyperedges overlap in time with a single hyperedge, assuming that the processing times across the samples are approximately similar.
%To the best of our knowledge, all previous theoretical results on asynchronous optimization assume a bounded delay $\tau$.
Observe that if $\tau$ is small, then we expect the distance between $\vecx_j$ and the noisy iterate $\hat\vecx_j$ to be small.
In our perturbed iterate framework, if we set $\tau=0$, then we obtain the classical iterative formula of serial SGM.

To quantify the distance between $\hat\vecx_j$ ({\it i.e.}, the iterate read by the core that sampled $s_j$) and $\vecx_j$ ({\it i.e.}, the ``fake" iterate used to establish convergence rates), 
we observe that any difference between them is caused solely by hyperedges that overlap with $s_j$ in time. To see this, let $s_i$ be an ``earlier" sample, \emph{i.e.}, $i< j$, that does not overlap with $s_j$ in time.
This implies that the processing of $s_i$ finishes before $s_j$ starts being processed.
Hence, the {\it full} contribution of $\gamma \vecg(\hat\vecx_i,s_i)$ will be recorded in both $\hat\vecx_j$ and $\vecx_j$ (for the latter this holds by definition). Similarly, if $i>j$ and $s_i$ does not overlap with $s_j$ in time, then neither $\hat\vecx_j$ nor $\vecx_j$ (for the latter, again by definition) contain {\it any} of the coordinate updates involved in the gradient update $\gamma\vecg(\hat\vecx_i,s_i)$. Assumption~\ref{asm:tau} ensures that if $i < j-\tau$ or $i> j+\tau$, the sample $s_i$ {\it does not} overlap in time with $s_j$. 

By the above discussion, and due to Assumption~\ref{asm:tau}, there exist diagonal matrices $\matS_{i}^j$ with diagonal entries in $\{-1,0,1\}$ such that 
\begin{equation}
\hat\vecx_j - \vecx_j = \sum_{\substack{i = j - \tau,\;i\neq j}}^{j + \tau}\gamma\matS_i^j\vecg(\hat\vecx_i , s_i).  
\end{equation}
These diagonal matrices account for any possible pattern of (potentially) partial updates that can occur while hyperedge $s_j$ is being processed.
We would like to note that the above notation bears resemblance to the coordinate-update mismatch formulation of asynchronous coordinate-based algorithms, as in \cite{liu2015asynchronous,lian2015asynchronous, peng2015arock}. 

We now turn to the convergence proof, emphasizing its elementary nature within the perturbed iterate analysis framework.  We begin by bounding the error terms $R_1^j$ and $R_2^j$ ($R_0^j$ is already assumed to be at most $M^2$).

\begin{lemma}
\label{lemma: bound extra noise}
\HW{} satisfies the recursion in \eqref{eq:aj_main} with  
{\small
\begin{align}
R_1^j = \EE \|\hat\vecx_j-\vecx_j\|^2 \le \gamma^2 M^2 \left(2\tau + 8\tau^2 \frac{\cdeg}{n}\right)\nonumber \text{ and }
R_2^j = \EE \langle \hat\vecx_j-\vecx_j, \vecg(\hat\vecx_j, s_j)\rangle \leq 4\gamma M^2\tau \frac{\cdeg}{n},\nonumber
\end{align}
}where $\cdeg$ is the average degree of the conflict graph between the hyperedges. 
\end{lemma}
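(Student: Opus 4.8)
The plan is to bound $R_1^j$ and $R_2^j$ using the decomposition
\[
\hat\vecx_j - \vecx_j = \sum_{\substack{i=j-\tau,\ i\neq j}}^{j+\tau} \gamma \matS_i^j \vecg(\hat\vecx_i, s_i),
\]
together with the uniform gradient bound $\|\vecg(\cdot,s)\|\le M$ and the key sparsity fact that two gradient terms $\vecg(\hat\vecx_i,s_i)$ and $\vecg(\hat\vecx_k,s_k)$ have overlapping supports only when the hyperedges $s_i$ and $s_k$ conflict, an event whose probability is controlled by $\cdeg/n$ since the samples are uniform and independent (Assumption~\ref{asm:ind} and the sampling-order convention).

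For $R_1^j$, I would first write $\|\hat\vecx_j-\vecx_j\|^2 = \gamma^2\sum_{i,k} \langle \matS_i^j\vecg(\hat\vecx_i,s_i), \matS_k^j\vecg(\hat\vecx_k,s_k)\rangle$, splitting into diagonal terms ($i=k$) and cross terms ($i\neq k$). The diagonal terms contribute at most $2\tau \cdot \gamma^2 M^2$ since each $\matS_i^j$ is a $\{-1,0,1\}$ diagonal matrix (so it does not increase norm) and there are at most $2\tau$ indices in the range. For the cross terms, I would use that $\langle \matS_i^j\vecg(\hat\vecx_i,s_i), \matS_k^j\vecg(\hat\vecx_k,s_k)\rangle$ is nonzero only if $s_i$ and $s_k$ share a coordinate, i.e.\ conflict; bounding the inner product by $M^2$ (Cauchy--Schwarz plus the norm bound) and taking expectation, each of the at most $(2\tau)^2$ ordered cross pairs contributes at most $M^2 \cdot \Pr[s_i \text{ conflicts } s_k]$. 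The uniform sampling gives $\Pr[s_i \text{ conflicts } s_k]\le \cdeg/n$ (for $i\neq k$, the conditional expected number of conflicting edges is $\cdeg$ out of $n$ choices), yielding the $8\tau^2 \cdeg/n$ term after accounting for constants from the index range. Summing gives $R_1^j \le \gamma^2 M^2(2\tau + 8\tau^2\cdeg/n)$.

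For $R_2^j = \EE\langle \hat\vecx_j-\vecx_j, \vecg(\hat\vecx_j,s_j)\rangle$, I would substitute the decomposition to get $\gamma\sum_{i}\EE\langle \matS_i^j\vecg(\hat\vecx_i,s_i), \vecg(\hat\vecx_j,s_j)\rangle$ over the $\le 2\tau$ indices $i\in[j-\tau,j+\tau]\setminus\{j\}$. Again the inner product vanishes unless $s_i$ conflicts with $s_j$, and when it does not vanish it is at most $M^2$ in absolute value. Since the sampling is uniform and $s_j$ is (conditionally on the other randomness) a fresh uniform sample independent of $s_i$, the probability that $s_i$ conflicts $s_j$ is at most $\cdeg/n$, so each term contributes at most $\gamma M^2 \cdeg/n$ and the sum over $2\tau$ terms gives $R_2^j \le 2\gamma M^2 \tau \cdeg/n$; a factor-2 slack in the range bookkeeping yields the stated $4\gamma M^2\tau\cdeg/n$.

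The main obstacle is the careful handling of the dependencies and the conflict-probability estimate: one must argue rigorously that $\EE[\mathbf{1}\{s_i \text{ conflicts } s_j\}]\le \cdeg/n$ using that the $s_k$ are i.i.d.\ uniform on $\mathcal{E}$, and that this bound survives conditioning on the system randomness $\xi$ and the matrices $\matS_i^j$ (which themselves depend on $\xi$ and the samples). Under Assumption~\ref{asm:ind} this is clean for the $R_2$ term since $\hat\vecx_j$ is independent of $s_j$; for the cross terms in $R_1$ one conditions on all randomness except the pair $(s_i,s_k)$ and uses their joint uniform independence. I would also need to double-check that the $\{-1,0,1\}$ structure of $\matS_i^j$ indeed means $\|\matS_i^j v\| \le \|v\|$ and that the conflict structure is exactly what makes the cross inner products vanish off the conflict graph — both follow directly from Definition~\ref{def:iterates} and the decomposition of $f$ into hyperedge terms.
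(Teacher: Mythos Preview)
Your proposal is correct and follows essentially the same route as the paper: expand via the decomposition $\hat\vecx_j-\vecx_j=\gamma\sum_i\matS_i^j\vecg(\hat\vecx_i,s_i)$, bound diagonal terms by the uniform $M$, and control each cross term by $M^2\cdot\indi(s_i\cap s_k\neq\emptyset)$ before taking expectation. The only point to tighten is the conflict probability: since sampling is with replacement and every nonempty hyperedge intersects itself, one has $\Pr[s_i\cap s_k\neq\emptyset]=(\cdeg+1)/n\le 2\,\cdeg/n$ (using the paper's standing assumption $\cdeg\ge1$), and this factor of $2$ is precisely what produces the constants $8\tau^2\cdeg/n$ and $4\gamma M^2\tau\cdeg/n$---so your ``factor-2 slack in the bookkeeping'' is not slack at all but the missing self-intersection term.
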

\begin{proof}
The norm of the mismatch can be bounded in the following way:
{\small
\begin{align*}
R_1^j &= \gamma^2\EE \biggl\|\sum_{\substack{i = j - \tau\\ i\neq j}}^{j + \tau} \matS_i^j \vecg(\hat\vecx_i, s_i)\biggr\|^2 \hspace{-0.1cm} \leq  \gamma^2 \sum_{i}\EE \|\matS_i^j\vecg(\hat\vecx_i, s_i)\|^2 + \gamma^2\sum_{\substack{i,k\\i\neq k}} \EE \left|\langle \matS_i^j\vecg(\hat\vecx_i, s_i), \matS_k^j\vecg(\hat\vecx_k, s_k)\rangle\right| \\
&\leq  \gamma^2 \sum_{i}\EE \|\vecg(\hat\vecx_i, s_i)\|^2 + \gamma^2\sum_{\substack{i,k\\i\neq k}} \EE \left\{\|\vecg(\hat\vecx_i, s_i)\|\|\vecg(\hat\vecx_k, s_k)\|\indi(s_i\cap s_k \neq \emptyset)\right\},
\end{align*}
}since $\matS_i^j$ are diagonal sign matrices and since the steps $\vecg(\hat\vecx_i,s_i)$ are supported on the samples $s_i$. We use the upper bound $\|\vecg(\hat\vecx_i, s_i)\| \leq M$ to obtain
$$
R_1^j \leq 2\tau\cdot \gamma^2 M^2 + \gamma^2 M^2(2\tau)^2\Pr (s_i \cap s_j \neq \emptyset) \leq \gamma^2 M^2 \cdot \left(2\tau + 8\tau^2 \frac{\cdeg}{n}\right). 
$$
The last step  follows because two sampled hyperedges (sampled with replacement) intersect with probability  at most $2\frac{\cdeg}{n}$. 
We can bound $R_2^j$ in a similar way:

\begin{align*}
 R_2^j &= \gamma \sum_{\substack{i = j - \tau\\ i\neq j}}^{j + \tau} \EE\langle{\bf S}_i^j\vecg(\hat\vecx_i,s_i), \vecg(\hat\vecx_j,s_j)\rangle
 \le \gamma M^2 \sum_{\substack{i = j - \tau\\ i\neq j}}^{j + \tau} \EE \left\{\indi(s_i \cap s_j \neq \emptyset)\right\}
 \leq 4 \gamma  M^2 \tau \frac{\cdeg}{n}.
\end{align*}

\end{proof}

Plugging  the bounds of Lemma~\ref{lemma: bound extra noise} in our recursive formula, we see that \HW{} satisfies the recursion
\begin{align}
a_{j+1} &= \left( 1- \gamma  m\right)a_j + \gamma^2 M^2\biggl(1 + \underbrace{8\tau \frac{\cdeg}{n}+ 4\gamma m \tau + 16\gamma m \tau^2 \frac{\cdeg}{n}}_{\delta}\biggr).
\label{eq:HWrecursion}
\end{align} 
On the other hand, serial SGM satisfies the recursion 
$a_{j+1} \leq \left( 1- \gamma  m\right)a_j + \gamma^2 M^2.$
If the step size is set to $\gamma = \frac{\epsilon m}{2M^2}$, it attains target accuracy $\epsilon$ in
$T \geq 2 M^2/(\epsilon m^2) \log\left(\frac{2a_0}{\epsilon}\right)$ iterations.
{\color{black}Hence, when the term $\delta$  of \eqref{eq:HWrecursion}
%$$1 +8\tau \frac{\cdeg}{n}+ 4\gamma m \tau + 16\gamma m \tau^2 \frac{\cdeg}{n}$$
 is order-wise constant, 
%If the right most contents of parenthesis in 
%$\tau$---a proxy for the number of cores---is $\mathcal{O}(\min\{\sfrac{n}{\cdeg}, M^2/(\epsilon m^2)\})$, then 
\HW{} satisfies the same recursion (up to constants) as serial SGM.
%recursion
%$$
%a_{j+1} \leq (1 - \gamma m)a_j + \ct \gamma^2 M^2.
%$$
This directly implies the main result of this section.}

\begin{theorem}
\label{thm:HW}
If the number of samples that overlap in time with a single sample during the execution of \HW{} is bounded as 
$$\tau =\mathcal{O}\left(\min\left\{\frac{n}{\cdeg}, \frac{M^2}{\epsilon m^2} \right\}\right),$$ 
\HW{}, with step size $\gamma = \frac{\epsilon m}{2M^2}$, reaches an accuracy of $\EE\|\vecx_k-\vecx^*\|^2\le \epsilon$ after
$$
T \geq \ct \frac{M^2 \log\left(\frac{a_0}{\epsilon}\right)}{\epsilon m^2}
$$
iterations.
\end{theorem}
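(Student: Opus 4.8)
The strategy is to start from the recursion for \HW{} established in \eqref{eq:HWrecursion}, namely
$$
a_{j+1} \leq (1-\gamma m) a_j + \gamma^2 M^2 (1 + \delta), \qquad \delta = 8\tau\frac{\cdeg}{n} + 4\gamma m\tau + 16\gamma m\tau^2 \frac{\cdeg}{n},
$$
and show that under the stated bound on $\tau$, the quantity $\delta$ is bounded by an absolute constant, so that the recursion matches that of serial SGM up to constant factors. Once that is done, the convergence rate follows by the standard argument for geometrically-contracting recursions with a constant additive noise term.

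First I would verify that $\delta = \ct$ under the hypothesis $\tau = \mathcal{O}(\min\{n/\cdeg,\; M^2/(\epsilon m^2)\})$ together with $\gamma = \epsilon m/(2M^2)$. Term by term: $8\tau\cdeg/n = \ct$ because $\tau = \mathcal{O}(n/\cdeg)$; $4\gamma m\tau = 4 \cdot \frac{\epsilon m}{2M^2}\cdot m\cdot \tau = \frac{2\epsilon m^2}{M^2}\tau = \ct$ because $\tau = \mathcal{O}(M^2/(\epsilon m^2))$; and the last term $16\gamma m\tau^2\cdeg/n = (4\gamma m\tau)\cdot(4\tau\cdeg/n) = \ct\cdot\ct = \ct$ by combining the previous two bounds. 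Hence $1+\delta \le C$ for some absolute constant $C$, and \HW{} satisfies $a_{j+1} \le (1-\gamma m)a_j + C\gamma^2 M^2$.

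Next I would unroll this linear recursion. Iterating gives
$$
a_k \le (1-\gamma m)^k a_0 + C\gamma^2 M^2 \sum_{\ell=0}^{k-1}(1-\gamma m)^\ell \le (1-\gamma m)^k a_0 + \frac{C\gamma M^2}{m},
$$
using the geometric series bound $\sum_\ell (1-\gamma m)^\ell \le 1/(\gamma m)$ (valid since $\gamma m < 1$, which holds because $\gamma = \epsilon m/(2M^2)$ and $\epsilon$ is small relative to $M^2/m^2$). Substituting $\gamma = \epsilon m/(2M^2)$ makes the second term $C\epsilon/2$. To make the first term at most $\epsilon/2$ as well, we need $(1-\gamma m)^k a_0 \le \epsilon/2$, i.e. $k \ge \frac{\log(2a_0/\epsilon)}{-\log(1-\gamma m)} $; using $-\log(1-\gamma m) \ge \gamma m = \epsilon m^2/(2M^2)$ gives the sufficient condition $k \ge \frac{2M^2}{\epsilon m^2}\log(2a_0/\epsilon)$, which is exactly $T \ge \ct \frac{M^2}{\epsilon m^2}\log(a_0/\epsilon)$ after absorbing constants. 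This yields $a_k = \EE\|\vecx_k - \vecx^*\|^2 \le \epsilon$, as claimed.

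The computation here is entirely routine; there is no real obstacle, since the hard work — identifying the correct notion of iterates, handling inconsistent reads, and bounding $R_1^j$ and $R_2^j$ — was already carried out in Lemma~\ref{lemma: bound extra noise} and the derivation of \eqref{eq:HWrecursion}. The only point requiring a small amount of care is checking that the three pieces of $\delta$ are each $\ct$ under the two-sided minimum constraint on $\tau$, and in particular noticing that the cross term $16\gamma m\tau^2\cdeg/n$ factors as a product of the other two (already-bounded) quantities rather than needing a separate hypothesis. I would state this factorization explicitly to keep the argument transparent.
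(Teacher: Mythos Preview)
Your proposal is correct and follows essentially the same route as the paper: the paper also argues that the hypothesis on $\tau$ forces $\delta$ in \eqref{eq:HWrecursion} to be an absolute constant, so that \HW{} obeys the same recursion as serial SGM up to constants, and then invokes the standard unrolling argument with $\gamma = \epsilon m/(2M^2)$. Your version is simply more explicit about the unrolling and about the factorization $16\gamma m\tau^2\cdeg/n = (4\gamma m\tau)(4\tau\cdeg/n)$, which the paper leaves implicit; the only cosmetic point is that the additive term after unrolling is $C\epsilon/2$ rather than exactly $\epsilon/2$, so one must take the implicit constant in the $\mathcal{O}$-bound on $\tau$ small enough that $C\le 2$, but this is precisely what the $\mathcal{O}$-notation permits.
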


{\color{black}Since the iteration bound in the theorem is (up to a constant) the same as that of serial SGM, our result implies a linear speedup.}
We would like to note that an improved rate of $O(1/\epsilon)$ can be obtained by appropriately diminishing stepsizes per epoch (see, \emph{e.g.}, \cite{niu2011hogwild,hazan2014beyond}).  
Furthermore, observe that although the $\frac{M^2}{\epsilon m^2}$ bound on $\tau$ might seem restrictive, it is---up to a logarithmic factor---proportional to the total number of iterations required by \HW{} (or even serial SGM) to reach $\epsilon$ accuracy.
 Assuming that the average degree of the conflict graph is constant, and that we perform a constant number of passes over the data, \emph{i.e.}, $ T = c\cdot n$, then
 $\tau$ can be as large as $\tilde{\mathcal{O}}(n)$,  \emph{i.e.}, nearly linear in the number of function terms.\footnote{$\tilde{\mathcal{O}}$ hides logarithmic terms.}

\subsection{Comparison with the original \HW{} analysis of \cite{niu2011hogwild}}

Let us summarize the key points of improvement compared to the original \HW{} analysis:
\begin{itemize}
\item Our analysis is elementary and compact, and follows simply by  bounding the $R_0^j,R_1^j$, and $R_2^j$ terms, after introducing the perturbed gradient framework of \S~\ref{section: noisy input stochastic gradients}.
\item We do not assume consistent reads: while a core is reading from the shared memory other cores are allowed to read, or write.
\item In \cite{niu2011hogwild} the authors analyze a simplified version of \HW{} where for each sampled hyperedge only a randomly selected coordinate is updated. Here we analyze the ``full-update" version of \HW{}.
\item We order the samples by the order in which they were sampled, not by completion time. This allows to rigorously prove our convergence bounds, without assuming anything on the distribution of the processing time of each hyperedge.
This is unlike \cite{niu2011hogwild}, where there is an implicit assumption of uniformity with respect to processing times.
\item The previous work of \cite{niu2011hogwild} establishes a nearly-linear speedup for \HW{} if $\tau$ is bounded as 
$\tau = \mathcal{O}\left(\sqrt[4]{\sfrac{n}{\Delta_\text{R} \Delta_\text{L}^2}}\right)$,
where $\Delta_{\text{R}}$ is the maximum right degree of the term-variables bipartite graph, shown in Fig~\ref{fig:bipartite_graph}, and $\Delta_{\text{L}}$ is the maximum left degree of the same graph. Observe that $\Delta_{\text{R}}\cdot \Delta_{\text{L}}^2 \ge \Delta_{\text{L}}\cdot \Delta_{\text{C}}$, where $\Delta_{\text{C}}$ is the maximum degree of the conflict graph.
Here, we obtain a linear speedup for up to 
$\tau =\mathcal{O}\left(\min\left\{\sfrac{n}{\cdeg}, \sfrac{M^2}{\epsilon m^2} \right\}\right)$, 
where $\cdeg$ is only the average degree of the conflict graph in Fig~\ref{fig:bipartite_graph}. 
Our bound on the delays can be orders of magnitude better than that of \cite{niu2011hogwild}.
\end{itemize}

%%% Local Variables:
%%% mode: latex
%%% TeX-master: "perturbed_arxiv"
%%% End:

\section{Asynchronous Stochastic Coordinate Descent}

In this section, we use the perturbed gradient framework to analyze the convergence 
%#SHORT
of asynchronous parallel stochastic coordinate descent (ASCD). This algorithm has been previously analyzed in \cite{liu2014asynchronous1,liu2015asynchronous}.  We show that the algorithm admits an elementary treatment in our perturbed iterate framework, under the same assumptions made for \HW{}.

\begin{center}
%\begin{minipage}{0.65\columnwidth}
\begin{algorithm}[H]
   \caption{ASCD}        
\begin{algorithmic}[1]
\PWHILE{iterations $\le T$}
\STATE $\hat\vecx = $ an inconsistent read of the shared variable $\vecx$
\STATE Sample a coordinate $s$
\STATE $u_s= -\gamma\cdot d[\nabla f(\vecx)]_s$ 
\STATE $[\vecx]_s = [\vecx]_s + u_s$ \hfill{\color{gray}// atomic write}
\ENDPWHILE
\end{algorithmic}
   \label{algo:ASCD}
 \end{algorithm}
%\end{minipage}
\end{center}

ASCD, shown in Alg.~\ref{algo:ASCD}, is a linearly convergent algorithm for minimizing strongly convex functions $f$.
At each iteration a core samples one of the coordinates, computes a full gradient update for that coordinate, and proceeds with updating a single element of the shared memory variable $\vecx$.
The challenge in analyzing ASCD, compared to \HW{}, is that, in order to show linear convergence, we need to show that the error due to the asynchrony between cores decays fast when the iterates arrive close to the optimal solution. %#SHORT
The perturbed iterate framework can handle this type of noise analysis in a straightforward manner, using simple recursive bounds.

We define $\hat\vecx_i$ as in the previous section, but now the samples $s_i$ are coordinates sampled uniformly at random from $\{1,2,\ldots, d\}$. After $T$ samples have been processed completely, the following vector is contained in  shared memory:
\begin{equation}
\underbrace{\overbrace{\vecx_0 - \gamma d [\nabla f(\hat\vecx_0)]_{s_0}{\bf e}_{s_0} }^{\vecx_1} - \ldots - \gamma  d[\nabla f(\hat\vecx_{T-1})]_{s_{T-1}}{\bf e}_{s_{T-1}}}_{\vecx_{T}},
\nonumber
\end{equation}
where $\vecx_0$ is the initial guess, ${\bf e}_{s_j}$ is the standard basis vector with a one at position $s_j$, $[\nabla f(\vecx)]_{s_j}$ denotes the $s_j$-th coordinate of the gradient of $f$ computed at $\vecx$. Similar to \HW{} in the previous section, ASCD satisfies the following iterative formula
$$
\vecx_{j+1} = \vecx_j - \gamma \cdot  d\cdot [\nabla f(\hat\vecx_j)]_{s_j}{\bf e}_{s_j} = \vecx_j - \gamma\cdot \vecg(\hat\vecx_j, s_j).
$$
Notice that $\EE_{s_j} \vecg(\hat\vecx_j, s_j) = \nabla f(\hat\vecx_j)$, and thus, similarly to \HW{}, ASCD's  iterates $a_j=\EE\|\vecx_j-\vecx^*\|^2$ satisfy the recursion of Eq.~\eqref{eq:aj_main}:
\begin{equation}
%\label{eq:ascd main recurion}
a_{j+1} 
\leq \left(1- \gamma m\right) a_j  + \gamma^2\underbrace{\EE\|g(\hat\vecx_j,s_j)\|^2}_{R_0^j} + 2\gamma m \underbrace{\Exp\|\hat\vecx_j - \vecx_j\|^2}_{R_1^j} +  2\gamma \underbrace{\EE \langle \hat\vecx_j - \vecx_j, \vecg(\hat\vecx_j,s_j)\rangle}_{R_2^j}.
\nonumber
\end{equation}	

Before stating the main result of this section, let us introduce some further notation.
Let us define the largest distance between the optimal vector, and the vector read by the cores during the execution of the algorithm:%#SHORT
$
\hat a_0 \coloneqq \max_{0\le k\le T}\EE\|\hat\vecx_k-\vecx^*\|^2,
$
a value which should be thought of as proportional to $a_0 = \EE\|\vecx_0-\vecx^*\|^2$.
Furthermore, by a simple application of the $L$-Lipschitz assumption on $f$, we have a uniform bound on the norm of each computed gradient
$
M^2\coloneqq\max_{0\le k\le T}\EE\|\nabla f(\hat\vecx_k)\|^2 \le L^2\hat a_0. 
%\label{eq:boundOnM}
$
Here we assume that the optimization takes place in an $\ell_\infty$ ball, so that $M < \infty$. This simply means that the iterates will never have infinitely large coordinate values. This assumption is made in previous work explicitly or implicitly, and in practice it can be implemented easily since the projection on an $\ell_\infty$ ball can be done component-wise. 
Finally, let us define the condition number of $f$ as 
$\kappa \coloneqq \sfrac{L}{m},$ 
where $L$ is the Lipschitz constant, and $m$ the strong convexity parameter.%#SHORT
%We are now ready to state the main result of this section.
\begin{theorem}
Let the maximum number of coordinates that can be concurrently processed while a core is processing a single coordinate be at most
$$\tau = \mathcal{O}\left(\min\left\{\frac{\kappa\sqrt{d}}{\log\left(\frac{\hat a_0}{\epsilon}\right)}, \sqrt[6]{d}\right\}\right).$$
Then, ASCD with step-size  $\gamma =  \frac{\ct}{dL\kappa}$ achieves an error $\EE\|\vecx_k-\vecx^*\|^2\le \epsilon$ after 
$$k \ge \ct \cdot d \kappa^2\log\left(\frac{a_0}{\epsilon}\right)$$
iterations.
\label{theo:ASCD}
\end{theorem}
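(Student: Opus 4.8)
The plan is to specialize the master recursion~\eqref{eq:aj_main} to the ASCD update $\vecg(\hat\vecx_j,s_j)=d\,[\nabla f(\hat\vecx_j)]_{s_j}{\bf e}_{s_j}$, with $s_j$ uniform on $\{1,\dots,d\}$, and then bound $R_0^j,R_1^j,R_2^j$ along the lines of Lemma~\ref{lemma: bound extra noise}, with one essential change: $R_0^j$ must be kept \emph{sharp} rather than replaced by a uniform constant, since this is what separates a linear rate from the $\mathcal{O}(1/\epsilon)$ rate of \HW{}. By independence of $\hat\vecx_j$ and $s_j$ and uniformity of $s_j$, $R_0^j=\EE\|\vecg(\hat\vecx_j,s_j)\|^2=d\,\EE\|\nabla f(\hat\vecx_j)\|^2$, and since $\nabla f(\vecx^*)=0$ and $f$ is $L$-smooth this is at most $dL^2\,\EE\|\hat\vecx_j-\vecx^*\|^2$.

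For $R_1^j$ and $R_2^j$ I would reuse the mismatch expansion $\hat\vecx_j-\vecx_j=\gamma\sum_{i\in W_j}\matS_i^j\vecg(\hat\vecx_i,s_i)$ with $W_j=\{j-\tau,\dots,j+\tau\}\setminus\{j\}$. Since $\vecg(\hat\vecx_i,s_i)$ is supported on coordinate $s_i$, a cross term survives only when $s_i=s_k$, which for independently drawn coordinates happens with probability $1/d$; this $1/d$ plays exactly the role that $\cdeg/n$ played for \HW{}. Bounding the cross terms by $\|\vecg(\hat\vecx_i,s_i)\|\,\|\vecg(\hat\vecx_k,s_k)\|\,\indi(s_i=s_k)$ and using Cauchy--Schwarz together with $\|\vecg(\hat\vecx_i,s_i)\|^2\le d^2\|\nabla f(\hat\vecx_i)\|^2$ and $L$-smoothness gives, up to absolute constants, $R_1^j\le\gamma^2L^2(d\tau+\tau^2)\max_{i\in W_j}\EE\|\hat\vecx_i-\vecx^*\|^2$ and $R_2^j\le\gamma L^2\tau\max_{i\in W_j}\EE\|\hat\vecx_i-\vecx^*\|^2$. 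The read iterates are then linked to the true ones via $\EE\|\hat\vecx_i-\vecx^*\|^2\le 2a_i+2R_1^i$, so that, after absorbing the $R_1^i$ contribution (legitimate once $\gamma^2L^2(d\tau+\tau^2)$ is small enough), each $\EE\|\hat\vecx_i-\vecx^*\|^2$ is controlled, up to a constant, by the $a_l$'s over a window of radius $\mathcal{O}(\tau)$ around $i$; this is also the place where the a~priori quantity $\hat a_0$, hence the $\log(\hat a_0/\epsilon)$ factor in the statement, enters.

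Substituting everything back into~\eqref{eq:aj_main} produces a recursion of the form $a_{j+1}\le(1-\gamma m)a_j+C\gamma^2dL^2\max_{|i-j|\le c\tau}a_i$, with $C$ an absolute constant \emph{provided} $\tau$ satisfies the stated bound, so that all asynchrony corrections are dominated by the serial noise term: this amounts to a handful of requirements of the form $\gamma m\tau=\mathcal{O}(1)$, $\gamma^2L^2(d\tau+\tau^2)=\mathcal{O}(1)$, together with the further (cube-type) conditions generated by substituting the $R_1$ bound back into $R_0$, which are responsible for the $\kappa\sqrt{d}$, the $\sqrt[6]{d}$, and the $\log(\hat a_0/\epsilon)$ appearing in the constraint on $\tau$. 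With $\gamma=\ct/(dL\kappa)$ one has $\gamma m=\Theta(1/(d\kappa^2))$ and $\gamma^2dL^2=\Theta(1/(d\kappa^2))$, so for a small enough absolute constant the recursion contracts geometrically at rate $1-\Theta(1/(d\kappa^2))$; converting a ``$\le\rho\cdot(\text{window max})$'' recursion into honest geometric decay then yields $\EE\|\vecx_k-\vecx^*\|^2\le\epsilon$ after $k=\ct\,d\kappa^2\log(a_0/\epsilon)$ iterations.

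The step I expect to be the main obstacle is closing the recursion cleanly. Because $R_1^j$ and $R_2^j$ depend on $\hat\vecx_i$ for $i$ as large as $j+\tau$ --- \emph{future} read iterates, whose distance to $\vecx^*$ is precisely what we are trying to bound --- the recursion is self-referential, and making it rigorous requires either an induction hypothesis that already asserts the geometric decay of $a_i$ for $i\le j$ (plus a separate crude, uniform-over-the-window bound to keep the read iterates from blowing up) or a running-maximum potential. Ensuring in the end that the window size $\tau$ does \emph{not} leave a residual factor in the iteration count --- so that the bound matches serial SCD up to a constant --- is the delicate part, and is exactly what forces the somewhat unusual upper bounds on $\tau$. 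One should also verify, as in the \HW{} analysis, the independence needed for the iterated-expectation step that turns $\EE\langle\hat\vecx_j-\vecx^*,\vecg(\hat\vecx_j,s_j)\rangle$ into $\EE\langle\hat\vecx_j-\vecx^*,\nabla f(\hat\vecx_j)\rangle$ (cf.\ Assumption~\ref{asm:ind}).
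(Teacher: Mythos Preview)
Your overall plan is the paper's plan: specialize recursion~\eqref{eq:aj_main} to the ASCD update, keep $R_0^j$ sharp via $R_0^j=d\,\EE\|\nabla f(\hat\vecx_j)\|^2\le dL^2\EE\|\hat\vecx_j-\vecx^*\|^2$, exploit the coordinate sparsity through the event $\{s_i=s_j\}$, and close a self-referential recursion. Two concrete points separate your sketch from a proof.

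\textbf{The $R_2^j$ bound is too optimistic, and the missing device is an independence shift.} Your claimed $R_2^j\le\gamma L^2\tau\max_i\EE\|\hat\vecx_i-\vecx^*\|^2$ does not follow from Cauchy--Schwarz plus the pointwise inequality $\|\vecg(\hat\vecx_i,s_i)\|^2\le d^2\|\nabla f(\hat\vecx_i)\|^2$: carrying that argument through honestly leaves an extra factor of $d$ (or at best $\sqrt d$), which with $\gamma=\ct/(dL\kappa)$ forces $\tau=\mathcal{O}(1)$ in the final recursion. The paper warns about exactly this (``Cauchy--Schwartz \ldots yields a result \ldots $\sqrt d$ slower'') and fixes it by replacing $\hat\vecx_i,\hat\vecx_j$ with $\hat\vecx_{j-3\tau}$, which is independent of \emph{both} $s_i$ and $s_j$; the indicator $\indi(s_i=s_j)$ can then be decoupled by iterated expectation, and the error incurred by the shift is controlled by $G_4$. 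This is the step that produces $R_2^j\le\ct\sqrt{1/d}\,\gamma\tau^3G_4$ and hence the $\tau=\mathcal{O}(d^{1/6})$ constraint. You correctly anticipate that $R_2^j$ is where ``cube-type'' conditions arise, but the mechanism that makes the sparsity usable is this shift, not a direct Cauchy--Schwarz.

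\textbf{Closing the recursion is done differently.} Rather than a window-max recursion $a_{j+1}\le(1-\gamma m)a_j+C\gamma^2dL^2\max_{|i-j|\le c\tau}a_i$ followed by an induction, the paper introduces the coupled quantities $G_r=\max_{k\in\setS_r^j}\EE\|\vecg(\hat\vecx_k,s_k)\|^2$ and $\Delta_r=\max_{k\in\setS_r^j}\EE\|\hat\vecx_k-\vecx_j\|^2$ (distance to the \emph{fake} iterate $\vecx_j$, not to $\vecx^*$), proves the pair of bounds $G_r\le 2dL^2(a_j+\Delta_r)$ and $\Delta_r\le(3\gamma\tau(r+1))^2G_{r+1}$, and unrolls them a fixed number $\ell$ of times before terminating with the crude uniform bound $G_\ell\le dM^2$. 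This yields $R_0^j\le\ct(dL^2a_j+\theta^{2\ell}dM^2)$ and $R_1^j\le\ct(\theta^2a_j+\theta^{2\ell}M^2/L^2)$, i.e.\ bounds in terms of $a_j$ alone plus an additive residual. Choosing $\ell=\ct\log(\hat a_0/\epsilon)$ makes the residual $\le\epsilon/2$, and the requirement that the unrolling contracts is exactly $\tau\le\kappa\sqrt d/\ell$. This sidesteps the window-max difficulty you flag at the end, and is where the $\log(\hat a_0/\epsilon)$ in the $\tau$ bound genuinely comes from.
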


Using the recursive inequality \eqref{eq:aj_main}, serial SCD with the same step-size as in the above theorem, can be shown to achieve the same accuracy as ASCD in the same number of steps. 
Hence, as long as the proxy for the number of cores is bounded as %#SHORT
$\tau = \mathcal{O}(\min\{\kappa\sqrt{d}\log(\sfrac{\hat a_0}{\epsilon})^{-1}, \sqrt[6]{d}\})$,
our theorem implies a linear speedup with respect to this simple convergence bound.
We would like to note, however, that the coordinate descent literature sometimes uses more refined properties of the function to be optimized that can lead to potentially better convergence bounds, especially in terms of function value accuracy, {\it i.e.}, $f(\vecx_k)-f(\vecx^*)$ (see {\it e.g.}, \cite{liu2014asynchronous1,liu2015asynchronous,bubeck2014theory}). 

We would further like to remark that between the two bounds on $\tau$, the second one, {\it i.e.}, $\mathcal{O}(\sqrt[6]{d})$, is the more restrictive, as the first one is proportional---up to log factors---to the square root of the number of iterations, which is usually $\Omega(d)$. We explain in our subsequent derivation how this loose bound can be improved, but leave the tightness of the bound as an open question for future work. 

\subsection{Proof of Theorem~\ref{theo:ASCD}}

The analysis here is slightly more involved compared to \HW{}.%#SHORT
The main technical bottleneck is to relate the decay of $R_0^j$ with that of $R_1^j$, and then to exploit the sparsity of the updates for bounding $R_2^j$.

We start with a simple upper bound on the norm of the gradient updates.
From the $L$-Lipschitz assumption on $\nabla f(\vecx)$, we have
$$\EE_{s_k}\left\|\vecg(\hat\vecx_{k},s_k)\right\|^2= d\cdot \|\nabla f(\hat{\vecx}_k)\|^2 \leq d L^2 \|\hat{\vecx}_k - \vecx^*\|^2 \leq 2dL^2\|\vecx_j - \vecx^*\|^2 + 2dL^2\|\vecx_j - \hat{\vecx}_k\|^2,$$
where the last inequality is due to Jensen's inequality.
This yields the following result.

\begin{lemma}
\label{lemma: shifted grad bound ascd} 
For any $k$ and $j$ we have
$\EE\left\|\vecg(\hat\vecx_{k},s_k)\right\|^2 \leq 2dL^2\left( a_j +  \EE\|\vecx_j -\hat{\vecx}_k\|^2\right).$
\end{lemma}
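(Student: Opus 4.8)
The plan is to start from the explicit form of the coordinate-descent stochastic gradient, compute its second moment over the random coordinate, and then combine smoothness with an elementary parallelogram bound to route the estimate through the iterate $\vecx_j$.

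First I would unpack the update direction, $\vecg(\hat\vecx_k, s_k) = d\,[\nabla f(\hat\vecx_k)]_{s_k}\,{\bf e}_{s_k}$, so that $\|\vecg(\hat\vecx_k, s_k)\|^2 = d^2\,[\nabla f(\hat\vecx_k)]_{s_k}^2$. Since $s_k$ is uniform on $\{1,\dots,d\}$ and independent of $\hat\vecx_k$ (the ASCD analogue of Assumption~\ref{asm:ind}), conditioning on $\hat\vecx_k$ and taking the expectation over $s_k$ alone gives $\EE_{s_k}\|\vecg(\hat\vecx_k, s_k)\|^2 = \frac{1}{d}\sum_{s=1}^d d^2\,[\nabla f(\hat\vecx_k)]_s^2 = d\,\|\nabla f(\hat\vecx_k)\|^2$. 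Next, because $f$ is $L$-smooth and $\nabla f(\vecx^*) = 0$, I would write $\|\nabla f(\hat\vecx_k)\| = \|\nabla f(\hat\vecx_k) - \nabla f(\vecx^*)\| \le L\,\|\hat\vecx_k - \vecx^*\|$, hence $\|\nabla f(\hat\vecx_k)\|^2 \le L^2\|\hat\vecx_k - \vecx^*\|^2$.

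The only genuinely ``structural'' step is to interpolate through $\vecx_j$: writing $\hat\vecx_k - \vecx^* = (\vecx_j - \vecx^*) + (\hat\vecx_k - \vecx_j)$ and using $\|a+b\|^2 \le 2\|a\|^2 + 2\|b\|^2$ yields $\|\hat\vecx_k - \vecx^*\|^2 \le 2\|\vecx_j - \vecx^*\|^2 + 2\|\hat\vecx_k - \vecx_j\|^2$. Chaining the three inequalities, taking total expectation (tower property over the coordinate randomness and the system randomness $\xi$), and recalling $a_j = \EE\|\vecx_j - \vecx^*\|^2$ gives $\EE\|\vecg(\hat\vecx_k, s_k)\|^2 \le 2dL^2\big(a_j + \EE\|\vecx_j - \hat\vecx_k\|^2\big)$, which is exactly the claim. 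I would deliberately keep the index $j$ free (rather than setting $j=k$), because downstream the point of the lemma is to express the gradient-decay term $R_0^k$ in terms of $a_j$ for indices $j$ lying in the delay window around $k$, which is what lets one close a recursion proving linear convergence.

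I do not anticipate a real obstacle here: the argument is a two-line computation plus the triangle inequality. The one point to be careful about is the conditioning — one must take $\EE_{s_k}$ with $\hat\vecx_k$ held fixed (licensed by the independence assumption) before applying the tower property — but this is routine bookkeeping rather than a difficulty.
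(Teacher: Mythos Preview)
Your proposal is correct and matches the paper's own argument essentially line for line: compute $\EE_{s_k}\|\vecg(\hat\vecx_k,s_k)\|^2 = d\,\|\nabla f(\hat\vecx_k)\|^2$, apply $L$-smoothness together with $\nabla f(\vecx^*)=0$, and then split $\hat\vecx_k-\vecx^*$ through $\vecx_j$ via $\|a+b\|^2\le 2\|a\|^2+2\|b\|^2$ (which the paper phrases as ``Jensen's inequality'') before taking full expectations. There is no substantive difference in approach.
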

Let $T$ be the total number of ASCD iterations, and let us define the set
$$ \setS_r^j = \{\max\{j-r\tau,0\},\ldots,j -1, j, j +1, \ldots, \min\{j+r\tau,T\}\},$$
which has cardinality at most $2r\tau+1$ and contains all indices around $j$ within $r\tau$ steps, as sketched in Fig.~\ref{fig:timeline}.
\begin{figure}[t]
\centerline{\includegraphics[width=0.5\columnwidth]{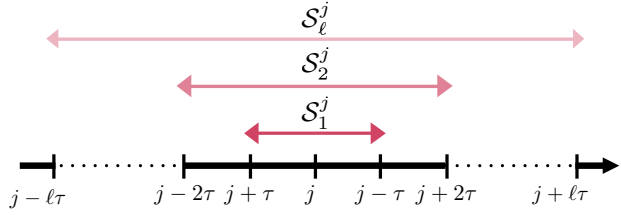}}
\caption{The set
$ \setS_r^j = \{\max\{j-r\tau,0\},\ldots,j -1, j, j +1, \ldots, \min\{j+r\tau,T\}\}$
comprises the indices around $j$ (including $j$) within $r\tau$ steps.
The cardinality of such a set is $2r\tau+1$. Here, $\setS_0^j =\{j\} $.
}
\label{fig:timeline}
\hrule
\end{figure}
Due to Assumption \ref{asm:tau}, and similar to \cite{liu2015asynchronous}, there exist variables $\sigma_{i,k}^j \in \{-1,0,1\}$ such that, for any index $k$ in the set $\setS_r^j$, we have
\begin{equation}
\label{eq: shift decomposition ascd}
\hat\vecx_k - \vecx_j = \sum_{i \in \mathcal{S}^j_{r+1}}\sigma_{i,k}^j \gamma\vecg(\hat\vecx_i, s_i). 
\end{equation}
The above equation implies that the difference between a ``fake" iterate at time $j$ and the value that was read at time $k$ can be expressed as a linear combination of any coordinate updates that occurred during the time interval defined by 
$\mathcal{S}^j_{r+1}$.

From Eq. \eqref{eq: shift decomposition ascd} we see that $\|\hat\vecx_k - \vecx_j\|$, for any $k\in\setS_r^j$, can be upper bounded in terms of the magnitude of the coordinate updates that occur in $\setS_{r+1}^j$. %#SHORT
Since these updates are coordinates of the true gradient, we can use their norm to bound the size of $\hat\vecx_k - \vecx_j$. This will be useful towards bounding $R_1^j$.
Moreover, Lemma~\ref{lemma: shifted grad bound ascd} shows that the magnitude of the gradient steps can be upper bounded in terms of the size of the mismatches. This will in turn be useful in bounding $R_0^j$.
The above observations are fundamental to our approach. 
The following lemma makes the above ideas explicit.

\begin{lemma}
For any $j\in\{0,\ldots, T\}$, we have
\begin{align}
\max_{k\in\setS^j_r}\EE\|\vecg(\hat\vecx_{k},s_k)\|^2 &\le 2dL^2\left( a_j +  \max_{k \in \setS^j_r} \EE \|\hat\vecx_k - \vecx_j\|^2 \right)\\
\max_{k \in \setS^j_r} \EE \|\hat\vecx_k - \vecx_j\|^2 &\le (3\gamma \tau (r+1))^2 \max_{k\in\setS^j_{r+1}}\EE\|\vecg(\hat\vecx_{k},s_k)\|.
\end{align}
\label{lemma:recursions ascd}
\end{lemma}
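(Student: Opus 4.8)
Both inequalities are short consequences of material already in hand. The first is nothing more than Lemma~\ref{lemma: shifted grad bound ascd} with a maximum taken over the window $\setS^j_r$; the second unrolls the mismatch $\hat\vecx_k-\vecx_j$ through the decomposition \eqref{eq: shift decomposition ascd} and controls the resulting sum by Cauchy--Schwarz together with the cardinality bound on $\setS^j_{r+1}$. I will treat the two parts in that order.

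For the first inequality, fix $j$. Lemma~\ref{lemma: shifted grad bound ascd} gives, for every index $k$, the bound $\EE\|\vecg(\hat\vecx_k,s_k)\|^2 \le 2dL^2\bigl(a_j + \EE\|\vecx_j-\hat\vecx_k\|^2\bigr)$. Restricting $k$ to $\setS^j_r$ and taking the maximum of both sides, the right-hand side becomes $2dL^2\bigl(a_j + \max_{k\in\setS^j_r}\EE\|\hat\vecx_k-\vecx_j\|^2\bigr)$, since $a_j$ does not depend on $k$ and the map $t\mapsto 2dL^2(a_j+t)$ is increasing. This is exactly the claimed statement.

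For the second inequality, fix $j$ and an index $k\in\setS^j_r$. By Assumption~\ref{asm:tau} the mismatch admits the representation \eqref{eq: shift decomposition ascd}, namely $\hat\vecx_k-\vecx_j=\sum_{i\in\setS^j_{r+1}}\sigma_{i,k}^j\,\gamma\,\vecg(\hat\vecx_i,s_i)$ with $\sigma_{i,k}^j\in\{-1,0,1\}$. Hence, using $|\sigma_{i,k}^j|\le 1$ and then the power-mean (Cauchy--Schwarz) inequality over the index set, $\|\hat\vecx_k-\vecx_j\|^2 \le \gamma^2\bigl(\sum_{i\in\setS^j_{r+1}}\|\vecg(\hat\vecx_i,s_i)\|\bigr)^2 \le \gamma^2\,|\setS^j_{r+1}|\sum_{i\in\setS^j_{r+1}}\|\vecg(\hat\vecx_i,s_i)\|^2$. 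Taking expectations, bounding $\sum_{i\in\setS^j_{r+1}}\EE\|\vecg(\hat\vecx_i,s_i)\|^2 \le |\setS^j_{r+1}|\max_{i\in\setS^j_{r+1}}\EE\|\vecg(\hat\vecx_i,s_i)\|^2$, and using $|\setS^j_{r+1}|\le 2(r+1)\tau+1\le 3(r+1)\tau$ (valid since $\tau\ge 1$ and $r\ge 0$) gives $\EE\|\hat\vecx_k-\vecx_j\|^2 \le (3\gamma\tau(r+1))^2\max_{i\in\setS^j_{r+1}}\EE\|\vecg(\hat\vecx_i,s_i)\|^2$; maximizing the left-hand side over $k\in\setS^j_r$ yields the stated bound (with the squared norm, which is what the recursion in the next step requires). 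The truncations $\max\{\cdot,0\}$ and $\min\{\cdot,T\}$ in the definition of $\setS^j_{r+1}$ only shrink the set, so all counting estimates remain valid.

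The calculations are routine; the only thing requiring care is the bookkeeping of the index windows. The decomposition of $\hat\vecx_k-\vecx_j$ for $k$ ranging over $\setS^j_r$ necessarily involves coordinate updates indexed by the slightly larger window $\setS^j_{r+1}$, and it is precisely this one-step inflation $r\mapsto r+1$ that later makes the two inequalities chainable into a self-referential bound for $R_0^j$ and $R_1^j$. One should also verify the edge cases in which $j$ lies within $r\tau$ of $0$ or $T$ (handled transparently by the $\max$/$\min$ truncations) and confirm that $\tau\ge 1$ so that $2(r+1)\tau+1\le 3(r+1)\tau$; neither point introduces any real difficulty.
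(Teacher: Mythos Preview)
Your proof is correct and follows essentially the same route as the paper's own argument: the first inequality is just Lemma~\ref{lemma: shifted grad bound ascd} maximized over $k\in\setS^j_r$, and the second expands the decomposition \eqref{eq: shift decomposition ascd}, applies the bound $\|\sum_i \sigma_i \vecg_i\|^2 \le |\setS^j_{r+1}|\sum_i\|\vecg_i\|^2$ (which the paper calls ``Jensen'' and you call ``Cauchy--Schwarz / power-mean''---same inequality here), and then uses $|\setS^j_{r+1}|\le 3\tau(r+1)$. Your parenthetical observation that the right-hand side of the second bound should carry a squared norm is also correct; that is what the subsequent recursion uses.
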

\begin{proof}
The first inequality is a consequence of Lemma~\ref{lemma: shifted grad bound ascd}.
For the second, as mentioned previously, we have $\hat{\vecx}_k - \vecx_j = \sum_{i\in \setS_{r+1}^j}  \sigma_{i,k}\gamma\vecg(\hat{\vecx}_i, s_i)$ when $k\in \setS_r^j$.
Hence,
\begin{align*}
\EE \|\hat{\vecx}_k - \vecx_j\|^2
& = \gamma^2\cdot\EE \biggl\{\biggl\|\sum_{i\in \setS_{r+1}^j} \sigma_{i,k}^j \cdot \vecg(\hat{\vecx}_i, s_i)\biggr\|^2\biggr\}
 \le \gamma^2\cdot\EE \biggl\{|\setS_{r+1}^j| \sum_{i\in \setS_{r+1}^j} \|\vecg(\hat{\vecx}_i, s_i)\|^2\biggr\}\\
&\le \gamma^2\cdot|\setS_{r+1}^j|^2 \max_{i\in\setS_{r+1}^j} \EE \|\vecg(\hat{\vecx}_i, s_i)\|^2
\le (3\gamma\tau(r+1))^2\max_{i\in\setS_{r+1}^j} \EE \|\vecg(\hat{\vecx}_i, s_i)\|^2,
\end{align*}
where the first inequality follows due to Jensen's inequality, and the last inequality uses the bound $|S_{r+1}^j| \leq 2(r+1)\tau + 1 \leq 3\tau(r+1)$. \qquad
\end{proof}

\begin{remark}
The $\tau^2$ factor in the upper bound on $\max\EE \|\hat\vecx_k - \vecx_j\|^2$ in Lemma~\ref{lemma:recursions ascd} might be loose. 
We believe that it should instead be $\tau$, when $\tau$ is smaller than some measure of the sparsity. 
If the sparsity of the steps $\vecg(\hat\vecx_i, s_i)$ 
can be exploited, we suspect that the condition $\tau= \mathcal{O}(\sqrt[6]{d})$ in Theorem \ref{theo:ASCD} could be improved to  $\tau= \mathcal{O}(\sqrt[4]{d})$.
\end{remark}

Let us now define for simplicity
$
G_r =  \max_{k\in\setS_r^j}\EE\|\vecg(\hat\vecx_{k},s_k)\|^2\;\;\text{ and
}\;\;\Delta_r = \max_{k \in \setS_r^j} \EE \|\hat\vecx_k - \vecx_j\|^2.$
Observe that that all gradient norms can be bounded as
$$
G_r = \max_{k\in\setS_r^j}\EE\|\vecg(\hat\vecx_{k},s_k)\|^2 = d\max_{k\in\setS_r^j}\EE\|\nabla f(\hat\vecx_{k})\|^2 \leq d \max_{0 \leq k \leq T}\EE\|\nabla f(\hat\vecx_{k})\|^2 = dM^2,
$$
a property that we will use in our bounds.
Observe that $R_0^j = \EE\|\vecg(\hat\vecx_{j},s_j)\|^2 = G_0$ and $R_1^j = \EE \|\hat\vecx_j - \vecx_j\|^2=\Delta_0$.
To obtain bounds for our first two error terms, $R_0^j$ and $R_1^j$, we will expand the recursive relations that are implied by Lemma~\ref{lemma:recursions ascd}.
As shown in \S~\ref{sec:ascd_proofs} of the Appendix, we obtain the following bounds.

\begin{restatable}{lemma}{lemRzeroone}
\label{lemma:R01 bound ascd}
Let  $\tau \leq \frac{\kappa\sqrt{d}}{\ell}$ and set $\gamma = \frac{\theta}{6d L \kappa}$, for any  $\theta \leq 1$ and $\ell \geq 1$.  
Then,
\begin{align*}
R_0^j\le \ct\left(dL^2 a_j + \theta^{2\ell}d M^2\right) \text{ and } R_1^j \leq \ct\left(\theta^2 a_j + \theta^{2\ell}\frac{M^2}{L^2}\right).
\end{align*}
\end{restatable}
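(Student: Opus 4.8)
The plan is to unroll the two coupled recursions of Lemma~\ref{lemma:recursions ascd} exactly $\ell$ times, after first collapsing them to a single scalar recursion. Write $G_r = \max_{k\in\setS_r^j}\EE\|\vecg(\hat\vecx_k,s_k)\|^2$ and $\Delta_r = \max_{k\in\setS_r^j}\EE\|\hat\vecx_k-\vecx_j\|^2$, so that $R_0^j = G_0$, $R_1^j=\Delta_0$, and recall the global bound $G_r \le dM^2$ noted just before the lemma. Substituting $\gamma = \frac{\theta}{6dL\kappa}$ and $\tau\le\frac{\kappa\sqrt d}{\ell}$ gives $3\gamma\tau(r+1)\le\frac{\theta(r+1)}{2L\sqrt d\,\ell}$, so feeding the first inequality of Lemma~\ref{lemma:recursions ascd} into the second produces
\[
\Delta_r \;\le\; c_r\bigl(a_j+\Delta_{r+1}\bigr),\qquad c_r := \frac{\theta^2(r+1)^2}{2\ell^2}.
\]

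Unrolling this for $r=0,1,\dots,\ell-1$ yields
\[
\Delta_0 \;\le\; a_j\sum_{i=0}^{\ell-1}\prod_{t=0}^{i}c_t \;+\; \Bigl(\prod_{t=0}^{\ell-1}c_t\Bigr)\Delta_\ell .
\]
For the first sum I would use that $\prod_{t=0}^{i}(t+1) = (i+1)! \le \ell^{\,i+1}$ whenever $i+1\le\ell$, hence $\prod_{t=0}^{i}c_t \le (\theta^2/2)^{i+1}$, and since $\theta\le 1$ the whole sum is at most $\theta^2\sum_{i\ge 0}(1/2)^{i+1} = \theta^2$. For the tail I would invoke Stirling: $\prod_{t=0}^{\ell-1}c_t = \frac{\theta^{2\ell}(\ell!)^2}{2^{\ell}\ell^{2\ell}} \le \theta^{2\ell}\cdot\frac{\ct\,\ell}{(2e^2)^{\ell}} \le \ct\,\theta^{2\ell}$, while $\Delta_\ell \le (3\gamma\tau(\ell+1))^2 G_{\ell+1} \le \frac{\theta^2}{dL^2}\cdot dM^2 = \theta^2 M^2/L^2$ (using $(\ell+1)/\ell\le 2$ and $G_{\ell+1}\le dM^2$). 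Combining the two pieces gives $R_1^j = \Delta_0 \le \theta^2 a_j + \ct\,\theta^{2\ell}M^2/L^2$. The bound on $R_0^j$ then follows in one step from the first inequality of Lemma~\ref{lemma:recursions ascd}: $R_0^j = G_0 \le 2dL^2(a_j+\Delta_0) \le 2dL^2 a_j + 2dL^2\bigl(\theta^2 a_j + \ct\,\theta^{2\ell}M^2/L^2\bigr) \le \ct\,dL^2 a_j + \ct\,\theta^{2\ell}dM^2$, using $\theta\le 1$.

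The main obstacle I anticipate is the bookkeeping of the products $\prod_{t=0}^{i}c_t$: one must check simultaneously that the coefficient multiplying $a_j$ telescopes to $\mathcal{O}(\theta^2)$ rather than accumulating a factor growing with $\ell$, and that the coefficient multiplying the only crudely bounded tail $\Delta_\ell$ contracts all the way down to $\theta^{2\ell}$ up to an absolute constant — the latter is exactly where the factorial/Stirling estimate $\bigl((\ell!)^2/\ell^{2\ell}\bigr)2^{-\ell}\lesssim (2e^2)^{-\ell}$ does the work and where the choice of constants in $\gamma$ and the hypothesis $\tau\le\kappa\sqrt d/\ell$ are used sharply. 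A minor technical point is that $\ell$ should be read as a positive integer (or replaced by $\lfloor\ell\rfloor$) so that ``unrolling $\ell$ times'' is well defined; the argument above goes through verbatim under that reading.
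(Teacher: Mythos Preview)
Your proof is correct and follows essentially the same approach as the paper: collapse the two inequalities of Lemma~\ref{lemma:recursions ascd} into a single scalar recursion, unroll it $\ell$ times, and use $(i+1)!\le\ell^{\,i+1}$ together with $G_r\le dM^2$ to control the partial sums and the tail. The only cosmetic difference is that you unroll the $\Delta_r$ recursion first and then read off $R_0^j=G_0\le 2dL^2(a_j+\Delta_0)$ in one line, whereas the paper unrolls the $G_r$ recursion to get $R_0^j$ and then repeats the argument from $G_1$ to obtain $R_1^j$; your ordering is slightly more economical but the content is the same.
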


The Cauchy-Schwartz inequality implies the bound $R_2^j \leq \sqrt{R_0^j R_1^j}$. 
Unfortunately this approach yields a result that can only guarantee convergence up to a factor of $\sqrt{d}$ slower than serial SCD.
This happens because upper bounding the inner product $\langle\hat\vecx_j - \vecx_j, \vecg(\hat\vecx_j, s_j) \rangle$ by $\|\hat\vecx_j - \vecx_j\|\|\vecg(\hat\vecx_j, s_j)\|$ disregards the extreme sparsity of  $\vecg(\hat\vecx_j, s_j)$. 
The next lemma uses a slightly more involved argument to bound $R_2^j$ exploiting the sparsity of the gradient update. 
The proof can be found in Appendix~\ref{sec:ascd_proofs}.

\begin{restatable}{lemma}{lemRtwo} 
\label{lemma:R2 ascd bound}
Let  $\tau \leq \frac{\kappa\sqrt{d}}{\ell}$ and $\tau = \mathcal{O}(\sqrt[6]{d})$. Then,
$
R_2^j \leq \ct \left(\theta  m a_j  + \theta^{2\ell}\frac{M^2}{L\kappa}\right).$
\end{restatable}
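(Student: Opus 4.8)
The plan is to expand $R_2^j=\EE\langle\hat\vecx_j-\vecx_j,\,\vecg(\hat\vecx_j,s_j)\rangle$ via the mismatch decomposition \eqref{eq: shift decomposition ascd} with $r=0$, so that $j\in\setS_0^j$ and $\hat\vecx_j-\vecx_j=\sum_{i\in\setS_1^j,\,i\neq j}\sigma_{i,j}^j\gamma\vecg(\hat\vecx_i,s_i)$, and then to exploit as much as possible the fact that each step $\vecg(\hat\vecx_i,s_i)=d\,[\nabla f(\hat\vecx_i)]_{s_i}\mathbf{e}_{s_i}$ is supported on the single coordinate $s_i$. Since $\langle\vecg(\hat\vecx_i,s_i),\vecg(\hat\vecx_j,s_j)\rangle=d^2[\nabla f(\hat\vecx_i)]_{s_i}[\nabla f(\hat\vecx_j)]_{s_j}\,\indi(s_i=s_j)$ is zero unless the two sampled coordinates coincide, I would bound $|\sigma_{i,j}^j|\le1$, pass to absolute values, use the trivial coordinate bounds $[\nabla f(\hat\vecx)]_u^2\le\|\nabla f(\hat\vecx)\|^2$, and apply Young's inequality with a free weight $\rho$; this reduces $R_2^j$ to a sum of at most $3\tau$ terms of the two shapes $\gamma d^2\rho\,\EE[\|\nabla f(\hat\vecx_i)\|^2\indi(s_i=s_j)]$ and $\gamma d^2\rho^{-1}\,\EE[\|\nabla f(\hat\vecx_j)\|^2\indi(s_i=s_j)]$.

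The crux — and the reason this beats the weaker bound $R_2^j\le\sqrt{R_0^jR_1^j}$ — is to recover the powers of $1/d$ that tame these terms. One factor of $1/d$ always comes from the coincidence event: conditioning on everything but one of the two independent draws and invoking Assumption~\ref{asm:ind} gives $\Pr(s_i=s_j)=1/d$. A second factor of $1/d$ comes from the sparsity of the surviving gradient itself, since a uniformly random coordinate of a vector of squared norm $N$ has expected squared value $N/d$; but to use it one must be able to average the relevant read vector over the fresh sample $s_j$. For indices $i<j$ this is legitimate (again by Assumption~\ref{asm:ind}); for $i\ge j$ the read $\hat\vecx_i$ is entangled with $s_j$, because it may already incorporate the step-$j$ write $\gamma\vecg(\hat\vecx_j,s_j)$ and, recursively, later writes. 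I would handle these by splitting $\hat\vecx_i$ into a part independent of $s_j$ plus a correction supported on coordinate $s_j$, whose squared norm is at most $\mathcal O(\tau^2)\gamma^2 G_1$ by Assumption~\ref{asm:tau} and the uniform bound $G_r\le dM^2$, and then expanding $\|\nabla f(\hat\vecx_i)\|^2$ around the clean part using $L$-smoothness to extract the missing $1/d$ up to this controlled error.

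Having recovered both factors of $1/d$, each of the at most $3\tau$ terms can be bounded — using $R_0^j,R_1^j$ from Lemma~\ref{lemma:R01 bound ascd}, the bounds on $\max_{k\in\setS_1^j}\EE\|\hat\vecx_k-\vecx_j\|^2$ available from the unrolling in its proof, and the elementary consequence $\EE\|\nabla f(\hat\vecx_k)\|^2\le 2L^2a_j+2L^2\max_{k\in\setS_1^j}\EE\|\hat\vecx_k-\vecx_j\|^2\le\ct(L^2a_j+\theta^{2\ell}M^2)$ — by $\ct(\rho+\rho^{-1})\gamma(L^2a_j+\theta^{2\ell}M^2)$ plus the correction term. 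Summing over the at most $3\tau$ indices, substituting $\gamma=\theta/(6dL\kappa)$, choosing $\rho$ to balance the two halves of Young's inequality, and using $\tau\le\kappa\sqrt d/\ell$ together with $\tau=\mathcal O(\sqrt[6]{d})$ to absorb the residual powers of $\tau$ and $d$ (the latter including the $\tau^2$ inherited from Lemma~\ref{lemma:recursions ascd} in the correction), yields $R_2^j\le\ct(\theta m\,a_j+\theta^{2\ell}M^2/(L\kappa))$, since $m=L/\kappa$.

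I expect the dependence bookkeeping to be the main obstacle: both the step that turns $\indi(s_i=s_j)$ into a factor $1/d$ and, more delicately, the step that says a random coordinate of the gradient is small rely on certain read vectors being independent of certain fresh samples, and for the concurrently-processed indices $i$ near $j$ this fails — precisely the subtlety flagged after Assumption~\ref{asm:ind}. Controlling the entangled contributions costs extra $\gamma\tau$ and $\tau^2$ factors, and it is the need to absorb these — not anything about $R_0^j$ or $R_1^j$ — that forces $\tau=\mathcal O(\sqrt[6]{d})$; as the remark preceding the statement indicates, a sharper treatment of the mismatch sparsity should relax this to $\tau=\mathcal O(\sqrt[4]{d})$.
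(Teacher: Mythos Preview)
Your approach diverges from the paper's, and there is a genuine gap. The core inconsistency is that you first ``use the trivial coordinate bounds $[\nabla f(\hat\vecx)]_u^2\le\|\nabla f(\hat\vecx)\|^2$'' to reduce each Young term to the shape $\gamma d^2\rho^{\pm1}\EE\bigl[\|\nabla f(\hat\vecx_\bullet)\|^2\indi(s_i=s_j)\bigr]$, and then claim to extract \emph{two} factors of $1/d$. Once the trivial bound is applied there is no random coordinate left; the only $1/d$ available comes from the indicator, so each term is of size $\gamma d\,\EE\|\nabla f(\hat\vecx_\bullet)\|^2\le \ct\gamma d\,L^2 a_j$, and summing over $3\tau$ indices with $\gamma=\theta/(6dL\kappa)$ yields $\ct\,\theta\tau\, m a_j$ --- too large by a factor $\tau$. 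To salvage the second $1/d$ you must keep $[\nabla f(\hat\vecx_i)]_{s_i}^2$ and average it over $s_i$ via Assumption~\ref{asm:ind}; but then the symmetric issue appears in the \emph{other} Young term, where $\hat\vecx_j$ is entangled with $s_i$ for $i<j$ (through earlier writes), a case your splitting discussion does not cover. Your claim that the correction is ``supported on coordinate $s_j$'' also ignores the recursive dependence of later reads on the step-$j$ write.

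The paper avoids all of this with a single device you are missing: shift \emph{both} read vectors to the common reference point $\hat\vecx_{j-3\tau}$, which by the overlap bound is independent of \emph{every} sample $s_i,s_j$ in the window. One writes $\|\vecg(\hat\vecx_\bullet,s_\bullet)\|\le\|\vecg(\hat\vecx_{j-3\tau},s_\bullet)\|+dL\|\hat\vecx_{j-3\tau}-\hat\vecx_\bullet\|$, expands the product into four pieces, and for each piece applies Cauchy--Schwarz so that the factor carrying $\indi(s_i=s_j)$ is independent of $s_j$; this cleanly yields a single $\sqrt{1/d}$. The Lipschitz correction norms are controlled by $G_4$ through the mismatch bound, and the cross terms contribute $(1+dL\gamma\tau+(dL\gamma\tau)^2)\le 3\tau^2$, giving $R_2^j\le\ct\sqrt{1/d}\,\gamma\tau^3 G_4$; it is this $\tau^3/\sqrt{d}$ that forces $\tau=\mathcal O(\sqrt[6]{d})$. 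The reference-point trick is what makes the dependence bookkeeping trivial --- your splitting-and-correction program is trying to reinvent it case by case.
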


\subsubsection{Putting it all together}

We can now plug in the upper bounds on $R_0^j$, $R_1^j$, and $R_2^j$ in our perturbed iterate recursive formula
$$a_{j+1} 
\leq \left(1- \gamma m\right) a_j  + \gamma^2\underbrace{\EE\|g(\hat\vecx_j,s_j)\|^2}_{R_0^j} + 2\gamma m \underbrace{\Exp\|\hat\vecx_j - \vecx_j\|^2}_{R_1^j} +  2\gamma \underbrace{\EE \langle \hat\vecx_j - \vecx_j, \vecg(\hat\vecx_j,s_j)\rangle}_{R_2^j},$$
to find that ASCD satisfies 
\begin{align*}
a_{j+1}\! \leq\! \biggl( \!1\! -\! \gamma m + \underbrace{\ct\biggl(\gamma^2 dL^2 + \gamma  m\theta^2 + \gamma \theta m\biggr)}_{=r(\gamma)}\biggr)a_j\! +\! \ct\!\biggl(\underbrace{\gamma^2 \theta^{2\ell}dM^2\! +\! \gamma \theta^{2\ell}\frac{M^2}{L\kappa}}_{=\delta(\gamma)}\biggr).
\end{align*}

Observe that in the serial case of SCD the errors $R_1^j$ and $R_2^j$ are zero, and $R_0^j=\EE\|g(\vecx_j,s_j)\|^2$.
By applying the Lipschitz assumption on $f$, we get $\EE\|g(\vecx_j,s_j)\|^2\le dL^2a_j$, and obtain the simple recursive formula
\begin{align}
a_{j+1} \le (1-  \gamma m+\gamma^2dL^2)a_j.
\label{eq:serial_SCD}
\end{align}
To guarantee that ASCD follows the same recursion,
{\it i.e.}, it has the same convergence rate as the one implied by Eq.~\eqref{eq:serial_SCD}, we require that
$\gamma m-r(\gamma) \ge C(\gamma m-\gamma^2dL^2),$
where $C<1$ is a constant.
Solving for $\gamma$ we get
\begin{align*}
&\gamma m-C'\left(\gamma^2 dL^2 + \gamma  m\theta^2 + \gamma \theta m\right) \geq
C (\gamma m-\gamma^2 dL^2) \\
\Leftrightarrow&
(1-C)\gamma m-(C'-C)\gamma^2 dL^2 + C'(\gamma  m\theta^2 + \gamma \theta m) \geq 0\\
\Leftrightarrow&
(C'-C)\gamma dL^2\leq [(1-C)+C'(\theta^2 + \theta)]m 
\Leftrightarrow
\gamma \leq \ct \frac{\theta m}{ dL^2} = \ct \frac{\theta}{ d\kappa L},
\end{align*}
where $C'>1$ is some absolute constant.
For $\gamma = \ct \frac{\theta}{ d\kappa L}$, the $\delta(\gamma)$ term in the recursive bound becomes
\begin{align*}
\ct\!\left(\! \frac{\theta^2}{d^2\!\kappa^2\!L^2} \theta^{2\ell}dM^2\! +\! \frac{\theta}{ d\kappa L} \theta^{2\ell}\frac{M^2}{L\kappa}\!\right)\!
&=\!
\ct\!\left(\! \theta^{2\ell+2} \frac{M^2}{ d\kappa^2\! L^2\!}\! +\! \theta^{2\ell+1}\frac{M^2}{d\kappa^2\!L^2\!}\!\right)
\!\le\!
\ct\theta^{2\ell}\frac{\hat{a}_0}{ d\kappa^2},
\end{align*}
where we used the inequality $M^2\le L^2 \hat{a}_0$.
Hence, ASCD satisfies %#SHORT
\begin{align*}
a_{j + 1}& \leq \left(1 - \ct \sfrac{\theta}{d\kappa^2}\right)a_j +   \ct\theta^{2\ell}\frac{\hat{a}_0}{ d\kappa^2} 
 \leq 
\left(1 - \ct \sfrac{\theta}{d\kappa^2}\right)^{j+1}a_0 + \ct  \theta^{2\ell} \hat a_0.
\end{align*}
Let us set $\theta$ to be a sufficiently small constant so that $\ct \frac{\theta}{d\kappa^2}= \frac{1}{d\kappa^2}$ and solve for $\ell$ such that $\ct  \theta^{2\ell}\hat a_0 = \epsilon/2 $.
This gives
$\ell = \ct \log\left(\sfrac{\hat a_0}{\epsilon}\right).$
Our main theorem for ASCD now follows from solving $\left(1 - \sfrac{\ct}{d\kappa^2}\right)^{j+1}a_0=\epsilon/2$ for $j$.

%%% Local Variables:
%%% mode: latex
%%% TeX-master: "perturbed_arxiv"
%%% End:

%\vspace{-0.3cm}
\section{Sparse and Asynchronous SVRG}
The SVRG algorithm, presented in \cite{johnson2013accelerating}, is a variance-reduction
approach to stochastic gradient descent with strong theoretical guarantees and empirical
performance.  In this section, we present a parallel, asynchronous and sparse variant of 
SVRG.  We also present a convergence analysis, showing that the analysis proceeds in a 
nearly identical way to that of ASCD.

\subsection{Serial Sparse SVRG}
The original SVRG algorithm of \cite{johnson2013accelerating} runs for a number of epochs; 
the per epoch iteration is given as follows:
\begin{equation}
\label{eq: classic SVRG}
\vecx_{j+1} = \vecx_j - \gamma\left(\vecg(\vecx_j, s_j) - \vecg(\vecy, s_j) + \nabla f(\vecy)\right),
\end{equation}
where $\vecy$ is the last iterate of the previous epoch, and as such is updated at the end of every epoch. Here $f$ is of the same form as in \eqref{eq:decomposable-f}:
\begin{equation*}
f(\vecx) = \frac{1}{n}\sum_{i=1}^n f_{e_i}(\vecx),
\end{equation*}
and $\vecg(\vecx, s_j) = \nabla f_{s_j}(\vecx)$, with hyperedges $s_j\in \mathcal{E}$ sampled uniformly at random. 
As is common in the SVRG literature, we further assume that the individual $f_{e_i}$ terms  are $L$-smooth.
The theoretical innovation in SVRG is having an SGM flavored algorithm, with small amortized cost per iteration, where the variance of the gradient estimate is 
smaller than that of standard SGM.
For a certain selection of learning rate, epoch size, and number of iterations, \cite{johnson2013accelerating} establishes that SVRG attains a linear rate.  

Observe that when optimizing a decomposable $f$ with sparse terms, in contrast to SGM, the SVRG iterates will be dense due to the term $\nabla f(\vecy)$.
From a practical perspective, when the SGM iterates are sparse---the case in several applications \cite{niu2011hogwild}---the cost of writing a sparse update in shared memory is significantly smaller than  applying the dense gradient update term $\nabla f(\vecy)$.
Furthermore, these dense updates will cause significantly more memory conflicts in an asynchronous execution, amplifying the error terms in \eqref{eq:aj_main}, and introducing time delays due to memory contention.

A sparse version of SVRG can be obtained by letting the support of the update be determined by that of $\vecg(\vecy,s_j)$:
\begin{equation}
\vecx_{j+1} = \vecx_j - \gamma \left(\vecg(\vecx_j,s_j) - \vecg(\vecy,s_j) +\matD_{s_j}\nabla f(\vecy)\right)= \vecx_j -\gamma\vecv_j,
\label{eq:serial-sparse-svrg}
\end{equation}
where $\matD_{s_j} = \matP_{s_j}\matD$, and $\matP_{s_j}$ is the projection on the support of $s_j$ and $\matD = \diag\left(p_1^{-1}, \ldots, p_d^{-1}\right)$ is a $d\times d$ diagonal matrix. 
The weight $p_v$ is equal to the probability that index $v$ belongs to a hyperedge sampled uniformly at random from $\mathcal{E}$.
These probabilities can be computed from the right degrees of the bipartite graph shown in Fig.~\ref{fig:bipartite_graph}.   
The normalization ensures that $\EE_{s_j} \matD_{s_j}\nabla f(\vecy) = \nabla f(\vecy)$ and thus that $\EE {\vecv}_{j} = \nabla f(\vecx_j)$. 
We will establish the same upper bound on $\EE \|\vecv_j\|^2$ for sparse SVRG as the one used in \cite{johnson2013accelerating} to establish a linear rate of convergence for dense SVRG. 
As before we assume that there exists a uniform bound $M>0$ such that $\|\vecv_j\| \leq M$. 

\begin{lemma}
\label{lem:SVRGvar}
The variance of the serial sparse SVRG procedure in \eqref{eq:serial-sparse-svrg} satisfies
$$
\EE \|\vecv_j\|^2 \leq 2\EE\|\vecg(\vecx_j,s_j) - \vecg(\vecx^*,s_j)\|^2 + 2\EE \|\vecg(\vecy,s_j) - \vecg(\vecx^*,s_j)\|^2 - 2\nabla f(\vecy)^\top \matD \nabla f(\vecy). 
$$
\end{lemma}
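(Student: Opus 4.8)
The plan is to mimic the classical SVRG variance computation of \cite{johnson2013accelerating}, carefully accounting for the sparsifying projection $\matP_{s_j}$ and the reweighting matrix $\matD$. First I would split the update direction around the optimum as
$$
\vecv_j = \bigl(\vecg(\vecx_j,s_j) - \vecg(\vecx^*,s_j)\bigr) - \bigl(\vecg(\vecy,s_j) - \vecg(\vecx^*,s_j) - \matD_{s_j}\nabla f(\vecy)\bigr) =: \vecr_j - \vecn_j,
$$
and apply the elementary inequality $\|\vecr_j - \vecn_j\|^2 \le 2\|\vecr_j\|^2 + 2\|\vecn_j\|^2$, so that $\EE\|\vecv_j\|^2 \le 2\,\EE\|\vecr_j\|^2 + 2\,\EE\|\vecn_j\|^2$. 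The term $2\,\EE\|\vecr_j\|^2$ is already the first term in the claimed bound, so it remains to show $\EE\|\vecn_j\|^2 \le \EE\|\vecg(\vecy,s_j) - \vecg(\vecx^*,s_j)\|^2 - \nabla f(\vecy)^\top\matD\nabla f(\vecy)$.

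Expanding the square gives
$$
\EE\|\vecn_j\|^2 = \EE\|\vecg(\vecy,s_j) - \vecg(\vecx^*,s_j)\|^2 - 2\,\EE\bigl\langle \vecg(\vecy,s_j) - \vecg(\vecx^*,s_j),\, \matD_{s_j}\nabla f(\vecy)\bigr\rangle + \EE\|\matD_{s_j}\nabla f(\vecy)\|^2.
$$
For the cross term I would use that $\nabla f_{s_j}$ is supported on the coordinates of $s_j$, so $\matP_{s_j}$ acts as the identity on $\vecg(\vecy,s_j)-\vecg(\vecx^*,s_j)$; hence $\langle \vecg(\vecy,s_j) - \vecg(\vecx^*,s_j), \matD_{s_j}\nabla f(\vecy)\rangle = \langle \vecg(\vecy,s_j) - \vecg(\vecx^*,s_j), \matD\nabla f(\vecy)\rangle$, and taking the expectation over $s_j$ together with the unbiasedness identities $\EE\,\vecg(\vecy,s_j) = \nabla f(\vecy)$ and $\EE\,\vecg(\vecx^*,s_j) = \nabla f(\vecx^*) = 0$ yields cross term $= \nabla f(\vecy)^\top \matD\nabla f(\vecy)$. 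For the last term I would compute coordinate-wise, using $\matD_{s_j} = \matP_{s_j}\matD$, $\Pr[v\in s_j] = p_v$, and $\matD = \diag(p_v^{-1})$:
$$
\EE\|\matD_{s_j}\nabla f(\vecy)\|^2 = \sum_{v=1}^d \Pr[v\in s_j]\,\bigl(p_v^{-1}[\nabla f(\vecy)]_v\bigr)^2 = \sum_{v=1}^d p_v^{-1}[\nabla f(\vecy)]_v^2 = \nabla f(\vecy)^\top\matD\nabla f(\vecy).
$$
Substituting these two identities collapses $\EE\|\vecn_j\|^2$ to $\EE\|\vecg(\vecy,s_j) - \vecg(\vecx^*,s_j)\|^2 - 2\nabla f(\vecy)^\top\matD\nabla f(\vecy) + \nabla f(\vecy)^\top\matD\nabla f(\vecy)$, which is exactly the required expression; multiplying by $2$ and adding $2\,\EE\|\vecr_j\|^2$ gives the lemma.

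The computation is short, and the only place that requires care — and the one I would emphasize — is the exact identity $\EE\|\matD_{s_j}\nabla f(\vecy)\|^2 = \nabla f(\vecy)^\top\matD\nabla f(\vecy)$, since it is this exact cancellation (rather than a mere inequality) that produces the negative term $-2\nabla f(\vecy)^\top\matD\nabla f(\vecy)$ and thereby matches the variance bound used in \cite{johnson2013accelerating}. Getting the normalization $p_v$ inside $\matD$ right is precisely what makes this cancellation occur, and is the same normalization that guarantees $\EE\,\vecv_j = \nabla f(\vecx_j)$; I do not anticipate any genuinely hard step beyond this bookkeeping.
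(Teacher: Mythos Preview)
Your proposal is correct and follows essentially the same approach as the paper: the same add-and-subtract of $\vecg(\vecx^*,s_j)$, the same $\|a-b\|^2\le 2\|a\|^2+2\|b\|^2$ split, and the same two identities for the cross term and for $\EE\|\matD_{s_j}\nabla f(\vecy)\|^2$. Your coordinate-wise justification of $\EE\|\matD_{s_j}\nabla f(\vecy)\|^2 = \nabla f(\vecy)^\top\matD\nabla f(\vecy)$ is slightly more explicit than the paper's, which simply asserts it.
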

\begin{proof}
By definition $\vecv_j = \vecg(\vecx_j, s_j) - \vecg(\vecy, s_j) + \matD_{s_j}\nabla f(\vecy)$. Therefore
\begin{align*}
\EE \|\vecv_j\|^2 &= \EE\| \vecg(\vecx_j, s_j) - \vecg(\vecy, s_j) + \matD_{s_j}\nabla f(\vecy)\|^2 \\
&\leq 2\EE\| \vecg(\vecx_j, s_j) - \vecg(\vecx^*, s_j)\|^2 + 2\EE\| \vecg(\vecy, s_j) - \vecg(\vecx^*,s_j) - \matD_{s_j} \nabla f(\vecy)\|^2. 
\end{align*}
We expand the second term to find that
\begin{align*}
&\EE\| \vecg(\vecy,s_j) - \vecg(\vecx^*,s_j) - \matD_{s_j} \nabla f(\vecy)\|^2 \\
 &= \EE \|\vecg(\vecy,s_j) - \vecg(\vecx^*,s_j) \|^2 - 2\EE \langle\vecg(\vecy, s_j) - \vecg(\vecx^*, s_j) , \matD_{s_j} \nabla f(\vecy)\rangle + \EE \|\matD_{s_j}\nabla f(\vecy)\|^2.
\end{align*}
Since $\vecg(\vecx,s_j)$ is supported on $s_j$ for all $\vecx$, we have
{\small
\begin{align*}
\EE \langle\vecg(\vecy,s_j) - \vecg(\vecx^*,s_j) , \matD_{s_j}\nabla f(\vecy)\rangle &= \EE \langle\vecg(\vecy,s_j) - \vecg(\vecx^*,s_j) ,\matD \nabla f(\vecy)\rangle=\nabla f(\vecy)^\top\matD \nabla f(\vecy),  
\end{align*}
}where the second equality follows by the property of iterated expectations. The conclusion follows because $\EE \|\matD_{s_j}\nabla f(\vecy)\|^2= \nabla f(\vecy)^\top\matD \nabla f(\vecy)$. \qquad
\end{proof}

Observe that the last term in the variance bound is a non-negative quadratic form, hence we can drop it and obtain the same variance bound as the one obtained in \cite{johnson2013accelerating} for dense SVRG.
This directly leads to the following corollary.
\begin{corollary}
Sparse SVRG admits the same convergence rate upper bound as that of the SVRG of \cite{johnson2013accelerating}.
\end{corollary}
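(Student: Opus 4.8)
The plan is to verify the corollary by checking that the linear-rate argument of Johnson and Zhang~\cite{johnson2013accelerating} goes through for the sparse iteration~\eqref{eq:serial-sparse-svrg} verbatim, once we supply the matching variance bound. The key observation is that the sparse update is still unbiased, $\EE_{s_j}\vecv_j = \nabla f(\vecx_j)$, by the normalization $\EE_{s_j}\matD_{s_j}\nabla f(\vecy)=\nabla f(\vecy)$ noted just before Lemma~\ref{lem:SVRGvar}. So $\vecx_{j+1}=\vecx_j-\gamma\vecv_j$ is exactly a perturbation-free stochastic gradient step in the sense of~\eqref{eq:sgm_iterate}, and the only place the specific form of $\vecv_j$ enters the SVRG proof is through the bound on $\EE\|\vecv_j\|^2$.

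First I would recall the standard one-step inequality: expanding $\EE\|\vecx_{j+1}-\vecx^*\|^2 = \EE\|\vecx_j-\vecx^*\|^2 - 2\gamma\langle\nabla f(\vecx_j),\vecx_j-\vecx^*\rangle + \gamma^2\EE\|\vecv_j\|^2$, then using strong convexity (or the standard $f(\vecx_j)-f(\vecx^*)$ lower bound) on the middle term. Next I would invoke Lemma~\ref{lem:SVRGvar} together with the smoothness of the $f_{e_i}$ terms: the co-coercivity/smoothness estimate $\EE\|\vecg(\vecx,s_j)-\vecg(\vecx^*,s_j)\|^2 \le 2L\,(f(\vecx)-f(\vecx^*))$ bounds each of the first two terms in the Lemma, and the third term $-2\nabla f(\vecy)^\top\matD\nabla f(\vecy)$ is a non-negative quadratic form (since $\matD=\diag(p_1^{-1},\dots,p_d^{-1})$ has positive entries), so it can simply be dropped. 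This produces $\EE\|\vecv_j\|^2 \le 4L\big((f(\vecx_j)-f(\vecx^*)) + (f(\vecy)-f(\vecx^*))\big)$ --- which is precisely inequality used in~\cite{johnson2013accelerating}. From here the rest of their telescoping-over-an-epoch argument (summing the one-step bound over $j$ within an epoch, using $\EE\vecx_{\text{epoch end}}$ as the new $\vecy$, and choosing $\gamma$ and the epoch length $\mathaccent"7016{m}$ so that the contraction factor $\rho = \frac{1}{m\gamma(1-2L\gamma)\mathaccent"7016{m}} + \frac{2L\gamma}{1-2L\gamma} < 1$) applies with no change, giving the same geometric decay of $\EE[f(\vecy_s)-f(\vecx^*)]$ across epochs $s$.

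The only genuine obstacle is making sure that dropping the term $-2\nabla f(\vecy)^\top\matD\nabla f(\vecy)$ is the correct direction of the inequality and that no hidden sign issue arises from the reweighting $\matD_{s_j}$; but Lemma~\ref{lem:SVRGvar} already packages exactly this, and its proof shows the cross term and the quadratic term cancel down to the single $-2\nabla f(\vecy)^\top\matD\nabla f(\vecy)$, so after discarding it we land on the identical bound to the dense case. Everything else is bookkeeping: the sparse update touches only coordinates in $s_j$, but that never helped (nor hurt) the serial analysis, which only ever uses $\EE\vecv_j$ and $\EE\|\vecv_j\|^2$. Hence the corollary follows immediately: sparse SVRG enjoys the same convergence-rate upper bound as SVRG of~\cite{johnson2013accelerating}.
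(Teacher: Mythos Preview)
Your proposal is correct and takes essentially the same approach as the paper: the paper simply observes that once the subtracted quadratic $2\nabla f(\vecy)^\top\matD\nabla f(\vecy)\ge 0$ is dropped from Lemma~\ref{lem:SVRGvar}, the variance bound matches the dense one in~\cite{johnson2013accelerating}, and unbiasedness of $\vecv_j$ lets the rest of that proof go through unchanged. Your write-up just spells out those steps in slightly more detail.
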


We note that usually the convergence rates for SVRG are obtained for function value differences.
However, since our perturbed iterate framework of \S~\ref{section: noisy input stochastic gradients} is based on iterate differences, we re-derive a convergence bound for iterates.
\begin{lemma}
Let the step size be $\gamma = \frac{1}{4L\kappa}$ and the length of an epoch be $8\kappa^2$. Then,
$
\EE \|\vecy_k - \vecx^*\|^2 \leq 0.75^{k}\cdot \EE\|\vecy_{0} - \vecx^*\|^2,
$
 where $\vecy_k$ is the iterate at the end of the $k$-th epoch. 
\end{lemma}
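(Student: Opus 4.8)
The plan is to reduce the claim to a single-epoch contraction estimate and then iterate over epochs. Fix one epoch, write $\vecy$ for the incoming iterate (so $\vecx_0=\vecy$), let $p=8\kappa^2$ be the epoch length, and let the iterates be $\vecx_{j+1}=\vecx_j-\gamma\vecv_j$ with $\vecv_j=\vecg(\vecx_j,s_j)-\vecg(\vecy,s_j)+\matD_{s_j}\nabla f(\vecy)$ as in \eqref{eq:serial-sparse-svrg}. Set $a_j=\EE\|\vecx_j-\vecx^*\|^2$ and $D=\EE\|\vecy-\vecx^*\|^2=a_0$. Since the serial algorithm has no perturbation, I would instantiate the identity \eqref{eq:aj_first} with $\hat\vecx_j=\vecx_j$, take expectations, use $\EE_{s_j}\vecv_j=\nabla f(\vecx_j)$ (this is exactly the point of the normalization $\matD=\diag(p_1^{-1},\dots,p_d^{-1})$), and invoke strong convexity in the form $\langle\nabla f(\vecx_j),\vecx_j-\vecx^*\rangle\ge m\|\vecx_j-\vecx^*\|^2$ (using $\nabla f(\vecx^*)=0$) to get $a_{j+1}\le(1-2\gamma m)a_j+\gamma^2\EE\|\vecv_j\|^2$. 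It is worth noting that here one is entitled to the sharper factor $(1-2\gamma m)$ rather than the $(1-\gamma m)$ of the general master recursion \eqref{eq:aj_main}, precisely because $\hat\vecx_j=\vecx_j$; this stronger factor is what makes genuine contraction possible.

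Next I would control the variance term using Lemma~\ref{lem:SVRGvar}: discarding the non-positive quadratic form $-2\nabla f(\vecy)^\top\matD\nabla f(\vecy)$ leaves $\EE\|\vecv_j\|^2\le 2\EE\|\vecg(\vecx_j,s_j)-\vecg(\vecx^*,s_j)\|^2+2\EE\|\vecg(\vecy,s_j)-\vecg(\vecx^*,s_j)\|^2$. Since $\vecg(\cdot,s)=\nabla f_s(\cdot)$ and each term $f_{e_i}$ is $L$-smooth, each summand is at most $L^2$ times the corresponding squared distance, so $\EE\|\vecv_j\|^2\le 2L^2a_j+2L^2D$. Substituting this and $\gamma=\tfrac{1}{4L\kappa}$ — which gives $\gamma m=\tfrac{1}{4\kappa^2}$ and $2\gamma^2L^2=\tfrac{1}{8\kappa^2}$ — yields the affine recursion $a_{j+1}\le\bigl(1-\tfrac{3}{8\kappa^2}\bigr)a_j+\tfrac{1}{8\kappa^2}D$.

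Finally I would unroll this recursion over $p=8\kappa^2$ steps starting from $a_0=D$. Writing $\rho=1-\tfrac{3}{8\kappa^2}$, the forcing constant $\tfrac{1}{8\kappa^2}D$ divided by $1-\rho=\tfrac{3}{8\kappa^2}$ equals $D/3$, so $a_p\le\rho^p D+\tfrac{D}{3}(1-\rho^p)=\tfrac{D}{3}+\tfrac{2}{3}\rho^p D$; and since $\kappa\ge 1$ we have $\rho^p=\bigl(1-\tfrac{3}{8\kappa^2}\bigr)^{8\kappa^2}\le e^{-3}$, hence $a_p\le\bigl(\tfrac13+\tfrac23e^{-3}\bigr)D\le\tfrac34 D$. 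As $\vecy_k$ is exactly the last iterate $\vecx_p$ of the $k$-th epoch, this says $\EE\|\vecy_k-\vecx^*\|^2\le\tfrac34\,\EE\|\vecy_{k-1}-\vecx^*\|^2$, and a straightforward induction over $k$ (applying the tower property across epochs) gives $\EE\|\vecy_k-\vecx^*\|^2\le 0.75^k\,\EE\|\vecy_0-\vecx^*\|^2$. The one delicate point is step one: recognizing that the perturbation-free serial setting permits the $(1-2\gamma m)$ contraction — the weaker $(1-\gamma m)$ version is insufficient to dominate the variance term at this step size — after which matching the constants and observing that the geometric "forcing" term settles at $D/3$ (safely below $\tfrac34 D$) is routine.
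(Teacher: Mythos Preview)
Your proposal is correct and follows essentially the same route as the paper: expand $\|\vecx_{j+1}-\vecx^*\|^2$, use unbiasedness and $m$-strong convexity to get the $(1-2\gamma m)$ factor, bound $\EE\|\vecv_j\|^2$ via Lemma~\ref{lem:SVRGvar} together with the $L$-smoothness of the individual terms, obtain the affine recursion $a_{j+1}\le(1-2\gamma m+2\gamma^2L^2)a_j+2\gamma^2L^2a_0$, and unroll over an epoch of length $8\kappa^2$. The only cosmetic difference is that you keep the sharper contraction factor $1-\tfrac{3}{8\kappa^2}$ and compute the geometric sum exactly, whereas the paper relaxes to $1-\tfrac{1}{4\kappa^2}$ and bounds the sum by the infinite series; both yield the $0.75$ per-epoch contraction.
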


\begin{proof}
We bound the distance to the optimum after one epoch of length $8\kappa^2$:
{\small
\begin{align*}
\EE&\|\vecx_{j+1}-\vecx^*\|^2 
= 
\EE\|\vecx_{j}-\vecx^*\|^2-2\gamma\EE\langle\vecx_j-\vecx^*,\vecv_{j}\rangle+\gamma^2\EE\|\vecv_j\|^2\\
&\le 
\EE\|\vecx_{j} -  \vecx^*\|^2  -  2\gamma\EE \langle \vecx_j - \vecx^*, \nabla f(\vecx_j)\rangle +  2\gamma^2\EE\|\vecg(\vecx_j,s_j)  -  \vecg(\vecx^*,s_j)\|^2 \\ 
&\quad + 2\gamma^2\EE \|\vecg(\vecy,s_j)  -   \vecg(\vecx^*,s_j)\|^2\\
&\le 
\EE\|\vecx_{j}-\vecx^*\|^2-2\gamma\EE\langle\vecx_j-\vecx^*,\nabla f(\vecx_j)\rangle+2\gamma^2L^2\EE\|\vecx_j-\vecx^*\|^2 + 2\gamma^2L^2\EE\|\vecy-\vecx^*\|^2\\
&\le
(1-2\gamma m + 2\gamma^2L^2)\EE\|\vecx_{j}-\vecx^*\|^2 + 2\gamma^2L^2\EE\|\vecy-\vecx^*\|^2.
\end{align*}}

The first inequality follows from Lemma~\ref{lem:SVRGvar} and an application of iterated expectations to obtain $\EE\langle\vecx_j-\vecx^*,\vecv_{j}\rangle  = \EE\langle\vecx_j-\vecx^*,\nabla f(\vecx_j)\rangle$. The second inequality follows from the smoothness of $\vecg(\vecx,s_j)$, and the third inequality follows since $f$ is $m$-strongly convex.

We can rewrite the inequality as  
$a_{j+1}\le (1-2\gamma m +2\gamma^2L^2)a_j+2\gamma^2L^2a_0$, 
because by construction $\vecy = \vecx_0$.
Let $\gamma = \frac{1}{4L\kappa}$.
Then, $1-2\gamma m + 2\gamma^2 L^2 \leq 1-\frac{1}{4\kappa^2}$ and
$$
\sum_{i=0}^j (1-2\gamma m + 2\gamma^2 L^2)^i \le\sum_{i=0}^j \left(1-\sfrac{1}{4\kappa^2}\right)^i \leq \sum_{i=0}^\infty \left(1-\sfrac{1}{4\kappa^2}\right)^i = 4\kappa^2,
$$
since $\frac{1}{4\kappa^2}\le \frac{1}{4}$.
Therefore
\begin{align*}
a_{j+1}&\le \left(1-\sfrac{1}{4\kappa^2}\right)a_j+2\gamma^2L^2a_0
\le \left(1-\sfrac{1}{4\kappa^2}\right)^{j+1}a_0+\sum_{i=0}^{j-1}\left(1-\sfrac{1}{4\kappa^2}\right)^i \cdot 2\gamma^2L^2a_0\\ 
&\le \left(1-\sfrac{1}{4\kappa^2}\right)^{j+1}a_0+4\kappa^2\gamma^2L^2a_0 = \left[\left(1-\sfrac{1}{4\kappa^2}\right)^{j+1}+\sfrac{1}{4}\right]a_0.
\end{align*}
Setting the length of an epoch to be $j = 2\cdot (4\kappa^2)$ gives us $a_{j+1} \le (1/2+1/4)\cdot a_0 = 0.75 \cdot a_0$, and the conclusion follows. \qquad
\end{proof}

We thus obtain the following convergence rate result: 
\begin{theorem}
Sparse SVRG, with step size $\gamma = \ct \frac{1}{L\kappa}$ and epoch size $S = \ct \kappa^2$, reaches accuracy $\EE \|\vecy_E - \vecx^*\|^2 \leq \epsilon$ after 
$
E =\ct \log\left(\sfrac{a_0}{\epsilon}\right)
$
 epochs, where $\vecy_E$ is the last iterate of the final epoch, and $a_0 = \|\vecx_0 - \vecx^*\|^2$ is the initial distance squared to the optimum. 
\label{theo:serial_svrg}
\end{theorem}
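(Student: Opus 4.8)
\textbf{Proof proposal for Theorem~\ref{theo:serial_svrg}.}
The plan is to read off the statement as an immediate corollary of the preceding per-epoch contraction lemma, so the argument is essentially bookkeeping. First I would invoke that lemma with the stated parameter choices $\gamma = \frac{1}{4L\kappa}$ and epoch length $S = 8\kappa^2$, which gives the one-epoch bound $\EE\|\vecy_{k+1} - \vecx^*\|^2 \le 0.75\,\EE\|\vecy_k - \vecx^*\|^2$. Since each epoch is a self-contained run of serial sparse SVRG whose initial iterate is the final iterate of the previous epoch (the lemma uses $\vecy = \vecx_0$ only within a single epoch), I can chain the bound across epochs by induction on $k$: after $E$ epochs we obtain $\EE\|\vecy_E - \vecx^*\|^2 \le 0.75^{E}\,\EE\|\vecy_0 - \vecx^*\|^2 = 0.75^{E} a_0$.

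It then remains to choose $E$ so that $0.75^{E} a_0 \le \epsilon$. Taking logarithms, this holds as soon as $E \ge \log(a_0/\epsilon)/\log(4/3)$, and since $1/\log(4/3)$ is an absolute constant, this is exactly $E = \ct \log(a_0/\epsilon)$ as claimed. The step sizes and epoch lengths $\gamma = \ct\,\frac{1}{L\kappa}$, $S = \ct\,\kappa^2$ in the theorem statement are simply the constants $\frac14$ and $8$ from the lemma absorbed into the $\mathcal{O}(1)$ notation; I would remark that nothing changes if one uses any epoch length that is a large enough constant multiple of $\kappa^2$, since the lemma's computation only needs $1 - 2\gamma m + 2\gamma^2 L^2 \le 1 - \frac{1}{4\kappa^2}$ together with a geometric-series bound.

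I do not expect any genuine obstacle here: the technical content (the variance bound of Lemma~\ref{lem:SVRGvar}, the $m$-strong-convexity and smoothness manipulations, and the geometric-series estimate yielding the $0.75$ contraction) has already been carried out in establishing the per-epoch lemma. The only point worth stating carefully is that the per-epoch recursion is applied to the \emph{iterates} $\vecy_k$ rather than to function values, which is consistent with the perturbed-iterate viewpoint of \S\ref{section: noisy input stochastic gradients} and is precisely why the preceding lemma was phrased for $\EE\|\vecy_k - \vecx^*\|^2$ in the first place.
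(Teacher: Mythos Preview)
Your proposal is correct and matches the paper's approach: the theorem is stated immediately after the per-epoch contraction lemma with the words ``We thus obtain the following convergence rate result,'' and no separate proof is given, since the lemma already yields $\EE\|\vecy_k-\vecx^*\|^2 \le 0.75^k\,\EE\|\vecy_0-\vecx^*\|^2$ and one simply solves $0.75^{E}a_0\le\epsilon$ for $E$. Your write-up is slightly more explicit than the paper (which absorbs the chaining into the lemma statement itself), but the content is identical.
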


\subsection{\CM{}: Asynchronous Parallel Sparse SVRG}
We now present an asynchronous implementation of sparse SVRG.  This implementation, which we
refer to as \CM{}, is given in Algorithm~\ref{algo:KM}.
\begin{center}
%\begin{minipage}{0.75\columnwidth}
\begin{algorithm}[h]
   \caption{\CM{}}
\begin{algorithmic}[1]
\STATE $\vecx = \vecy = \vecx_0$
\FOR{Epoch = $1:E$}
\STATE Compute in parallel ${\bf z} = \nabla f(\vecy)$
\PWHILE{ number of sampled hyperedges $\le S$}
\STATE sample a random hyperedge $s$
\STATE $[\hat\vecx]_s$ = an inconsistent read of the shared variable $[\vecx]_s$ 
\STATE $[{\bf u}]_s = -\gamma\cdot \left(\nabla f_{s}([\vecx]_s)-\nabla f_{s}([\vecy]_s)- \matD_{s} {\bf z}\right) $ 
\FOR{$v\in s$}
\STATE $[\vecx]_v = [\vecx]_v +[{\bf u}]_v$ \hfill{\color{gray}// atomic write}
\ENDFOR
\ENDPWHILE
\STATE $\vecy = \vecx$
\ENDFOR
\end{algorithmic}
   \label{algo:KM}
 \end{algorithm}
%\end{minipage}
\end{center}
Let $\vecv(\hat\vecx_j,s_j)=\vecg(\hat\vecx_j, s_j) - \vecg(\vecy, s_j) + \matD_{s_j}\nabla f(\vecy)$ be the noisy gradient update vector.
Then, after processing a total of $T$ hyperedges, the shared memory contains:
\begin{equation}
\underbrace{\overbrace{\vecx_0 - \gamma \vecv(\hat\vecx_0, s_0) }^{\vecx_1} - \ldots - \gamma \vecv(\hat\vecx_{T - 1},s_{T-1})}_{\vecx_{T}}.
\end{equation}
We now define the perturbed iterates as   
$
\vecx_{i+1} = \vecx_i - \gamma \vecv(\hat\vecx_i, s_i)
$
for $i = 0,1,..., T-1$, where $s_i$ is the $i$-th uniformly sampled hyperedge.
Since $\EE \vecv(\hat\vecx_j, s_j) = \nabla f(\vecx_j)$, \CM{} also satisfies recursion \eqref{eq:aj_main}:
\begin{align*}
a_{j+1}\le(1-\gamma m) a_j+\gamma^2 \underbrace{\EE\|\vecv(\hat\vecx_j,s_j)\|^2}_{R_0^j} + 2\gamma m \underbrace{\EE\|\hat\vecx_j-\vecx_j\|^2}_{R_1^j}  +2\gamma\underbrace{\EE\langle\hat\vecx_j-\vecx_j,\vecv(\hat\vecx_j,s_j)\rangle}_{R_2^j}.
\end{align*}
To prove the convergence of \CM{} we follow the line of reasoning presented in the previous section. Most of the arguments used here come from a straightforward generalization of the analysis of ASCD.
The main result of this section is given below.
\begin{theorem}
Let the maximum number of samples that can overlap in time with a single sample be bounded as
$$\tau = \mathcal{O}\left(\min\left\{\frac{\kappa}{\log\left(\frac{M^2}{L^2\epsilon}\right)}, \sqrt[6]{\frac{n}{\cdeg}}\right\}\right).$$
Then, \CM{}, with step size  $\gamma =  \ct\frac{1}{L\kappa}$ and epoch size $S = \ct \kappa^2$, attains $\EE \|\vecy_E - \vecx^*\|^2 \leq \epsilon$ after 
$
E =\ct \log\left(\sfrac{a_0}{\epsilon}\right)
$
epochs, where $\vecy_E$ is the last iterate of the final epoch, and $a_0 = \|\vecx_0 - \vecx^*\|^2$ is the initial distance squared to the optimum. 
\label{theo:svrg}
\end{theorem}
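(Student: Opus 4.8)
The plan is to run the argument epoch by epoch, reducing the claim to showing that within a single epoch \CM{} obeys — up to absolute constants and a small additive ``noise floor'' — the same contraction that serial sparse SVRG obeys in the proof of Theorem~\ref{theo:serial_svrg}. Fix an epoch, write $\vecy = \vecx_0$ for its anchor and $a_0 = \EE\|\vecy - \vecx^*\|^2$ for the distance to the optimum at the start of the epoch. As already observed, the perturbed iterates $\vecx_{i+1} = \vecx_i - \gamma\vecv(\hat\vecx_i,s_i)$ satisfy Eq.~\eqref{eq:aj_main} with $R_0^j = \EE\|\vecv(\hat\vecx_j,s_j)\|^2$, $R_1^j = \EE\|\hat\vecx_j - \vecx_j\|^2$ and $R_2^j = \EE\langle\hat\vecx_j - \vecx_j, \vecv(\hat\vecx_j,s_j)\rangle$, using Assumption~\ref{asm:ind} to justify the iterated-expectation step $\EE\langle\hat\vecx_j - \vecx^*, \vecv(\hat\vecx_j,s_j)\rangle = \EE\langle\hat\vecx_j - \vecx^*, \nabla f(\hat\vecx_j)\rangle$. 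Everything then reduces to bounding these three error terms, which I would do by mirroring the ASCD analysis with the coordinate-sampling factor $d$ replaced by $1$ and the ``$1/d$ sparsity gain'' replaced by ``$\cdeg/n$''.

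For $R_0^j$ I would first extend Lemma~\ref{lem:SVRGvar} to the read iterate $\hat\vecx_j$ — its proof only uses that $\vecg(\cdot,s_j)$ is supported on $s_j$ together with iterated expectations, so it carries over verbatim — giving $R_0^j \le 2\EE\|\vecg(\hat\vecx_j,s_j) - \vecg(\vecx^*,s_j)\|^2 + 2\EE\|\vecg(\vecy,s_j) - \vecg(\vecx^*,s_j)\|^2$. The $L$-smoothness of the $f_{e_i}$ and a triangle-inequality split of $\hat\vecx_j - \vecx^*$ bound this by $\ct\,L^2(a_j + a_0 + R_1^j)$, the SVRG analogue of Lemma~\ref{lemma: shifted grad bound ascd} (now with no $d$). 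Next, exactly as in Eq.~\eqref{eq: shift decomposition ascd}, Assumption~\ref{asm:tau} lets me write $\hat\vecx_k - \vecx_j = \sum_{i\in\setS_{r+1}^j}\sigma_{i,k}^j\gamma\vecv(\hat\vecx_i,s_i)$ for $k\in\setS_r^j$, so with $G_r = \max_{k\in\setS_r^j}\EE\|\vecv(\hat\vecx_k,s_k)\|^2$ and $\Delta_r = \max_{k\in\setS_r^j}\EE\|\hat\vecx_k - \vecx_j\|^2$ I obtain the coupled recursions $\Delta_r \le (3\gamma\tau(r+1))^2 G_{r+1}$ and $G_r \le \ct\,L^2(a_j + a_0 + \Delta_r)$, the counterpart of Lemma~\ref{lemma:recursions ascd}. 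Unrolling these $\ell$ times — which contracts provided $\tau \le \ct\,\kappa/\ell$ and $\gamma = \ct/(L\kappa)$, since then $(3\gamma\tau(r+1))^2 L^2 \lesssim \tau^2/\kappa^2 < 1$ — and truncating the residual with the uniform bound $\|\vecv_j\| \le M$ yields the analogue of Lemma~\ref{lemma:R01 bound ascd}: $R_0^j \le \ct(L^2 a_j + L^2 a_0 + \theta^{2\ell}M^2)$ and $R_1^j \le \ct(\theta^2(a_j + a_0) + \theta^{2\ell}M^2/L^2)$.

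The term $R_2^j = \EE\langle\hat\vecx_j - \vecx_j, \vecv(\hat\vecx_j,s_j)\rangle$ is the crux, and I expect it to be the main obstacle. As in ASCD, bounding it by Cauchy--Schwarz loses a factor that would cost a $\sqrt{n/\cdeg}$ slowdown; instead one must exploit that $\vecv(\hat\vecx_j,s_j)$ is supported on the hyperedge $s_j$, so that only the overlapping samples $s_i$ (those with $s_i\cap s_j\neq\emptyset$, an event of probability at most $2\cdeg/n$ for sampling with replacement) contribute to the inner product. Expanding $\hat\vecx_j - \vecx_j$ over the at most $2\tau$ overlapping indices, controlling the magnitudes through the $G_r$ recursion above, and paying one factor $\cdeg/n$ per cross term, gives — provided $\tau = \mathcal{O}(\sqrt[6]{n/\cdeg})$ — the analogue of Lemma~\ref{lemma:R2 ascd bound}: $R_2^j \le \ct(\theta m(a_j + a_0) + \theta^{2\ell}M^2/(L\kappa))$. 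The extra bookkeeping relative to ASCD is that $\vecv(\hat\vecx_i,s_i) = \vecg(\hat\vecx_i,s_i) - \vecg(\vecy,s_i) + \matD_{s_i}\nabla f(\vecy)$ is a three-term object rather than a single scaled coordinate; each of the three pieces is still supported on $s_i$ (since $\matD_{s_i} = \matP_{s_i}\matD$), so the same intersection-probability argument applies, but all three must be carried through and the variance bound re-invoked to control their sizes, which is where the argument becomes technical.

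Finally I would assemble everything. Plugging the three bounds into $a_{j+1} \le (1-\gamma m)a_j + \gamma^2 R_0^j + 2\gamma m R_1^j + 2\gamma R_2^j$ and taking $\theta$ a sufficiently small absolute constant, the terms proportional to $a_j$ are absorbed into a factor $(1 - \ct\gamma m)$, the remaining $a_j$-free terms are a constant multiple of $\gamma^2 L^2 a_0$, and the additive floor $\delta(\gamma)$ satisfies $\sum_{i=0}^{S}(1-\ct\gamma m)^i\delta(\gamma) \le \ct\,\theta^{2\ell}M^2/L^2$. With $\gamma = \ct/(L\kappa)$ and epoch length $S = \ct\kappa^2$ this reproduces, up to constants, the serial sparse SVRG epoch contraction behind Theorem~\ref{theo:serial_svrg}: $\EE\|\vecy_{k+1} - \vecx^*\|^2 \le c'\,\EE\|\vecy_k - \vecx^*\|^2 + \ct\,\theta^{2\ell}M^2/L^2$ for some absolute $c' < 1$. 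Choosing $\ell = \ct\log(M^2/(L^2\epsilon))$ drives the additive term below $\epsilon$ (which also certifies it is a constant fraction of the per-epoch starting distance for every epoch) and pins down $\tau = \mathcal{O}(\kappa/\ell) = \mathcal{O}(\kappa/\log(M^2/(L^2\epsilon)))$, which together with $\tau = \mathcal{O}(\sqrt[6]{n/\cdeg})$ gives the stated hypothesis; iterating the epoch contraction $E = \ct\log(a_0/\epsilon)$ times yields $\EE\|\vecy_E - \vecx^*\|^2 \le \epsilon$, exactly as in Theorem~\ref{theo:ASCD}.
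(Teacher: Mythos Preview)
Your proposal is correct and follows essentially the same route as the paper: establish the SVRG analogue of Lemma~\ref{lemma: shifted grad bound ascd} via Lemma~\ref{lem:SVRGvar}, derive the coupled $G_r/\Delta_r$ recursions (Lemma~\ref{lemma:recursions svrg}), unroll them $\ell$ times under $\tau\le\kappa/\ell$ to bound $R_0^j$ and $R_1^j$, handle $R_2^j$ by the sparsity/intersection-probability argument of Lemma~\ref{lemma:R2 ascd bound} (now with $\cdeg/n$ in place of $1/d$, hence the $\sqrt[6]{n/\cdeg}$ condition), and then assemble into an epoch recursion with additive floor $\ct\theta^{2\ell}M^2/L^2$, choosing $\ell=\ct\log(M^2/(L^2\epsilon))$. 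Two small remarks: for $R_2^j$ the paper does not split $\vecv$ into its three pieces but applies $L$-smoothness directly to $\vecv(\cdot,s_j)$ after shifting both iterates to the sample-independent point $\hat\vecx_{j-3\tau}$ (this shifting step, which decouples the indicator from the norms, is the real crux you should make explicit); and the sparsity gain that actually emerges is $\sqrt{\cdeg/n}$ via Cauchy--Schwarz, not $\cdeg/n$, which is exactly what forces $\tau=\mathcal{O}(\sqrt[6]{n/\cdeg})$.
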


We would like to note that the total number of iterations in the above bound is---up to a universal constant---the same as that of serial sparse SVRG as presented in Theorem~\ref{theo:serial_svrg}.
Again, as with \HW{} and ASCD, this implies a linear speedup.

Similar to our ASCD analysis, we remark that between the two bounds on $\tau$, the second one is the more restrictive.
The first one is, up to logarithmic factors, equal to the square root of the total number of iterations per epoch; we expect that the size of the epoch is proportional to $n$, the number of function terms (or data points).
This suggests that the first bound is proportional to $\tilde{\mathcal{O}}(\sqrt{n})$ for most reasonable applications.
Moreover, the second bound is certainly loose; we argue that it can be tightened using a more refined analysis.

\subsection{Proof of Theorem~\ref{theo:svrg}}
It is easy to see that due to Lemma~\ref{lem:SVRGvar} we get the following bound on the norm of the gradient estimate.
\begin{lemma}
\label{lemma: shifted grad bound svrg} 
For any $k$ and $j$ we have
\begin{equation}
\label{eq:shifted grad bound svrg}
\EE\left\|\vecv(\hat\vecx_{k},s_k)\right\|^2 \leq 4L^2\left( a_j + a_0+ \EE\|\vecx_j -\hat{\vecx}_k\|^2\right).
\end{equation}
\end{lemma}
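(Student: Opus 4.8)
The claim is just the computation of Lemma~\ref{lem:SVRGvar} re-run with the read iterate $\hat\vecx_k$ and the sample $s_k$ in place of $\vecx_j$ and $s_j$, followed by the triangle inequality. First I would split
\begin{equation*}
\vecv(\hat\vecx_k,s_k) = \bigl(\vecg(\hat\vecx_k,s_k) - \vecg(\vecx^*,s_k)\bigr) + \bigl(\vecg(\vecx^*,s_k) - \vecg(\vecy,s_k) + \matD_{s_k}\nabla f(\vecy)\bigr)
\end{equation*}
and apply $\|a+b\|^2 \le 2\|a\|^2 + 2\|b\|^2$ to get
\begin{equation*}
\EE\|\vecv(\hat\vecx_k,s_k)\|^2 \le 2\,\EE\|\vecg(\hat\vecx_k,s_k) - \vecg(\vecx^*,s_k)\|^2 + 2\,\EE\|\vecg(\vecy,s_k) - \vecg(\vecx^*,s_k) - \matD_{s_k}\nabla f(\vecy)\|^2 .
\end{equation*}
The second expectation is handled verbatim as in the proof of Lemma~\ref{lem:SVRGvar}: expand the square, use that $\vecg(\cdot,s_k)$ is supported on $s_k$ so that $\matD_{s_k}$ may be replaced by $\matD$ inside the cross term, and apply iterated expectations with $\EE_{s_k}\vecg(\vecy,s_k)=\nabla f(\vecy)$, $\EE_{s_k}\vecg(\vecx^*,s_k)=\nabla f(\vecx^*)=0$, and $\EE_{s_k}\|\matD_{s_k}\nabla f(\vecy)\|^2=\nabla f(\vecy)^\top\matD\nabla f(\vecy)$. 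This produces $\EE\|\vecg(\vecy,s_k)-\vecg(\vecx^*,s_k)\|^2 - \nabla f(\vecy)^\top\matD\nabla f(\vecy)$; note that this block does not involve $\hat\vecx_k$ at all, so in fact no independence assumption on $\hat\vecx_k$ is needed here.

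Next I would discard the non-positive quadratic form $-\nabla f(\vecy)^\top\matD\nabla f(\vecy)$ (recall $\matD=\diag(p_1^{-1},\dots,p_d^{-1})$ is positive definite) and invoke $L$-smoothness of the individual terms, $\|\vecg(\vecx,s)-\vecg(\vecx^*,s)\|=\|\nabla f_s(\vecx)-\nabla f_s(\vecx^*)\|\le L\|\vecx-\vecx^*\|$, pointwise in the realization. This gives $\EE\|\vecv(\hat\vecx_k,s_k)\|^2 \le 2L^2\,\EE\|\hat\vecx_k-\vecx^*\|^2 + 2L^2\,\EE\|\vecy-\vecx^*\|^2$. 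Finally I would use $\|\hat\vecx_k-\vecx^*\|^2 \le 2\|\hat\vecx_k-\vecx_j\|^2 + 2\|\vecx_j-\vecx^*\|^2$ and the fact that within an epoch $\vecy=\vecx_0$, so $\EE\|\vecy-\vecx^*\|^2 = a_0$, and collect terms:
\begin{equation*}
\EE\|\vecv(\hat\vecx_k,s_k)\|^2 \le 4L^2 a_j + 4L^2\,\EE\|\vecx_j-\hat\vecx_k\|^2 + 2L^2 a_0 \le 4L^2\bigl(a_j + a_0 + \EE\|\vecx_j-\hat\vecx_k\|^2\bigr),
\end{equation*}
which is exactly \eqref{eq:shifted grad bound svrg}.

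There is no genuine obstacle in this lemma — the only point that warrants care is checking that the cross-term argument of Lemma~\ref{lem:SVRGvar} transfers unchanged, which it does because the pieces $\vecg(\vecy,\cdot)$ and $\matD_{s_k}\nabla f(\vecy)$ depend only on the fresh sample $s_k$ and on the epoch anchor $\vecy$ (fixed throughout the epoch). The lemma is deliberately stated for \emph{any} $k$ and $j$, keeping $a_j$ and $\EE\|\vecx_j-\hat\vecx_k\|^2$ as free handles; this is what lets the subsequent recursion — mirroring the ASCD argument, with $R_0^j=\EE\|\vecv(\hat\vecx_j,s_j)\|^2$ played off against $\max_{k\in\setS_r^j}\EE\|\hat\vecx_k-\vecx_j\|^2$ over the window $\setS_r^j$ — close.
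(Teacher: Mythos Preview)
Your proposal is correct and follows essentially the same route as the paper: invoke the variance bound of Lemma~\ref{lem:SVRGvar} (which you re-derive rather than cite), drop the nonpositive quadratic form, apply $L$-smoothness of the $f_{e_i}$ to get $\EE\|\vecv(\hat\vecx_k,s_k)\|^2 \le 2L^2\EE\|\hat\vecx_k-\vecx^*\|^2 + 2L^2\EE\|\vecy-\vecx^*\|^2$, then use $\vecy=\vecx_0$ and the triangle inequality on $\|\hat\vecx_k-\vecx^*\|^2$. Your extra observation that the second block does not involve $\hat\vecx_k$, so no independence assumption is needed there, is a nice clarification the paper leaves implicit.
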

\begin{proof}
Due to Lemma~\ref{lem:SVRGvar} we have $\EE\|\vecv(\hat\vecx_j,s_j)\|^2 \le 2L^2\EE\|\hat\vecx_j-\vecx^*\|^2+2L^2\EE\|\vecy - \vecx^*\|^2$.
Then, using the fact that $\vecy = \vecx_0$  and applying the triangle inequality, we obtain the result. \qquad
\end{proof}

The set $\setS_{r}^j$ is defined as in the previous section:
$ \setS_r^j = \{\max\{j-r\tau,0\},\ldots,j -1, j, j +1, \ldots, \min\{j+r\tau,T\}\}$, and has cardinality at most $2r\tau+1$.
By Assumption~\ref{asm:tau}, there exist diagonal sign matrices $\matS_i^j$ with diagonal entries in $\{-1,0,1\}$ such that 
\begin{equation}
\label{eq: shift decomposition svrg}
\hat\vecx_k - \vecx_j = \gamma\sum_{i \in \setS^j_{\ell+1}} {\bf S}_i^j \vecv(\hat\vecx_i, s_i). 
\end{equation}
This leads to the following lemma.
\begin{lemma}
\label{lemma:recursions svrg}
If $G_r = \max_{k\in\setS_r^j}\EE\left\|\vecv(\hat\vecx_{k},s_k)\right\|^2$ and $\Delta_r = \max_{k \in \setS^j_r} \EE \|\hat\vecx_k - \vecx_j\|^2$, %then (SHORT)
\begin{equation}
G_r \leq 4L^2\left( a_j +a_0  +\Delta_r \right) \text{ and } \Delta_r \leq (3\gamma\tau(r+1))^2 G_{r+1}.
\end{equation}
\end{lemma}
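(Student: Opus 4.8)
The plan is to reuse verbatim the structure of the proof of Lemma~\ref{lemma:recursions ascd}: the only change relative to ASCD is that the sparse stochastic gradient $\vecg(\hat\vecx_k,s_k)$ is replaced by the variance-reduced update $\vecv(\hat\vecx_k,s_k)$, and its second moment is now controlled by Lemma~\ref{lemma: shifted grad bound svrg} rather than by Lemma~\ref{lemma: shifted grad bound ascd}. Both claimed inequalities then fall out of (a) that moment bound and (b) the mismatch decomposition \eqref{eq: shift decomposition svrg}.

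For the first inequality, Lemma~\ref{lemma: shifted grad bound svrg} states $\EE\|\vecv(\hat\vecx_k,s_k)\|^2 \le 4L^2\left(a_j + a_0 + \EE\|\vecx_j - \hat\vecx_k\|^2\right)$ for every $k$ (the constant $a_0$ entering because $\vecy = \vecx_0$, which is already absorbed into that lemma). Restricting $k$ to $\setS_r^j$, bounding $\EE\|\vecx_j-\hat\vecx_k\|^2 \le \Delta_r$ on the right-hand side, and then taking the maximum over $k\in\setS_r^j$ on the left gives $G_r \le 4L^2(a_j + a_0 + \Delta_r)$. For the second inequality, fix $k\in\setS_r^j$; by Assumption~\ref{asm:tau} and \eqref{eq: shift decomposition svrg}, $\hat\vecx_k - \vecx_j = \gamma\sum_{i\in\setS_{r+1}^j}\matS_i^j\vecv(\hat\vecx_i,s_i)$ where each $\matS_i^j$ is diagonal with entries in $\{-1,0,1\}$ and hence norm-nonincreasing. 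Applying Jensen's inequality (equivalently Cauchy--Schwarz) across the index set $\setS_{r+1}^j$,
\begin{align*}
\EE\|\hat\vecx_k - \vecx_j\|^2
&\le \gamma^2\,\EE\biggl\{|\setS_{r+1}^j|\sum_{i\in\setS_{r+1}^j}\|\vecv(\hat\vecx_i,s_i)\|^2\biggr\}
\le \gamma^2\,|\setS_{r+1}^j|^2 \max_{i\in\setS_{r+1}^j}\EE\|\vecv(\hat\vecx_i,s_i)\|^2.
\end{align*}
Using $|\setS_{r+1}^j| \le 2(r+1)\tau + 1 \le 3(r+1)\tau$ (valid since $(r+1)\tau \ge 1$) and then taking the maximum over $k\in\setS_r^j$ yields $\Delta_r \le (3\gamma\tau(r+1))^2 G_{r+1}$.

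I do not expect a genuine obstacle here, as the argument is a routine transcription of the ASCD case. The only points requiring care are: (i) the sign matrices $\matS_i^j$ must be noted to not inflate Euclidean norms; (ii) the index set in \eqref{eq: shift decomposition svrg} must be $\setS_{r+1}^j$ (the statement's subscript $\ell+1$ should read $r+1$), so that $\Delta_r$ is bounded in terms of $G_{r+1}$ and not circularly in terms of itself; and (iii) the extra $a_0$ in $G_r$ relative to the ASCD lemma is exactly the price of variance reduction, already accounted for in Lemma~\ref{lemma: shifted grad bound svrg} via $\vecy = \vecx_0$.
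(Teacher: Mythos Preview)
Your proposal is correct and matches the paper's own proof exactly: the paper simply invokes Lemma~\ref{lemma: shifted grad bound svrg} for the bound on $G_r$ and states that the bound on $\Delta_r$ is identical to the proof of Lemma~\ref{lemma:recursions ascd}. Your observation that the subscript $\ell+1$ in \eqref{eq: shift decomposition svrg} is a typo for $r+1$ is also correct.
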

\begin{proof}
The proof for the bound on $\Delta_r$ is identical to the proof of Lemma~\ref{lemma:recursions ascd}. We then use Lemma~\ref{lemma: shifted grad bound svrg} to bound $\EE \left\|\vecv(\hat\vecx_{k},s_k)\right\|^2$. \qquad 
\end{proof}

As explained in the remark after Lemma~\ref{lemma:recursions ascd}, it should be possible to improve $\tau^2$ to $\tau$ in the upper bound on $\Delta_r$. Doing so would improve the condition $\tau = \mathcal{O}(\sqrt[6]{\sfrac{n}{\cdeg}})$ of Theorem~\ref{theo:svrg} to $\tau = \mathcal{O}(\sqrt[4]{\sfrac{n}{\cdeg}})$. One possible approach to this problem can be found in \S~\ref{sec:svrgR2} of the Appendix.

We can now obtain bounds on the errors due to asynchrony. 
The proofs for the following two lemmas can be found in Appendix~\ref{sec:svrg_proofs}.
\begin{restatable}{lemma}{svrgRzeroone}
Suppose $\tau \leq \frac{\kappa}{\ell}$ and $\gamma = \frac{\theta}{12L\kappa}$. Then the error terms $R_0^j$ and $R_1^j$ of \CM{} satisfy the following inequalities:
\begin{equation}
R_0^j \leq \ct\left(L^2(a_j +a_0)+ \theta^{2\ell}M^2\right)\;\;\text{ and }\;\;R_1^j \leq \ct\left(\theta^2(a_j + a_0) + \theta^{2\ell}\sfrac{M^2}{L^2}\right).
\nonumber
\end{equation}
\end{restatable}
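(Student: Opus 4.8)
The plan is to mimic the ASCD argument essentially verbatim, exploiting Lemma~\ref{lemma:recursions svrg} in place of Lemma~\ref{lemma:recursions ascd}. First I would expand the recursion $G_r \le 4L^2(a_j + a_0 + \Delta_r)$ and $\Delta_r \le (3\gamma\tau(r+1))^2 G_{r+1}$ into one another, so that $\Delta_0$ gets bounded by a geometric-type series whose terms involve $(3\gamma\tau(r+1))^2 \cdot 4L^2$ compounded over $r = 0,1,\ldots,\ell-1$, and a tail term that uses the crude uniform bound $G_{\ell} \le M^2$ (valid because $\|\vecv(\hat\vecx_k,s_k)\|\le M$ by assumption). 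With the stated choice $\gamma = \frac{\theta}{12L\kappa}$ and the hypothesis $\tau \le \frac{\kappa}{\ell}$, each factor $(3\gamma\tau(r+1))^2 \cdot 4L^2 = 36\gamma^2\tau^2 L^2 (r+1)^2 = \frac{\theta^2\tau^2(r+1)^2}{4\kappa^2}$ is at most $\frac{\theta^2(r+1)^2}{4\ell^2}\le \const\,\theta^2$ for $r<\ell$, so the product over $r$ telescopes down to a $\theta^{2\ell}$ factor on the tail and the $a_j+a_0$ contributions sum to a geometric series in $\theta^2$ that is $\le \const$. This yields $R_1^j = \Delta_0 \le \const(\theta^2(a_j+a_0) + \theta^{2\ell} M^2/L^2)$, and then plugging this back into $G_0 \le 4L^2(a_j+a_0+\Delta_0)$ gives $R_0^j \le \const(L^2(a_j+a_0) + \theta^{2\ell}M^2)$.

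Concretely, I would define $c_r = (3\gamma\tau(r+1))^2$ and show by backward induction on $r$ (from $r=\ell$ down to $r=0$) that $\Delta_r \le \const\big(\theta^{2}(a_j+a_0)\sum_{q\ge 0}(\const\theta^2)^q + (\const\theta^2)^{\ell-r} M^2/L^2\big)$, using at each step $G_{r+1}\le 4L^2(a_j+a_0+\Delta_{r+1})$ and the bound $c_r\cdot 4L^2 \le \const\theta^2$. The base case is the uniform bound $\Delta_\ell \le \const \gamma^2\tau^2\ell^2 \cdot dM^2$-type estimate — here one must be slightly careful: since $\vecv$ is a hyperedge gradient rather than a single coordinate, the ``$d$'' of the ASCD proof is absent and one simply uses $\Delta_\ell \le (3\gamma\tau(\ell+1))^2 M^2 \le \const M^2/L^2$ after absorbing constants (the choice $\tau\le\kappa/\ell$ makes $\gamma\tau\ell \le \const/L$). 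After the induction closes at $r=0$, I substitute once more into the $G_0$ bound.

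The main obstacle — though it is bookkeeping rather than conceptual — is tracking the constants and powers of $\theta$ through the unrolled recursion so that exactly $\theta^{2\ell}$ (not $\theta^{2\ell-2}$ or similar) emerges on the noise floor, while ensuring every intermediate factor $c_r\cdot 4L^2$ is genuinely bounded by a constant times $\theta^2$; this is where the hypotheses $\tau\le\kappa/\ell$ and $\theta\le 1$ are both used, and one needs $\ell\ge 1$ so the geometric series $\sum_q (\const\theta^2)^q$ converges once $\const\theta^2<1$, which we may assume by taking $\theta$ a sufficiently small absolute constant (exactly as in the ASCD ``putting it all together'' step). A secondary subtlety is that $\Delta_r$ and $G_r$ implicitly depend on $j$, but since all bounds are uniform in $j$ and only reference $a_j$ and $a_0$, the recursion is self-contained at each fixed $j$ and no global coupling across $j$ is needed. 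I would also double-check that the extra $+a_0$ term (absent in ASCD, present here because $\vecy=\vecx_0$ contributes to the variance bound in Lemma~\ref{lem:SVRGvar}) propagates cleanly — it does, since it rides along with $a_j$ through every inequality and simply appears as $a_j + a_0$ throughout.
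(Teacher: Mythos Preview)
Your proposal is correct and follows essentially the same route as the paper: both unroll the coupled recursions $G_r \le 4L^2(a_j+a_0+\Delta_r)$ and $\Delta_r \le (3\gamma\tau(r+1))^2 G_{r+1}$ from Lemma~\ref{lemma:recursions svrg} for $\ell$ steps, use the parameter choices $\gamma=\theta/(12L\kappa)$ and $\tau\le\kappa/\ell$ to make each compounded factor at most $\theta^2(r+1)^2/(4\ell^2)\le\theta^2/4$, and terminate with the uniform bound $G_\ell\le M^2$. The only cosmetic differences are that the paper writes the unrolling explicitly with factorials (setting $A=4L^2(a_j+a_0)$, $B=4L^2$, $C=(\gamma\tau)^2$ and expanding $G_0\le A\sum_{i=0}^{\ell-1}(3^i i!)^2(BC)^i + (3^\ell\ell!)^2(BC)^\ell G_\ell$) rather than phrasing it as backward induction, and the paper bounds $R_0^j=G_0$ first and then $R_1^j=\Delta_0$, whereas you do the reverse and recover $R_0^j$ by plugging $\Delta_0$ back into $G_0$; either order works.
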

Similarly to the ASCD derivations, we obtain the following bound for $R_2^j$.
\begin{restatable}{lemma}{svrgRtwo}
\label{lemma:R2 svrg bound}
Suppose $\tau \leq \frac{\kappa}{\ell}$ and $\tau = \mathcal{O}\left(\sqrt[6]{\frac{n}{\cdeg}}\right)$, and let $\gamma = \frac{\theta}{12L\kappa}$. Then,
% the error term $R_2^j$ of \CM{} satisfies the following inequality:
$$
R_2^j \leq \ct\left(\theta \cdot m \cdot(a_j + a_0) + \theta^{2\ell}\frac{M^2}{L\kappa}\right).
$$
\end{restatable}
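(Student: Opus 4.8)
The Cauchy--Schwarz bound $R_2^j\le\sqrt{R_0^jR_1^j}$ is too weak because it ignores that the update $\vecv(\hat\vecx_j,s_j)$ is supported only on the hyperedge $s_j$. The plan is to reuse, almost verbatim, the argument behind Lemma~\ref{lemma:R2 ascd bound} for ASCD: the single-coordinate update there becomes a hyperedge-supported update, so two updates interact only when their hyperedges \emph{overlap} (rather than \emph{coincide}), and the event $\{s_i=s_j\}$ of probability $1/d$ is replaced by $\{s_i\cap s_j\ne\emptyset\}$, which by Lemma~\ref{lemma: bound extra noise} has probability at most $2\cdeg/n$. In effect, the role of $d$ in the ASCD analysis is played by $n/\cdeg$ here.

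Concretely, I would first use Assumption~\ref{asm:tau} (equivalently the decomposition \eqref{eq: shift decomposition svrg} restricted to $k=j$) to write $\hat\vecx_j-\vecx_j=\gamma\sum_{i\in\setS_1^j,\,i\ne j}\matS_i^j\vecv(\hat\vecx_i,s_i)$, so that $R_2^j=\gamma\sum_{i\in\setS_1^j,\,i\ne j}\EE\langle\matS_i^j\vecv(\hat\vecx_i,s_i),\vecv(\hat\vecx_j,s_j)\rangle$. Since $\vecv(\hat\vecx_i,s_i)$ is supported on $s_i$ and $\vecv(\hat\vecx_j,s_j)$ on $s_j$, the $i$-th inner product is carried entirely by the coordinates in $s_i\cap s_j$ and vanishes unless $s_i\cap s_j\ne\emptyset$, which gives
$$|R_2^j|\;\le\;\gamma\sum_{\substack{i\in\setS_1^j\\ i\ne j}}\EE\big[\indi(s_i\cap s_j\ne\emptyset)\,\|\vecv(\hat\vecx_i,s_i)\|\,\|\vecv(\hat\vecx_j,s_j)\|\big].$$
I would then split the product of norms with a (weighted) arithmetic--geometric mean step, use the mutual independence of the distinct samples $s_i,s_j$ together with Assumption~\ref{asm:ind} to control the contribution of the indicator by the intersection probability $2\cdeg/n$, and thereby reduce each summand to the second moments $\EE\|\vecv(\hat\vecx_k,s_k)\|^2$ for $k\in\setS_1^j$, i.e.\ to $R_0^j=G_0$ and $G_1$.

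The estimate is then closed exactly as for $R_0^j$ and $R_1^j$: Lemma~\ref{lemma:recursions svrg} yields $G_r\le 4L^2(a_j+a_0+\Delta_r)$ and $\Delta_r\le(3\gamma\tau(r+1))^2G_{r+1}$, and unrolling these for $\ell$ levels under $\tau\le\kappa/\ell$ gives $G_r\le\ct\big(L^2(a_j+a_0)+\theta^{2\ell}M^2\big)$, exactly as in the restatable lemma that bounds $R_0^j$ and $R_1^j$. Substituting, using $m=L/\kappa$ and $\gamma=\theta/(12L\kappa)$, and absorbing the surviving powers of $\gamma\tau$, $\tau$, and $\cdeg/n$---which is precisely what the hypotheses $\tau\le\kappa/\ell$ and $\tau=\mathcal{O}(\sqrt[6]{n/\cdeg})$ are there to allow---the $a_j+a_0$ part collapses to $\ct\,\theta m(a_j+a_0)$ and the residual to $\ct\,\theta^{2\ell}M^2/(L\kappa)$, which is the claim.

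The main obstacle is the middle step: decoupling the intersection event $\{s_i\cap s_j\ne\emptyset\}$ from the magnitudes $\|\vecv(\hat\vecx_i,s_i)\|$ and $\|\vecv(\hat\vecx_j,s_j)\|$. These are not independent---the reads $\hat\vecx_i,\hat\vecx_j$ are shaped by the hyperedges sampled within the concurrency window, which include one another's samples whenever $|i-j|\le\tau$, and each magnitude additionally shares the sample $s_j$ (respectively $s_i$) with the indicator---so one cannot simply multiply ``intersection probability'' by ``expected squared gradient'' while keeping the $a_j$-decay. Handling this cleanly requires ordering the conditioning carefully (integrating out the sample that does not enter a given read, or invoking Assumption~\ref{asm:ind} inside a Cauchy--Schwarz that isolates $\EE[\indi(s_i\cap s_j\ne\emptyset)]\le 2\cdeg/n$) together with a second application of the shift decomposition to express $\|\vecv(\hat\vecx_k,s_k)\|^2$ via Lemma~\ref{lemma: shifted grad bound svrg}; that second expansion contributes extra powers of $\gamma\tau$, and it is this---combined with the $\tau^2$ rather than $\tau$ in the $\Delta_r$ recursion---that forces the sixth-root restriction. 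As the remark following Lemma~\ref{lemma:recursions svrg} notes, a sparsity-aware sharpening of that recursion would relax the requirement to $\tau=\mathcal{O}(\sqrt[4]{n/\cdeg})$, but I would not pursue that refinement here.
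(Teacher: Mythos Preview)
Your overall architecture is right and matches the paper: expand $\hat\vecx_j-\vecx_j$ via \eqref{eq: shift decomposition svrg}, insert the indicator $\indi(s_i\cap s_j\ne\emptyset)$ coming from the hyperedge supports, extract a $\sqrt{\cdeg/n}$ factor, and close with the $G_r$ recursion from Lemma~\ref{lemma:recursions svrg}. You also correctly flag the crux: the indicator is \emph{not} independent of the magnitudes $\|\vecv(\hat\vecx_i,s_i)\|$ and $\|\vecv(\hat\vecx_j,s_j)\|$, because each magnitude depends on its own sample $s_k$ through $\nabla f_{s_k}$, and the reads $\hat\vecx_i,\hat\vecx_j$ within the overlap window depend on one another's samples.

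Where the proposal falls short is the remedy for that crux. The devices you list---a weighted AM--GM split, ``integrating out the sample that does not enter a given read,'' Cauchy--Schwarz together with Assumption~\ref{asm:ind}, or invoking Lemma~\ref{lemma: shifted grad bound svrg}---do not actually decouple the indicator. Lemma~\ref{lemma: shifted grad bound svrg} is an \emph{expectation} bound, so it cannot be inserted pointwise inside a product with $\indi(s_i\cap s_j\ne\emptyset)$ before taking expectation; and Assumption~\ref{asm:ind} only gives $\hat\vecx_k\perp s_k$, not $\|\vecv(\hat\vecx_k,s_k)\|\perp s_k$, since the sample enters through $\nabla f_{s_k}$ regardless. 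Consequently neither $\EE[\indi\,\|\vecv_i\|^2]$ nor $\EE[\indi\,\|\vecv_j\|^2]$ factors into $\tfrac{2\cdeg}{n}\cdot G_r$ as you need.

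The paper's missing ingredient is a \emph{pointwise Lipschitz shift to a past reference read}: one writes
\[
\|\vecv(\hat\vecx_k,s_k)\|\;\le\;\|\vecv(\hat\vecx_{j-3\tau},s_k)\|+L\,\|\hat\vecx_{j-3\tau}-\hat\vecx_k\|,\qquad k\in\{i,j\},
\]
and expands the product into four pieces. The point of going back $3\tau$ steps is that $\hat\vecx_{j-3\tau}$ lies outside the overlap window of every $s_i$ with $i\in[j-\tau,j+\tau]$ and of $s_j$, so it is independent of both; and by Assumption~\ref{asm:ind} the quantities $\|\hat\vecx_{j-3\tau}-\hat\vecx_j\|$ and $\hat\vecx_{j-3\tau}$ are independent of $s_j$. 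This is what lets one pull out $\EE_{s_j}[\indi(s_i\cap s_j\ne\emptyset)]\le 2\cdeg/n$ inside a Cauchy--Schwarz step, exactly as in the ASCD proof of Lemma~\ref{lemma:R2 ascd bound}. The Lipschitz remainders $\|\hat\vecx_{j-3\tau}-\hat\vecx_k\|$ are then controlled by the shift decomposition over $\setS_4^j$, which is where the extra powers of $\tau$ (and hence the $\tau=\mathcal{O}(\sqrt[6]{n/\cdeg})$ restriction) enter---you correctly anticipated that effect, but attributed it to the wrong mechanism. Once you add this reference-point shift, the rest of your plan goes through verbatim and yields $R_2^j\le\ct\sqrt{\cdeg/n}\,\gamma\tau^3\bigl(L^2(a_j+a_0)+\theta^{2\ell}M^2\bigr)$, from which the stated bound follows under the hypotheses on $\tau$ and $\gamma$.
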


\subsubsection{Putting it all together}
After plugging in the upper bounds on $R_0^j$, $R_1^j$, and $R_2^j$ in the main recursion satisfied by $\CM{}$, we find that: 
\begin{align*}
a_{j+1} \leq& \left(1 - \gamma m + \ct\left(\gamma^2  L^2 + \gamma m\theta^2 + \gamma \theta m\right) \right)a_j + \\
&+  \ct\left(\gamma^2  L^2 + \gamma m\theta^2 + \gamma \theta m \right)a_0+\gamma^2\ct\theta^{2\ell}M^2 + \gamma\ct \theta^{2\ell}\frac{M^2}{L\kappa}.
\end{align*} 
If we set $\gamma = \ct \frac{\theta}{L\kappa}$, {\it i.e.}, the same step size as serial sparse SVRG (Theorem~\ref{theo:serial_svrg}), then the above becomes
\begin{align*}
a_{j+1} &\leq \left(1  -\ct \frac{\theta}{\kappa^2}\right)a_j +\ct \frac{\theta^2}{\kappa^2} a_0 + \theta^{2\ell + 1}\ct \frac{M^2}{L^2\kappa^2}\\
&\leq \left[\left(1  -\ct \frac{\theta}{\kappa^2}\right)^{j+1} +\ct \theta \right]a_0 + \theta^{2\ell}\ct \frac{M^2}{L^2}.
\end{align*}
We choose $\theta = \ct \leq 1/2$ to be a sufficiently small constant, so that the term $\ct \theta$ in the brackets above is at most $0.5$. 
Then we can choose $j = \ct \kappa^2$ so that the entire coefficient in the brackets is at most $0.75$.
Finally, we set 
$
\ell = \ct \log\left(\sfrac{M^2}{L^2\epsilon}\right)
$, 
so that the last term is smaller than $\epsilon/8$. Let $\vecy_k$ be the iterate after the $k$-th epoch and $A_k = \EE\|\vecy_k - \vecx^*\|^2$. Therefore, \CM{} satisfies the recursion
$$
A_{k+1} \leq 0.75\cdot A_k + \frac{\epsilon}{8} \leq (0.75)^{k+1}A_0 + \frac{\epsilon}{2}.
$$
This implies that $\ct \log(a_0/\epsilon)$ epochs are sufficient to reach $\epsilon$ accuracy, where $a_0$ is $\|\vecx_0 - \vecx^*\|^2$ the initial distance squared to the optimum.

%%% Local Variables:
%%% mode: latex
%%% TeX-master: "perturbed_siopt_final"
%%% End:

\makesavenoteenv{tabular}
\makesavenoteenv{table}
\begin{table}
\centering\footnotesize
\begin{tabular}{|c|c|c|c|c|}\hline
Problem & Dataset & \# data & \# features & sparsity \\\hline\hline
Linear regression
    & Synthetic & 3M & 10K & 20 \\\hline
\multirow{3}{*}{Logistic regression}
    & Synthetic & 3M & 10K & 20 \\\cline{2-5}
    & rcv1 \cite{lewis2004rcv1} from \cite{libsvmdata} & $\approx$ 700K & $\approx$ 42K & $\approx$ 73 \\\cline{2-5}
    & url \cite{ma2009identifying} from \cite{libsvmdata} & $\approx$ 3.2M & $\approx$ 2.4M & $\approx$ 116 \\\hline
\multirow{2}{*}{Vertex cover\footnote{
Following \cite{liu2014asynchronous1}, we optimize a quadratic penalty relaxation for vertex cover
   $\min_{x \in[0,1]^{|V|+|E|}}
\sum_{v \in V} x_v + \frac{\beta}{2} \sum_{(u,v)\in E} (x_u + x_v - x_{u,v} - 1)^2 + \frac{1}{2\beta}\sum_{v\in V} x_v^2 + \sum_{e\in E} x_e^2$.}
}
    & eswiki-2013 \cite{BoVWFI, BRSLLP, BCSU3} & $\approx$ 970K & $\approx$ 23M & $\approx$ 24 \\\cline{2-5}
    & wordassociation-2011 \cite{BoVWFI, BRSLLP, BCSU3} & $\approx$ 10.6K & $\approx$ 72K & $\approx$ 7 \\\hline
\end{tabular}
\caption{
The problems and data sets used in our experimental evaluation.
We test \CM{}, dense SVRG, and \HW{} on three different tasks: linear regression, logistic regression, and vertex cover.
We test the algorithms on sparse data sets, of various sizes and feature dimensions. %#SHORT
}
\label{tab:probdata} 
\end{table}

\section{Empirical Evaluation of \CM{}}
In this section we evaluate \CM{} empirically.
Our two goals are to demonstrate that (1) \CM{} is faster than dense SVRG, and (2) \CM{} has speedups comparable to those of \HW{}.
We implemented \HW{}, asynchronous dense SVRG, and \CM{} in Scala, and tested them on the problems and datasets listed in Table \ref{tab:probdata}.
Each algorithm was run for 50 epochs, using up to 16 threads.
For the SVRG algorithms, we recompute $\vecy$ and the full gradient $\nabla f(\vecy)$ every two epochs.
We normalize the objective values such that the objective at the initial starting point has a value of one, and the minimum attained across all algorithms and epochs has a value of zero.
Experiments were conducted on a Linux machine with 2 Intel Xeon Processor E5-2670 (2.60GHz, eight cores each) with 250Gb memory.

\begin{figure}[h!]
\centering
    \begin{subfigure}[b]{0.49\columnwidth}
	\centerline{\includegraphics[width = 0.85\columnwidth, trim={0 0.1cm  0 0}, clip]{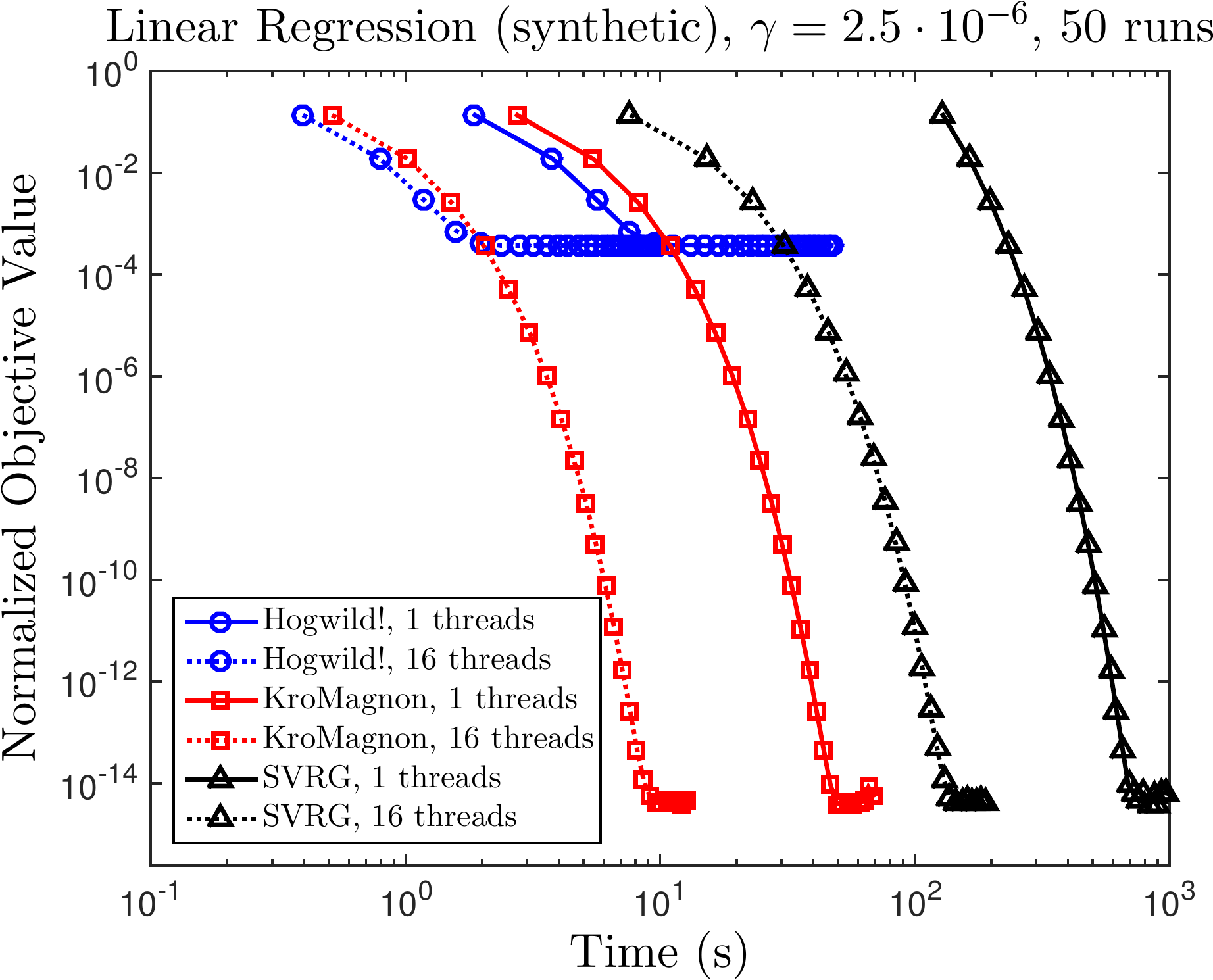}}
      \caption{\scriptsize Linear regression, synthetic}
      \label{fig:convlinsyn}
    \end{subfigure}
   \begin{subfigure}[b]{0.49\columnwidth}
      	\centerline{\includegraphics[width = 0.85\columnwidth, trim={0 0.1cm  0 0}, clip]{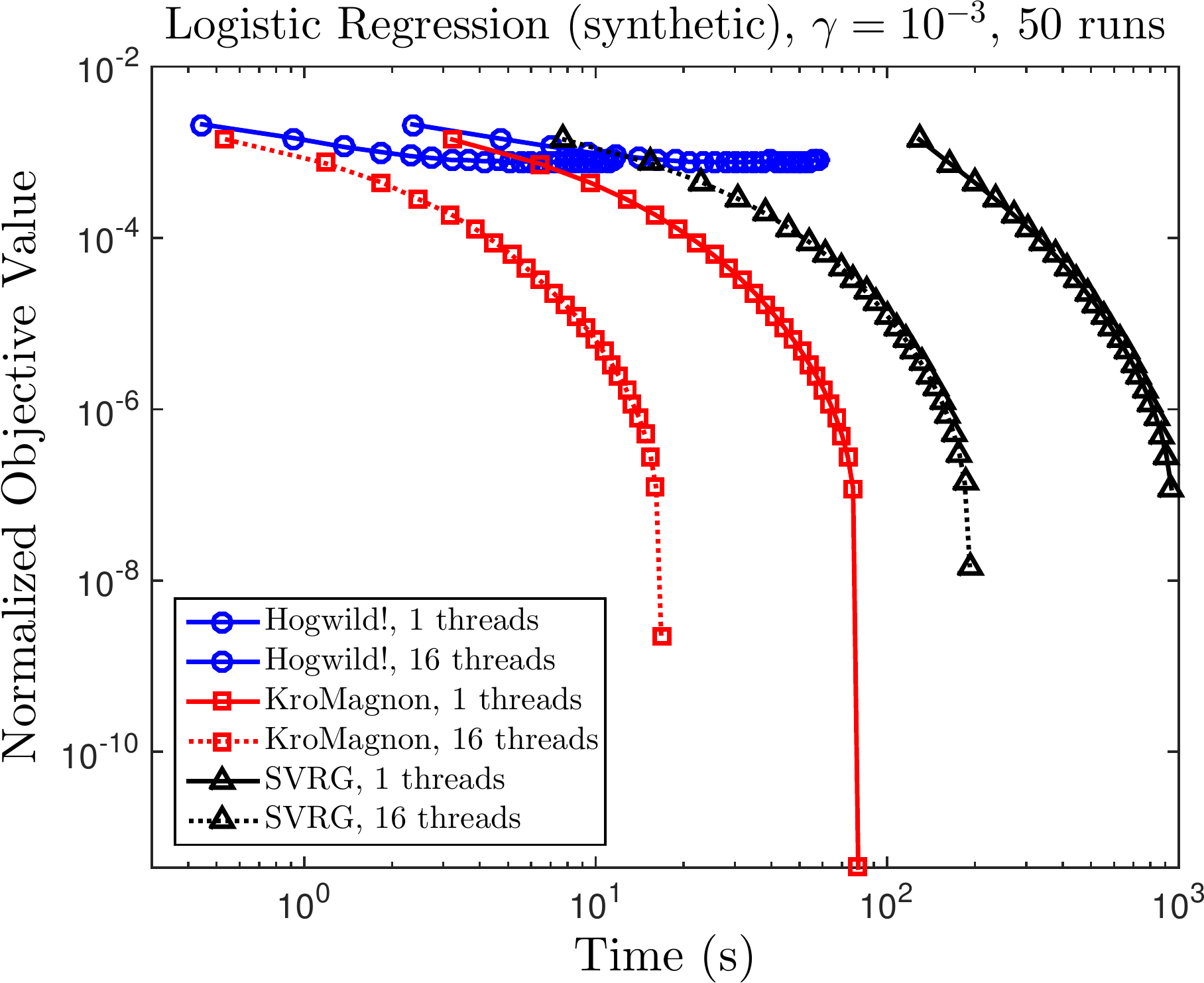}}
      \caption{\scriptsize Logistic regression, synthetic}
      \label{fig:convlogsyn}
    \end{subfigure}

    \begin{subfigure}[b]{0.49\columnwidth}
     	\centerline{\includegraphics[width = 0.85\columnwidth, trim={0 0.1cm  0 0}, clip]{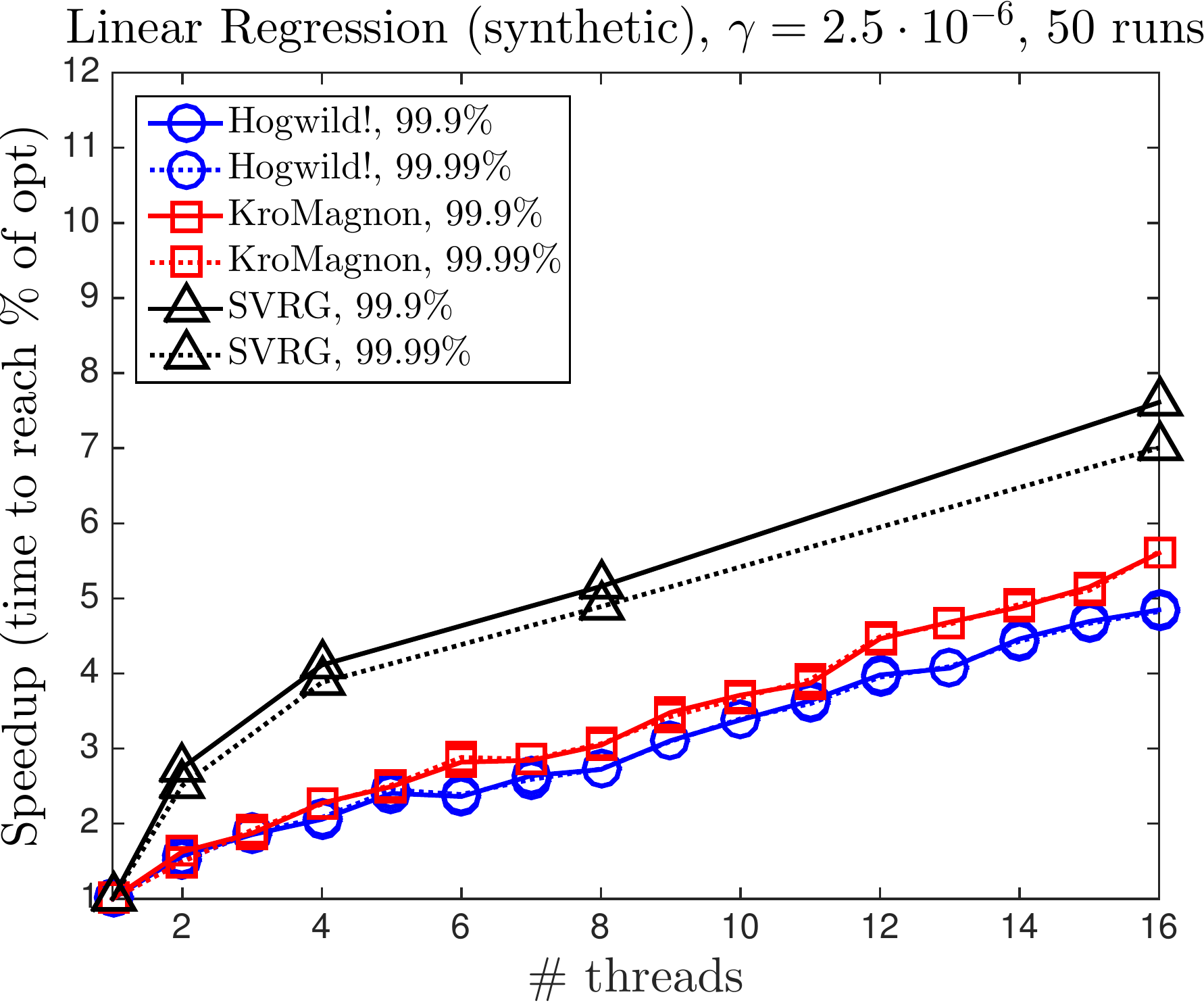}}
      \caption{\scriptsize Linear regression, synthetic}
      \label{fig:spuplinsyn}
    \end{subfigure} 
    \begin{subfigure}[b]{0.49\columnwidth}
     \centerline{\includegraphics[width = 0.85\columnwidth, trim={0 0.1cm  0 0}, clip]{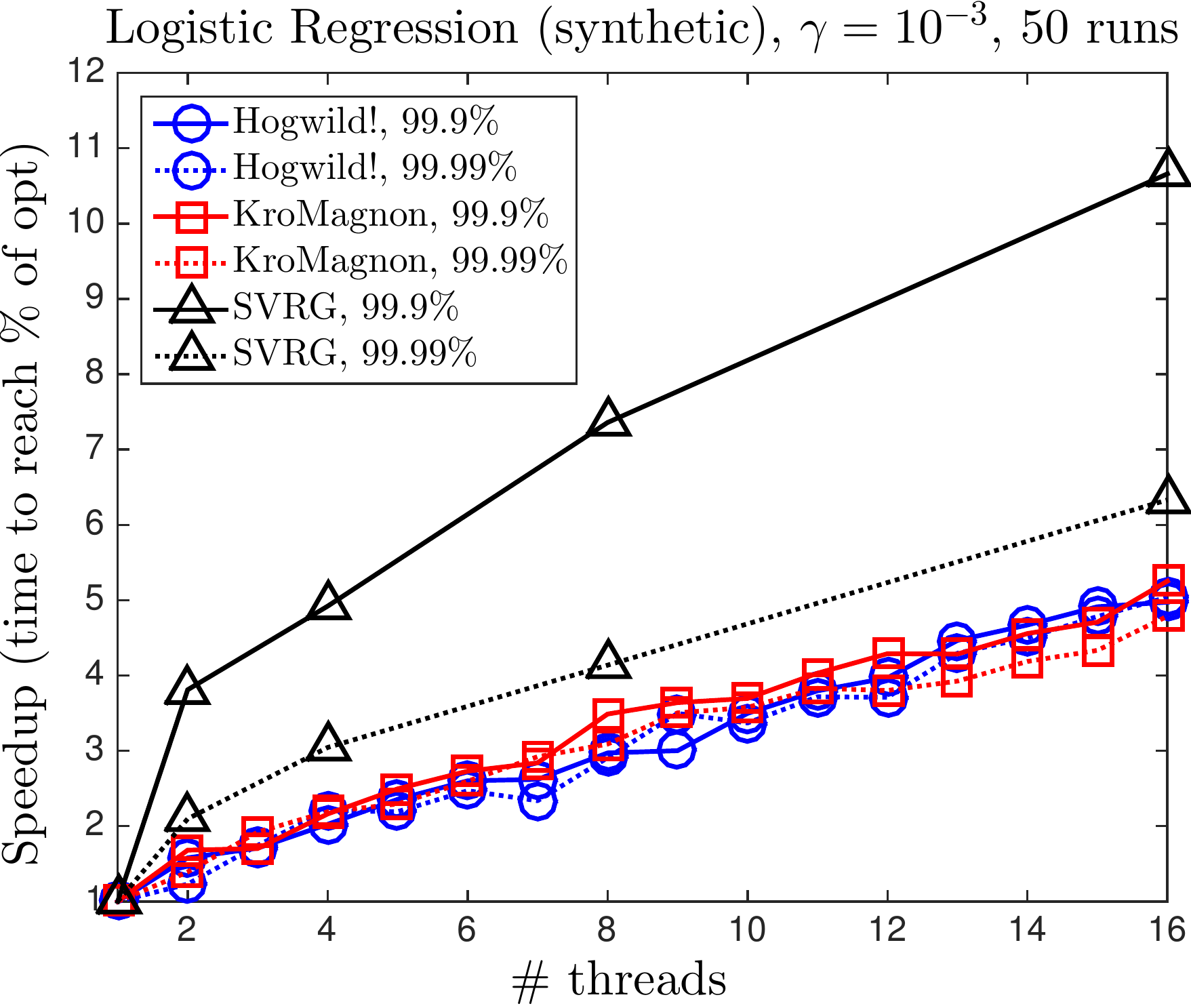}}
      \caption{\scriptsize Logistic regression, synthetic}
      \label{fig:spuplogsyn}
    \end{subfigure}
  \caption{Linear and logistic regression on synthetic data.
  In subfigures (a) and (b) we plot the convergence with respect to normalized objective value as a function of wall-clock time, and in (c) and (d) the speedup with respect to the number of threads.
  The above experiments are all for linear and logistic regression problems on synthetic data, in which we have $3$ million data points, each with $10$K features, and each data point with $20$ nonzero entries.
  We observe that \CM{} is significantly faster than parallel and dense SVRG, while they both can attain better objective values compared to constant step-size \HW{}.
  Moreover, we observe that the speedup gains of \HW{} and \CM{} are scaling reasonably well for up to 16 threads.}
  \label{fig:syn}
\end{figure}

\begin{figure}[h!]
\centering
    \begin{subfigure}[b]{0.49\columnwidth}
	\centerline{\includegraphics[width = 0.85\columnwidth, trim={0 0.1cm  0 0}, clip]{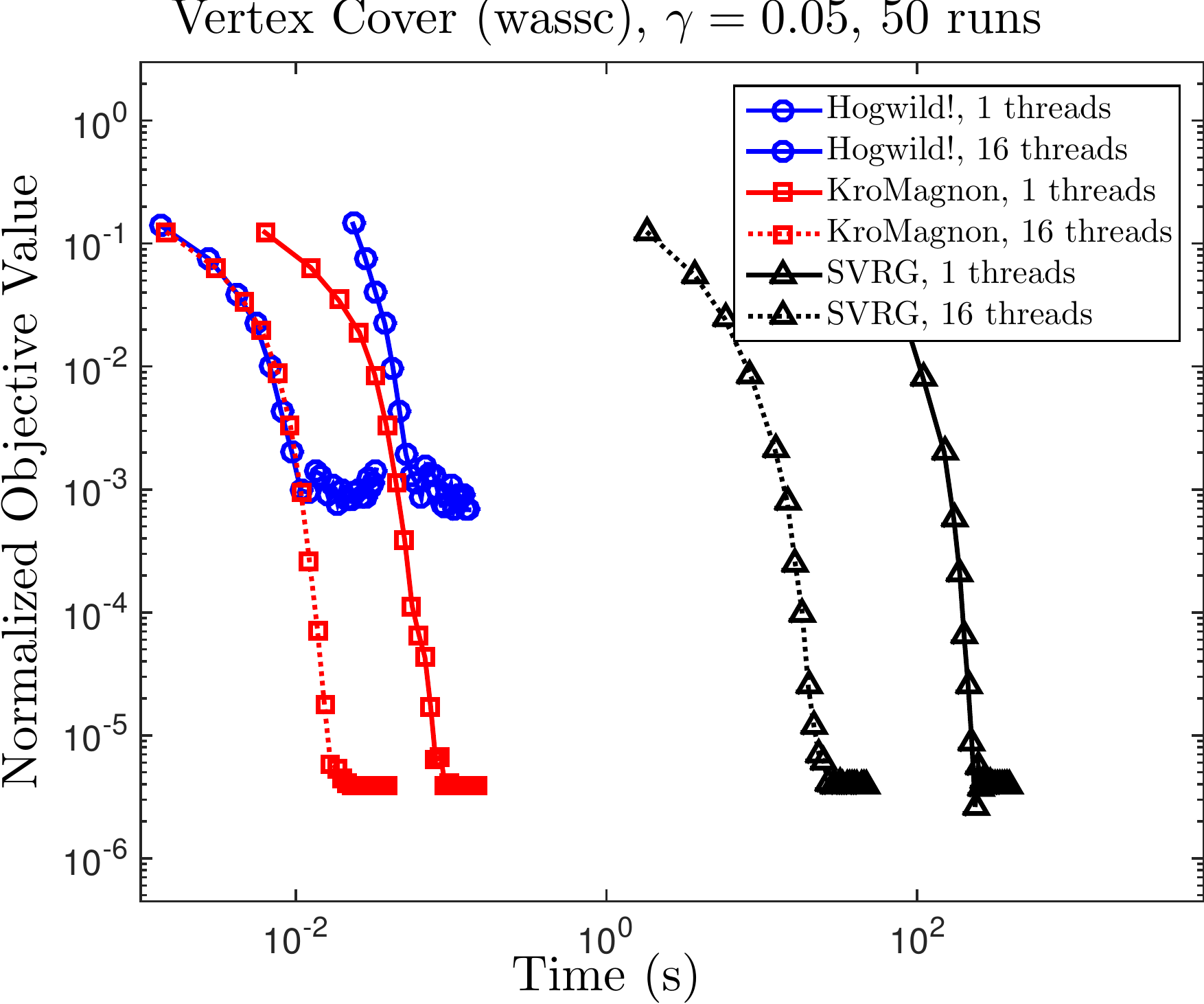}}
\caption{\scriptsize Vertex cover, wordassociation}
      \label{fig:convvcvwas}
    \end{subfigure}
   \begin{subfigure}[b]{0.49\columnwidth}
      	\centerline{\includegraphics[width = 0.85\columnwidth, trim={0 0.1cm  0 0}, clip]{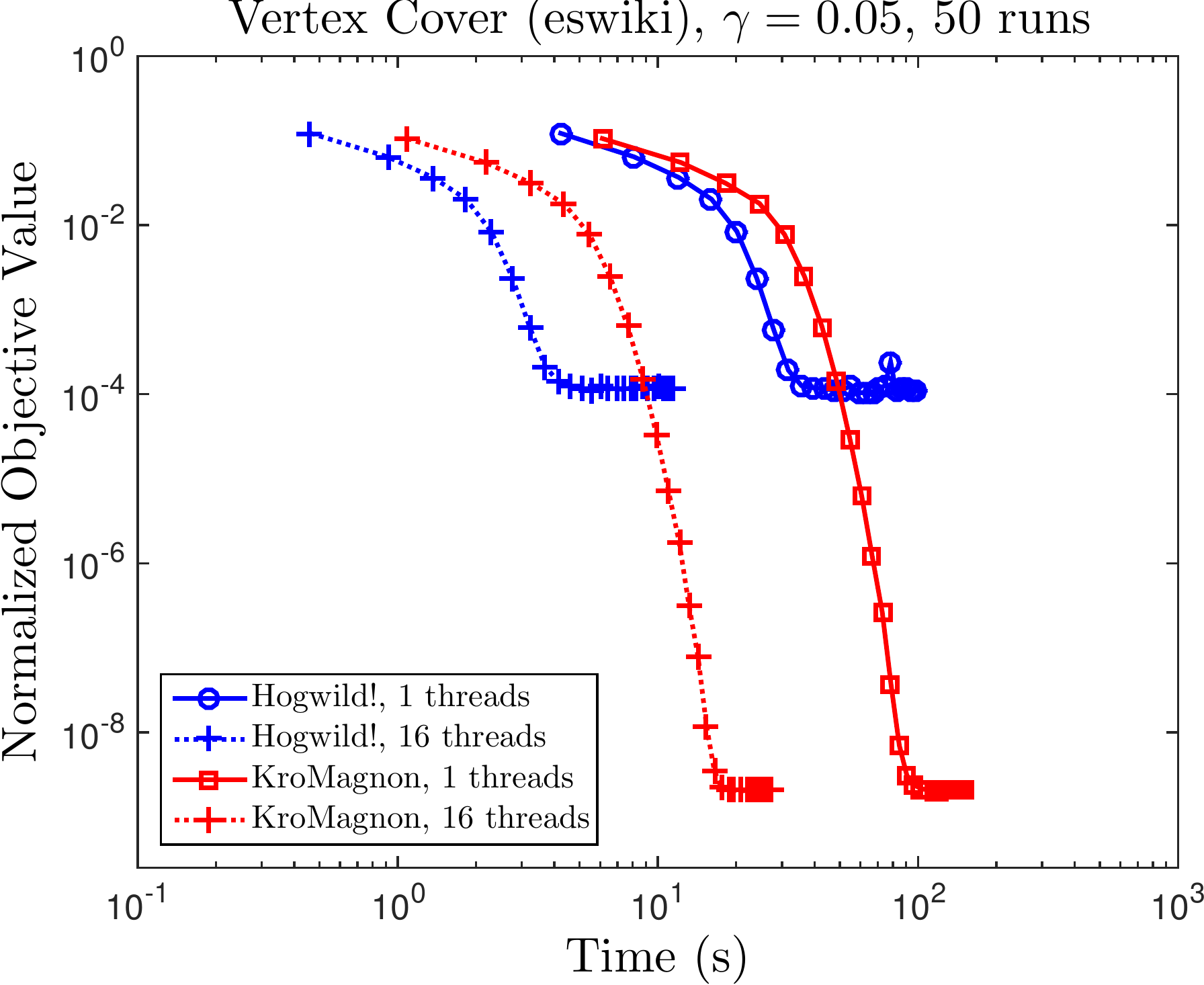}}
  \caption{\scriptsize Vertex cover, eswiki-2013}
      \label{fig:convvcvesw}
    \end{subfigure}

    \begin{subfigure}[b]{0.49\columnwidth}
	\centerline{\includegraphics[width = 0.85\columnwidth, trim={0 0.1cm  0 0}, clip]{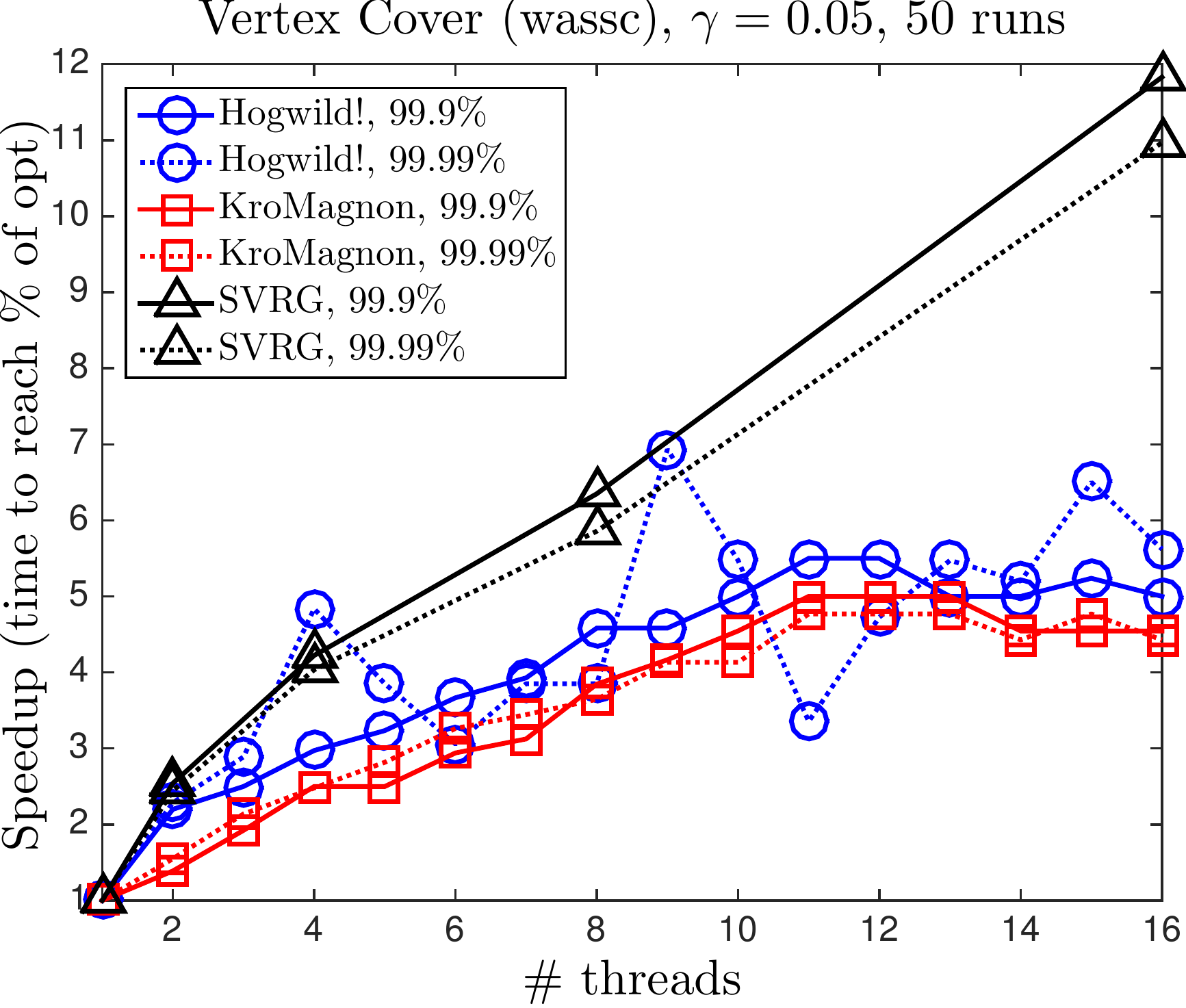}}
\caption{\scriptsize Vertex cover, wordassociation}
      \label{fig:spupvcvwas}
    \end{subfigure}
   \begin{subfigure}[b]{0.49\columnwidth}
      	\centerline{\includegraphics[width = 0.85\columnwidth, trim={0 0.1cm  0 0}, clip]{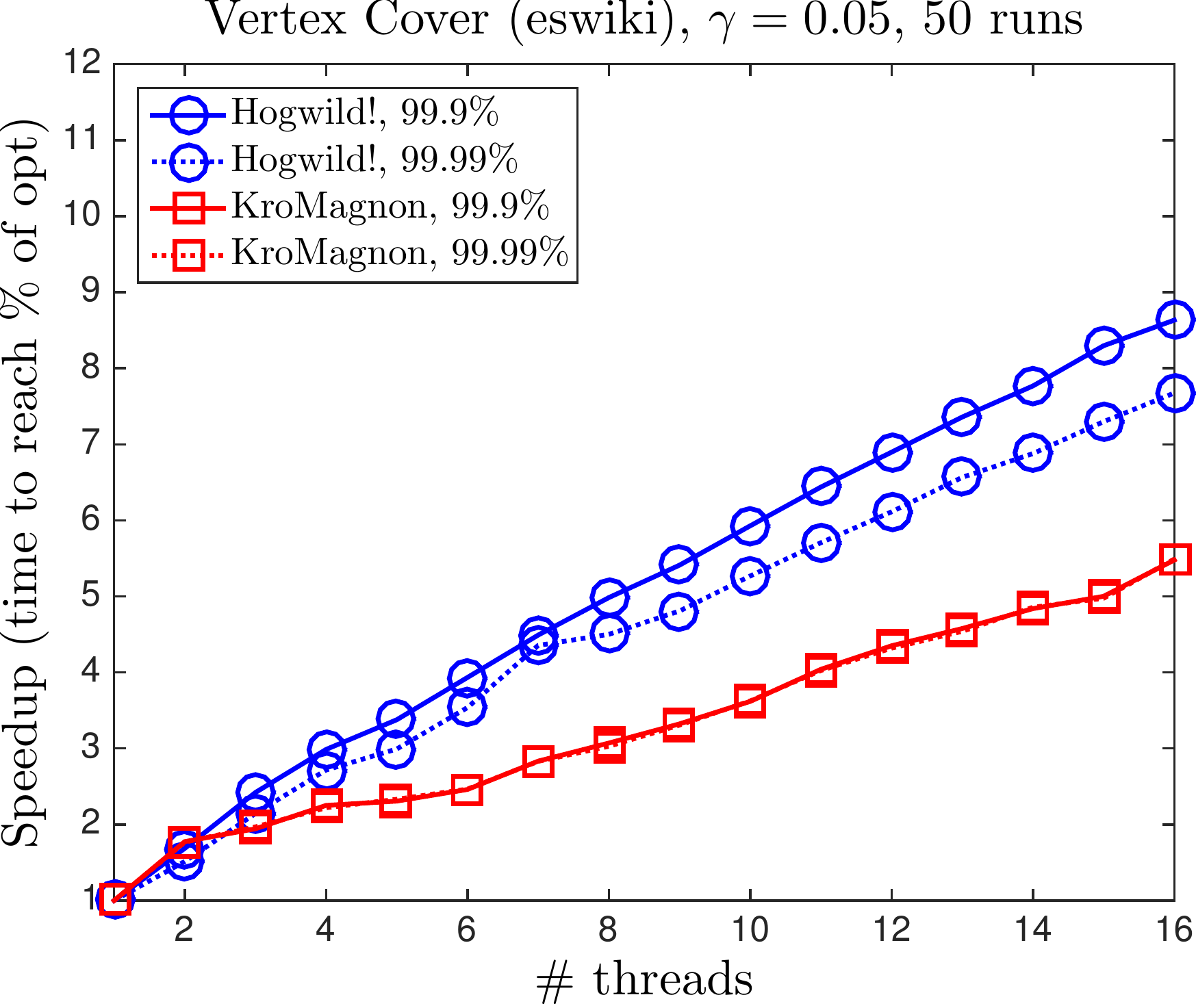}}
  \caption{\scriptsize Vertex cover, eswiki-2013}
      \label{fig:spupvcvesw}
    \end{subfigure}
  \caption{Vertex cover on the wordassociation-2011 and eswiki-2013 datasets.
  Subfigure (a) shows the convergence of the algorithms on wordassociation-2011, a small graph with less than 11,000 vertices.
  \CM{} on a single thread is 3-4 orders of magnitude faster than desnse SVRG on this dataset.
  Convergence of \CM{} and \HW{} on the eswiki-2013 dataset is shown in subfigure (b); we were unable to run dense SVRG on this larger graph.
  Subfigures (c) and (d) show the speedups of the algorithms on the two datasets.
  In subfigure (c), both \HW{} and \CM{} exhibit poorer speedups than dense SVRG because of the rapid conve on the smaller wordassociation-2011 dataset.
 In subfigure (d) we observe that \HW{} achieves a speedup of up to 8x and \CM{} up to 5x. 
%As with the speedup curves in the previous figure, we observe that dense SVRG has better scaling, due to the fact that stochastic updates of \CM{} and \HW{} are extremely cheap.
  }
  \label{fig:data}
\end{figure}

\paragraph{Comparison with dense SVRG}
We were unable to run dense SVRG on the url and eswiki-2013 datasets due to the large number of features.
Figures \ref{fig:convlinsyn}, \ref{fig:convlogsyn}, and \ref{fig:convlogrcv} show that \CM{} is one-two orders of magnitude faster than dense SVRG.
In fact, running dense SVRG on 16 threads is slower than \CM{} on a single thread.
Moreover, as seen in Fig. \ref{fig:convvcvwas}, \CM{} on 16 threads can be up to four orders of magnitude faster than serial dense SVRG.
Both dense SVRG and \CM{} attain similar optima.
\paragraph{Speedups}
We measured the time each algorithm takes to achieve 99.9\% and 99.99\% of the minimum achieved by that algorithm.
Speedups are computed relative to the runtime of the algorithm on a single thread.
Although the speedup of \CM{} varies across datasets, we find that \CM{} has comparable speedups with \HW{} on all datasets, as shown in Figure \ref{fig:spuplinsyn}, \ref{fig:spuplogsyn}, \ref{fig:spupvcvwas}, \ref{fig:spupvcvesw},   \ref{fig:spuplogrcv}, \ref{fig:spuplogurl}. We further observe that dense SVRG has better speedup scaling. This happens because the per iteration complexity of \HW{} and \CM{} is significantly cheaper to the extent that the additional overhead associated with having extra threads leads to some speedup loss; this is not the case for dense SVRG as the per iteration cost is higher.

\begin{figure}[h!]
\centering
  \begin{subfigure}[b]{0.49\columnwidth}
	\centerline{\includegraphics[width = 0.85\columnwidth, trim={0 0.1cm  0 0}, clip]{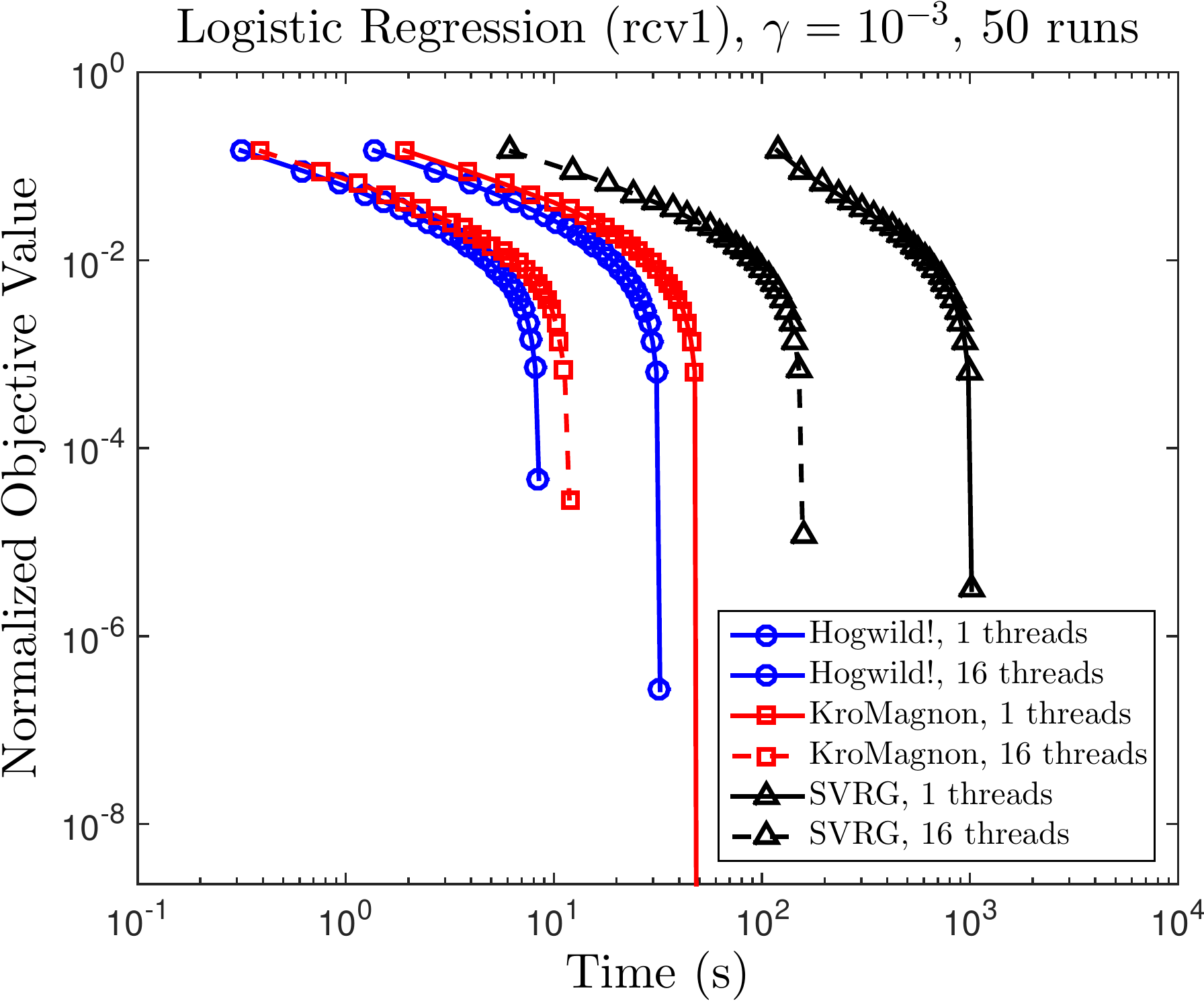}}
      \caption{\scriptsize Logistic regression, rcv1}
      \label{fig:convlogrcv}
    \end{subfigure}
   \begin{subfigure}[b]{0.49\columnwidth}
      	\centerline{\includegraphics[width = 0.85\columnwidth, trim={0 0.1cm  0 0}, clip]{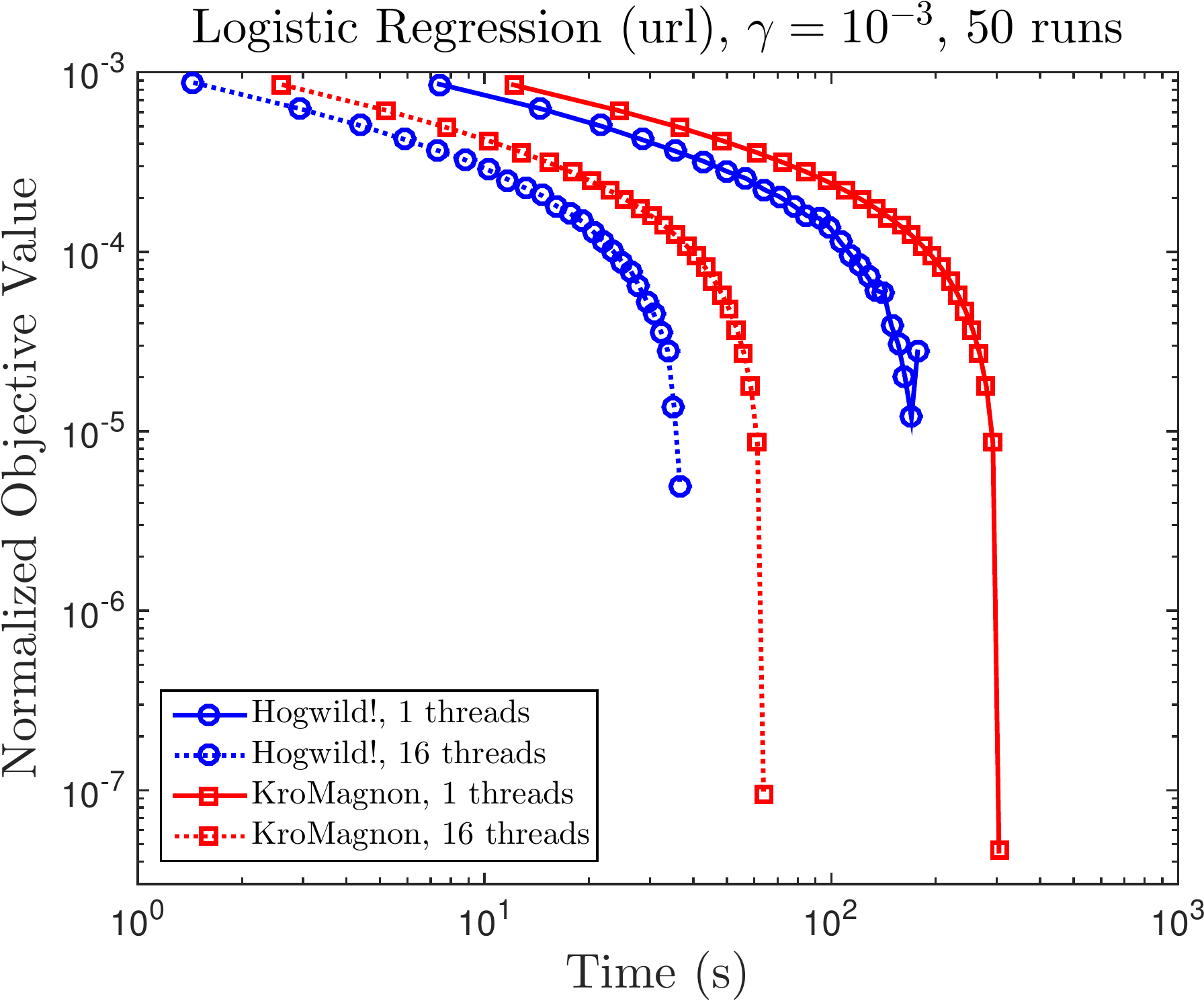}}
      \caption{\scriptsize Logistic regression, url}
      \label{fig:convlogurl}
    \end{subfigure}

      \begin{subfigure}[b]{0.49\columnwidth}
	\centerline{\includegraphics[width = 0.85\columnwidth, trim={0 0.1cm  0 0}, clip]{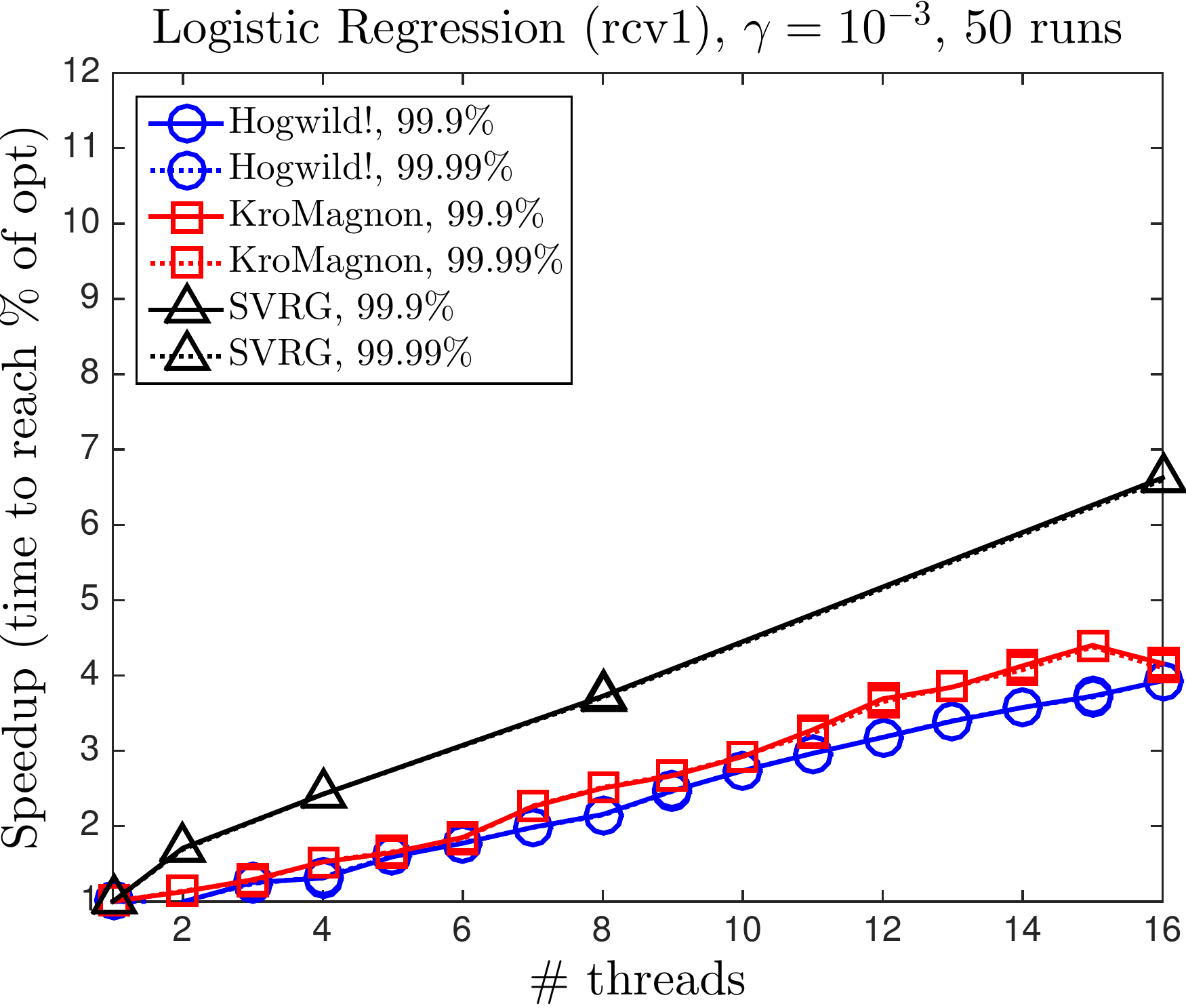}}
      \caption{\scriptsize Logistic regression, rcv1}
      \label{fig:spuplogrcv}
    \end{subfigure}
   \begin{subfigure}[b]{0.49\columnwidth}
      	\centerline{\includegraphics[width = 0.85\columnwidth, trim={0 0.1cm  0 0}, clip]{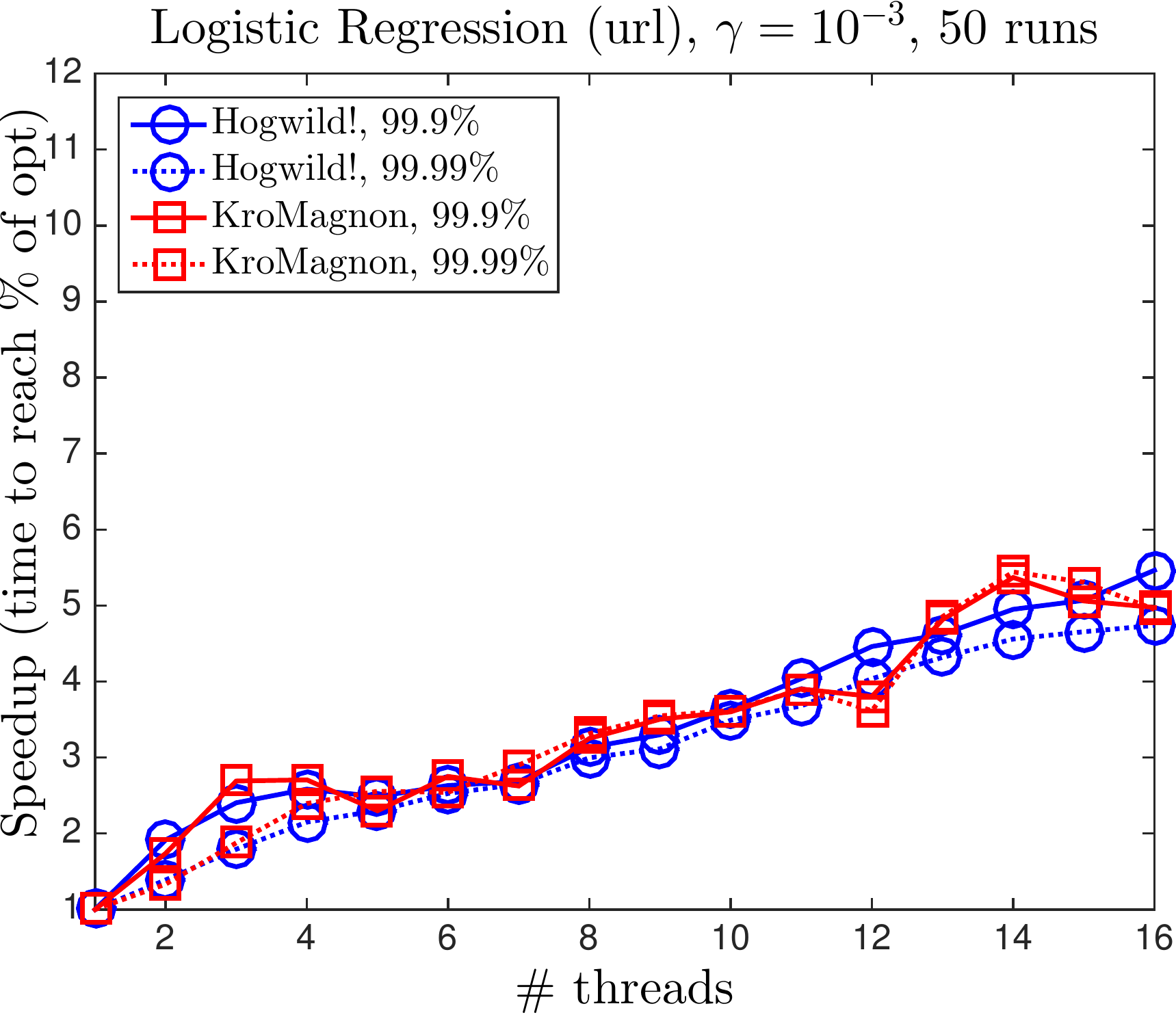}}
      \caption{\scriptsize Logistic regression, url}
      \label{fig:spuplogurl}
    \end{subfigure}
  \caption{Logistic regression on the rcv1 and url datasets.
  Subfigure (a) shows the convergence of the algorithms on the rcv1 dataset.
  For a given objective value, \CM{} is 1-2 orders of magnitude faster than dense SVRG.
  On the larger url dataset (subfigure (b)), we were unable to run dense SVRG.
  Note that some of the effect of asynchrony can be observed in these experiments: the objective values obtained by \CM{}, \HW{}, and dense SVRG are  slightly different on 1 thread compared to 16 threads.
  Speedups of the algorithms are shown in subfigures (c) and (d)--\CM{} has a slightly better speedup than \HW{} on rcv1, and the same speedup on url.
  }
  \label{fig:data2}
\end{figure}

%%% Local Variables:
%%% mode: latex
%%% TeX-master: "perturbed_siopt_final"
%%% End:

\section{Conclusions and Open Problems}

We have introduced a novel framework for analyzing parallel asynchronous stochastic gradient optimization algorithms. The main advantage of our framework is that it is straightforward to apply to a range of first-order stochastic algorithms, while it involves elementary derivations. Moreover, in our analysis we lift, or relax, many of the assumptions made in prior art, \emph{e.g.}, we do not assume consistent reads, and we analyze full stochastic gradient updates.  We use our framework to analyze \HW{} and ASCD, and further introduce and analyze \CM{}, a new asynchronous sparse SVRG algorithm. 

We conclude with some open problems:
\begin{enumerate}
\item It would be interesting to obtain tighter bounds for the convergence of function values of the algorithms presented. 
How do  the ``errors" due to asynchrony influence the convergence rate of function values? 
In this case the number of iterations required to reach a target accuracy should scale with the condition number of the objective, not its square. Moreover, the literature on stochastic coordinate descent establishes convergence results in terms of coordinate-wise Lipschitz constants---a more refined smoothness quantity than the full-function smoothness. It would be worthwhile to know if our framework can be adapted to take these parameters into account.

\item Our perturbed iterates framework relies fundamentally on the strong convexity assumption.
However, asynchronous algorithms are known to perform well on non-strongly convex (and even nonconvex) objectives.
Can we generalize our framework to simply convex, or smooth functions? Under what assumptions, or simple families of functions, can we show convergence for nonconvex problems?

\item As previously explained, we believe that the upper bounds on $\tau$---the proxy for the number of cores---in our ASCD and \CM{} analyses  are amenable to improvements. 
It is an open problem to explore the extent of such improvements.

\item  Our analysis offers sensible upper bounds only in the presence of sparsity. It seems, however, that to obtain speedup results for \HW{}, it is only necessary to have small correlation between randomly sampled gradients. In what practical setups do randomly selected gradients have sufficiently small correlation? 
Does that immediately imply linear speedups in the same way that update sparsity does?

\item In this work we analyzed three similar stochastic first-order methods. It is an open problem to apply our framework and provide an elementary analysis for a greater variety of stochastic gradient type optimization algorithms, such as AdaGrad-type schemes (similar to \cite{duchi2013estimation}), or stochastic dual coordinate methods (similar to \cite{hsieh2015passcode}).

\item  Capturing the effects of asynchrony as noise on the algorithmic input seems to be applicable to settings beyond stochastic optimization. 
As shown recently for a combinatorial graph problem, a similar viewpoint enables the analysis of an asynchronous graph clustering algorithm \cite{2015arXiv150705086P}. It is an interesting endeavor to explore the extent to which a perturbed iterate viewpoint is suitable for analyzing general asynchronous iterative algorithms. 
\end{enumerate}

\subsection*{Acknowledgments}
This work was supported in part by the Mathematical Data Science program of the
Office of Naval Research. BR is generously supported by ONR awards , N00014-15-1-2620, N00014-13-1-0129,  and N00014-14-1-0024 and NSF awards CCF-1148243 and CCF-1217058. This research is supported in part by the  NSF CISE Expeditions Award 7076018 and gifts from Amazon Web Services, Google, IBM, SAP, The Thomas and Stacey Siebel Foundation, Adatao, Adobe, Apple Inc., Blue Goji, Bosch, Cisco, Cray, Cloudera, Ericsson, Facebook, Fujitsu, Guavus, HP, Huawei, Intel, Microsoft, Pivotal, Samsung, Schlumberger, Splunk, State Farm, Virdata and VMware.

\begingroup
\emergencystretch 1.5em
%\raggedright
\sloppy
%\printbibliography
\bibliographystyle{unsrt}
%{\scriptsize{
\bibliography{perturbed_iterates}
%}}
\endgroup
\appendix

\section{Removing the Independence Assumption}
\label{sec:lift_independence}
Our main analysis for \HW{} relied on the independence between $\hat\vecx_i$ and $s_i$ (Assumption~\ref{asm:ind}). 
%Although we argued above why it is reasonable to assume that $\hat\vecx_i$ and $s_i$  are independent, it is still natural to try to remove such an assumption.
Here, we show how to lift this assumption, conditional on the fact that each of the $f_{e_i}$ terms is at least $L_{\text{s}}$ smooth.
%the analysis presented in \S~\ref{section: hogwild} can be extended when Assumption~\ref{asm:ind}$(ii)$ holds. 
Observe that the following is no longer true:
$
\EE \langle \hat\vecx_i - \vecx^*, \vecg(\hat\vecx_i, s_i)\rangle = \EE \langle \hat\vecx_i - \vecx^*, \nabla f(\hat\vecx_i)\rangle 
$ 
since we cannot use iterated expectations, precisely due to the lack of independence of the samples and the read variables. 
However, $\overline \vecx_i$, defined in \S~\ref{section: hogwild}, is still independent of $s_i$. 
Therefore, we expand our derivation in \eqref{eq:aj_first} in the following way:
\begin{align*}
a_{j+1} &\leq a_j  - 2\gamma\EE\langle \overline \vecx_i - \vecx^*, \vecg(\overline\vecx_j, s_j)\rangle + \gamma^2\EE\|g(\hat\vecx_j,\xi_j)\|^2 + 2\gamma \EE \langle \overline\vecx_j - \vecx_j, \vecg(\overline\vecx_j,\xi_j)\rangle\\
&\qquad + 2\gamma \EE\langle  \vecx_j -\vecx^*, \vecg(\overline\vecx_j,s_j) - \vecg(\hat\vecx_j,s_j) \rangle.
\end{align*}
Since $\overline\vecx_j$ and $s_j$ are independent by construction, we use iterated expectations to get:
$
\EE\langle \overline \vecx_i - \vecx^*, \vecg(\overline\vecx_j, s_j)\rangle = 
\EE\langle \overline \vecx_i - \vecx^*, \nabla f(\overline\vecx_j)\rangle.
$
As before, the strong convexity of $f$ and the triangle inequality imply that
$
\langle \overline\vecx_j - \vecx^*, \nabla f(\overline\vecx_j)\rangle \geq \frac{m}{2}\|\vecx_j  - \vecx^*\|^2 - m\|\overline\vecx_j - \vecx_j\|^2. 
$
 Putting everything together we get the following recursion for the sequence $a_j$. 
\begin{align*}
a_{j+1} &\leq (1 -\gamma m)a_j + \gamma^2\underbrace{\EE\|g(\hat\vecx_j,\xi_j)\|^2}_{R_0^j} + 2\gamma m\underbrace{\EE\|\overline\vecx_j - \vecx_j\|^2}_{R_1^j} + 2\gamma \underbrace{\EE \langle \overline\vecx_j - \vecx_j, \vecg(\overline\vecx_j,\xi_j)\rangle}_{R_2^j}\\
&\qquad + 2\gamma \underbrace{\EE\langle  \vecx_j -\vecx^*, \vecg(\overline\vecx_j,s_j) - \vecg(\hat\vecx_j,s_j) \rangle}_{R_3^j}.
\end{align*}
The reader can verify that although $R_1^j$ and $R_2^j$ are defined now in terms of $\overline{\vecx}_j$, the upper bounds derived in \S~\ref{section: hogwild} still hold. 
We bound $R_3^j$ as follows
{\small
\begin{align*}
\EE\langle \vecx_j \hspace{-0.03cm}-\hspace{-0.03cm} \vecx^*,\vecg(\overline\vecx_j, s_j) \hspace{-0.03cm}-\hspace{-0.03cm} \vecg(\hat\vecx_j, s_j)\rangle  \hspace{-0.1cm}\leq\hspace{-0.08cm} \EE\|\vecx_j - \vecx^*\|\|\vecg(\overline\vecx_j, s_j)\hspace{-0.03cm}-\hspace{-0.03cm} \vecg(\hat\vecx_j, s_j)\|
\leq \frac{m a_j}{4} \hspace{-0.08cm}+\hspace{-0.08cm} \frac{L_{\text{s}}^2}{m}\EE \|\overline\vecx_j - \hat\vecx_j\|^2,
\end{align*}
}where the last inequality follows by the smoothness of the gradient steps and the arithmetic-geometric mean inequality. Therefore 
{\small
\begin{align*}
a_{j+1} &\leq \left(1 -\frac{\gamma m}{2}\right)a_j + \gamma^2\underbrace{\EE\|g(\hat\vecx_j,\xi_j)\|^2}_{R_0^j} + 2\gamma m\underbrace{\EE\|\overline\vecx_j - \vecx_j\|^2}_{R_1^j} + 2\gamma \underbrace{\EE \langle \overline\vecx_j - \vecx_j, \vecg(\overline\vecx_j,\xi_j)\rangle}_{R_2^j}\\
&\qquad + 2\gamma \frac{L_{\text{s}}^2}{m}\underbrace{\EE \|\overline\vecx_j - \hat\vecx_j\|^2}_{R_4^j}.
\end{align*}
}Since we can upper bound $R_4^j$ by the same bound derived for $R_3^j$, we obtain the following convergence result for \HW{}:
\begin{theorem}
\label{thm:HW2}
If the number of samples that overlap in time with a single sample during the execution of \HW{} is bounded as 
$\tau =\mathcal{O}\left(\min\left\{\sfrac{n}{\cdeg}, \sfrac{M^2}{\epsilon L^2} \right\}\right)$, 
then 
\HW{} with step-size $\gamma = \ct \frac{\epsilon m}{M^2}$, reaches an accuracy of $\EE\|\vecx_k-\vecx^*\|^2\le \epsilon$ after
$
T \geq \ct \frac{M^2 \log\left(\sfrac{a_0}{\epsilon}\right)}{\epsilon m^2}
$
iterations.
\end{theorem}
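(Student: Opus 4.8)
\textbf{Proof proposal for Theorem~\ref{thm:HW2}.}
The plan is to start from the recursion already derived in this appendix,
$$
a_{j+1} \leq \Bigl(1-\tfrac{\gamma m}{2}\Bigr)a_j + \gamma^2 R_0^j + 2\gamma m\, R_1^j + 2\gamma\, R_2^j + 2\gamma\,\tfrac{L_{\text{s}}^2}{m}\, R_4^j,
$$
where $R_0^j=\EE\|\vecg(\hat\vecx_j,s_j)\|^2$, $R_1^j=\EE\|\overline\vecx_j-\vecx_j\|^2$, $R_2^j=\EE\langle\overline\vecx_j-\vecx_j,\vecg(\overline\vecx_j,s_j)\rangle$, and $R_4^j=\EE\|\overline\vecx_j-\hat\vecx_j\|^2$. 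Everything reduces to bounding these four terms and then unrolling a geometric recursion. The term $R_0^j$ is bounded by $M^2$ by the uniform gradient bound, and since $\overline\vecx_j$ is independent of $s_j$ by construction, the arguments for $R_1^j$ and $R_2^j$ are \emph{verbatim} those of Lemma~\ref{lemma: bound extra noise}: the mismatch $\overline\vecx_j-\vecx_j$ is still a signed sum of at most $2\tau$ gradient steps $\gamma\matS_i^j\vecg(\hat\vecx_i,s_i)$ coming from hyperedges that overlap $s_j$ in time (Assumption~\ref{asm:tau}), each of norm at most $M$ and supported on $s_i$, and two sampled hyperedges intersect with probability at most $2\cdeg/n$. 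This gives $R_1^j\le\gamma^2M^2(2\tau+8\tau^2\cdeg/n)$ and $R_2^j\le 4\gamma M^2\tau\,\cdeg/n$, exactly as before.

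The one genuinely new quantity is $R_4^j=\EE\|\overline\vecx_j-\hat\vecx_j\|^2$. Here I would argue that $\overline\vecx_j$ (the shared memory just before $s_j$ is sampled) and $\hat\vecx_j$ (what the core reading $s_j$ actually sees) agree on all coordinates outside $s_j$ by Definition~\ref{def:iterates}, and on $s_j$ they can differ only because of writes that land during the read — and any such write comes from a hyperedge that overlaps $s_j$ in time. Hence there are diagonal sign matrices $\widetilde\matS_i^j$ with entries in $\{-1,0,1\}$ such that $\overline\vecx_j-\hat\vecx_j=\sum_{i=j-\tau,\,i\neq j}^{j+\tau}\gamma\widetilde\matS_i^j\vecg(\hat\vecx_i,s_i)$, and the same expansion-and-counting computation used for $R_1^j$ yields $R_4^j\le\gamma^2M^2(2\tau+8\tau^2\cdeg/n)$ (up to the same absolute constants).

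Plugging these bounds in, every error term past the leading $\gamma^2M^2$ is of the form $\gamma^2M^2$ times one of $\tau\cdeg/n$, $\gamma m(\tau+\tau^2\cdeg/n)$, or $\gamma(L_{\text{s}}^2/m)(\tau+\tau^2\cdeg/n)$, so that $a_{j+1}\le(1-\gamma m/2)a_j+\gamma^2M^2(1+\delta)$ with $\delta$ order-wise constant. Tracking which constraint binds: $\tau\cdeg/n=\mathcal O(1)$ forces $\tau=\mathcal O(n/\cdeg)$; using $\tau^2\cdeg/n\le\tau$ on that regime, the $R_1^j$ and $R_4^j$ contributions are controlled once $\gamma m\tau=\mathcal O(1)$ and $\gamma(L_{\text{s}}^2/m)\tau=\mathcal O(1)$, and since $L_{\text{s}}\gtrsim L\ge m$ the second is the stronger, giving $\tau=\mathcal O\!\bigl(m/(\gamma L_{\text{s}}^2)\bigr)=\mathcal O\!\bigl(M^2/(\epsilon L^2)\bigr)$ for $\gamma\asymp\epsilon m/M^2$ — precisely the stated bound on $\tau$. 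Finally I would unroll $a_{j+1}\le(1-\gamma m/2)a_j+\gamma^2M^2(1+\delta)$ to get $a_T\le(1-\gamma m/2)^T a_0+\frac{2\gamma M^2(1+\delta)}{m}$, choose the constant in $\gamma=\ct\,\epsilon m/M^2$ small enough that the stationary term is at most $\epsilon/2$, and read off $T\ge\frac{2}{\gamma m}\log(2a_0/\epsilon)=\ct\,\frac{M^2}{\epsilon m^2}\log(a_0/\epsilon)$ to make the transient term at most $\epsilon/2$.

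I expect the main obstacle to be the $R_4^j$ bound — specifically, justifying cleanly that the read/write mismatch $\overline\vecx_j-\hat\vecx_j$ is a bounded signed combination of overlapping-sample gradient steps (as opposed to needing any consistent-read assumption), and then carefully propagating the $L_{\text{s}}^2/m$ prefactor through the constants to confirm that it is exactly this term, not $R_2^j$, that produces the $M^2/(\epsilon L^2)$ (rather than $M^2/(\epsilon m^2)$) ceiling on $\tau$. The remaining algebra is routine bookkeeping of absolute constants.
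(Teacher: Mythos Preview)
Your proposal is correct and follows essentially the same approach as the paper. The paper's own argument is extremely terse---it simply states that the bounds on $R_1^j$ and $R_2^j$ from \S\ref{section: hogwild} still apply with $\overline\vecx_j$ in place of $\hat\vecx_j$, that $R_4^j$ admits the same upper bound as $R_1^j$, and then asserts the theorem---and you have correctly reconstructed the omitted details, including the key observation that it is the $\tfrac{L_{\text{s}}^2}{m}$ prefactor on $R_4^j$ (rather than the $m$ prefactor on $R_1^j$) that tightens the $\tau$ ceiling from $M^2/(\epsilon m^2)$ to $M^2/(\epsilon L^2)$.
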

The only difference between this result and the one in our main section is that we can guarantee speedup for a smaller range of $\tau$. 
Similar ideas could be applied to the analysis of ASCD and \CM{}.

\section{Omitted Proofs}
\subsection{ASCD}
\label{sec:ascd_proofs}
\subsubsection{Bounding $R_0^j$ and $R_1^j$}
\lemRzeroone*
\begin{proof}
Let $ A = 2dL^2 a_j$, $B = 2dL^2$ and $C = (\gamma \tau)^2$. Then, we can rewrite the bounds of Lemma~\ref{lemma:recursions ascd} as
$
G_r \le A +  B\cdot \Delta_r \text{ and } \Delta_r \le 3^2(r+1)^2 \cdot C \cdot G_{r+1}, 
$
which implies $G_r \le A +  3^2(r+1)^2BC\cdot G_{r+1}$.
We can now upper bound $R^j_0 = G_0$, by applying the previous inequality  $\ell$ times.
If we expand the formulas, we get
\begin{align}
R^j_0 = G_0 \leq A\sum_{i=0}^{\ell-1} (3^i \cdot i!)^2 (BC)^i + (3^\ell\cdot \ell!)^2 (BC)^\ell G_\ell.
\label{eq:R0recursive ascd}
\end{align}
Since $\gamma = \frac{\theta}{6dL\kappa}$ and $\tau \leq \frac{\kappa \sqrt{d}}{\ell}$ (the choice of $\tau$ is made so that the sum in \eqref{eq:R0recursive ascd} is significantly small), we have 
$
BC = 2dL^2 \gamma^2\tau^2 \leq 2dL^2 \frac{\theta^2}{6^2 d^2L^2 \kappa^2}\frac{\kappa^2d}{\ell^2} \leq \frac{1}{2\cdot 3^2\cdot\ell^2}.
$
Using the upper bound $k! \le k^k$ on each term of the sum \eqref{eq:R0recursive ascd}, and plugging in the upper bound on $BC$, we get
$$
\sum_{i = 0}^{\ell - 1}(3^i \cdot i!)^2(BC)^i \leq \sum_{i = 0}^{\ell - 1}\frac{(i!)^2}{2^i\ell^{2i}} \leq\sum_{i = 0}^{\ell - 1}\frac{1}{2^i}\left(\sfrac{i}{ \ell}\right)^{2i} \leq 2.
$$
Similarly, we obtain the following upper bound on the last term of Eq.~\ref{eq:R0recursive ascd}
$(3^\ell \cdot \ell!)^{2} (BC)^\ell \leq 2^{-\ell}\theta^{2\ell}.$
Finally, $G_{\ell} \leq dM^2$, and combining the above gives us $R_0^j \leq \ct\left(dL^2 a_j + \theta^{2\ell} dM^2\right).$

We can now bound $R_1^j$.
By definition $R_1^j = \Delta_0$, and from Lemma~\ref{lemma:recursions ascd} we have $\Delta_0 \leq 3^2 \cdot C \cdot G_1$. 
We can bound $G_1$ similarly to $G_0$ as
$$
G_1\leq A\sum_{i=0}^{\ell-1} (3^i \cdot (i+1)!)^2 (BC)^i + (3^\ell\cdot (\ell+1)!)^2 (BC)^\ell G_{\ell + 1}.
$$ 
As before $BC \leq \frac{\theta^2}{2\cdot 3^2 \ell^2}$. Since $(i+1)! \leq 2\ell^{i}$ for any $0\leq i\leq \ell$, it follows that 
$
\sum_{i = 0}^{\ell - 1}(3^i \cdot (i+1)!)^2 (BC)^i\leq \ct,
$
and $(3^\ell\cdot (\ell+1)!)^2 (BC)^\ell \le \ct\theta^{2\ell}$. Therefore, because $G_{\ell + 1} \leq dM^2$, we obtain $G_1 \leq \ct (dL^2 a_j + \theta^{2\ell}dM^2)$. Since $C = (\gamma \tau)^2 \leq \frac{\theta^2}{dL^2}$, it follows that $R_1 \leq \ct\left(\theta^2 a_j + \theta^{2\ell}(M/L)^2\right)$. \qquad
\end{proof}

\subsubsection{Bounding $R_2^j$}

\lemRtwo*
\begin{proof}
From \eqref{eq: shift decomposition ascd} we can upper bound $R_2^j$ as follows. 
\begin{equation}
\label{eq:R2 sum expansion ascd}
R_2^j = \EE \langle\hat\vecx_j-\vecx_j,\vecg(\hat\vecx_j,s_j)\rangle 
\le 
\gamma\cdot \sum_{\substack{i=j-\tau\\ i \ne j}}^{j+\tau} \EE \|\vecg(\hat{\vecx}_i, s_i)\|\cdot \|\vecg(\hat\vecx_j,s_j)\| \cdot \indi{(s_i=s_j)}.
\end{equation}
The random variable  $\indi(s_i = s_j)$ encodes the sparsity of the gradient steps. 
To take advantage of this sparsity we use smoothness to replace the iterates $\hat\vecx_i$ and $\hat\vecx_j$, by $\hat\vecx_{j-3\tau}$.
The latter iterate is independent of both $s_i$ and $s_j$ by our assumption that no more than $\tau$ coordinates can be updated while a core is processing a single coordinate. 
This independence will allows us to ``untangle" the expectation of $\indi(s_i = s_j)$ from the inner products in the above sum, which will result in a significantly improved bound on $R_2^j$ compared to applying Cauchy-Schwartz directly on it.

For clarity, we note that when $j< 3\tau$, we have $\hat\vecx_{j -3\tau} = \vecx_0$.
From the $L$-Lipschitz assumption on the gradient $\nabla f(\vecx)$, we get the following bounds
\begin{align*}
\|\vecg(\hat\vecx_j,s_j)\| &\le \|\vecg(\hat\vecx_{j-3\tau},s_j)\|+dL\|\hat\vecx_{j-3\tau}-\hat\vecx_j\|\\
\|\vecg(\hat\vecx_i,s_i)\| &\le \|\vecg(\hat\vecx_{j-3\tau},s_i)\|+dL\|\hat\vecx_{j-3\tau}-\hat\vecx_i\|.
\end{align*}
Then, the expectation of a term $\|\vecg(\hat{\vecx}_i, s_i)\|\cdot \|\vecg(\hat\vecx_j,s_j)\| \cdot \indi{(s_i=s_j)}$ in the sum \eqref{eq:R2 sum expansion ascd} is upper bounded by
\begin{align*}
\EE\biggl\{
\biggl(&\|\vecg(\hat\vecx_{j-3\tau},s_i)\|\; \|\vecg(\hat\vecx_{j-3\tau},s_j)\| + (dL)^2  \|\hat\vecx_{j-3\tau}-\hat\vecx_i\|\; \|\hat\vecx_{j-3\tau}-\hat\vecx_j\|\\
+&
dL  \|\vecg(\hat\vecx_{j-3\tau},s_j)\|\; \|\hat\vecx_{j-3\tau}-\hat\vecx_i\| + dL  \|\vecg(\hat\vecx_{j-3\tau},s_i)\|\; \|\hat\vecx_{j-3\tau}-\hat\vecx_j\| \biggr)\cdot \indi{(s_i=s_j)}
\biggr\}.
\end{align*}
We first bound the second term using Cauchy-Schwartz and the property of iterated expectation, to exploit the expectation of the $\indi{(s_i=s_j)}$ term
\begin{align*}
\EE\{
\|\hat\vecx_{j-3\tau}-\hat\vecx_i\|&\cdot \|\hat\vecx_{j-3\tau}-\hat\vecx_j\|\cdot \indi{(s_i=s_j)}
\}\\
&\le 
\sqrt{
\EE\{\|\hat\vecx_{j-3\tau}-\hat\vecx_i\|^2\}\cdot 
\EE\{\|\hat\vecx_{j-3\tau}-\hat\vecx_j\|^2\cdot \indi{(s_i=s_j)}\} }\\
&= 
\sqrt{
\EE\{\|\hat\vecx_{j-3\tau}-\hat\vecx_i\|^2\}\cdot 
\EE_{\sim s_j}\{\|\hat\vecx_{j-3\tau}-\hat\vecx_j\|^2\cdot \EE_{ s_j}\{\indi{(s_i=s_j)}\}\} }\\
&= \sqrt{\frac{1}{d}}
\sqrt{
\EE\{\|\hat\vecx_{j-3\tau}-\hat\vecx_i\|^2\}\cdot 
\EE\{\|\hat\vecx_{j-3\tau}-\hat\vecx_j\|^2\}}\\
&\le \ct \sqrt{\frac{1}{d}}
\cdot\gamma^2\tau^2 \underbrace{\max_{k \in \setS_4^j}\EE\{\|\vecg(\hat\vecx_k,s_k)\|^2\}}_{G_4},
\end{align*}
 where the first equality follows due to $\hat\vecx_j$ being independent of $s_j$; hence the expectation with respect to $s_j$ can be applied to the indicator function. The last inequality follows from our arguments in the proof of Lemma~\ref{lemma:recursions ascd} because both mismatches $\hat\vecx_{j-3\tau}-\hat\vecx_i$ and $\hat\vecx_{j-3\tau}-\hat\vecx_j$ can be written as linear combinations of gradient steps indexed by $\setS_{4}^j$ as in \eqref{eq: shift decomposition ascd}.
Similarly the third term satisfies the inequality 
\begin{align*}
\EE\{\|\vecg(\hat\vecx_{j-3\tau},s_j)\|&\cdot \|\hat\vecx_{j-3\tau}-\hat\vecx_i\|\cdot \indi{(s_i=s_j)}\}\\
&\le 
\sqrt{
\EE\{\|\vecg(\hat\vecx_{j-3\tau},s_j)\|^2\}\cdot 
\EE\{\|\hat\vecx_{j-3\tau}-\hat\vecx_i\|^2\cdot \indi{(s_i=s_j)}\} }\\
&= \ct \sqrt{\frac{1}{d}}\cdot \gamma \tau G_4.
\end{align*}
The same bound applies for the fourth term
$\EE\{\|\vecg(\hat\vecx_{j-3\tau},s_i)\|\cdot \|\hat\vecx_{j-3\tau}-\hat\vecx_j\|\cdot \indi{(s_i \! = \! s_j)}\}$, while 
the first term can be easily bounded as 
\begin{align*}
\EE\{\|\vecg(\hat\vecx_{j-3\tau},s_j)\|\cdot \|\vecg(\hat\vecx_{j-3\tau},s_i)\|\cdot \indi{(s_i=s_j)}\}
&\le \sqrt{\frac{1}{d}}G_4.
\end{align*}
Putting all pieces together, and using the prescribed value of 
$\gamma = \frac{\theta}{6dL\kappa}$, we have that
$$
R_2 \leq \ct \sqrt{\frac{1}{d}}(\gamma\tau)\left(1 + dL\gamma\tau + (dL\gamma\tau)^2\right) G_4 \leq \ct \sqrt{\frac{1}{d}}\cdot \gamma\cdot \tau^3 \cdot G_4.
$$
The first inequality follows because we are summing over $2\tau$ terms in \eqref{eq:R2 sum expansion ascd}. To see why the second inequality is true, note that $dL\gamma \leq \frac{\theta}{6\kappa} \leq 1$ (it is always true that the condition number $\kappa \geq 1$ ). Therefore  
$
1 + dL\gamma\tau + (dL\gamma\tau)^2 \leq 1 + \tau + \tau^2 \leq 3\tau^2.
$
As in the proof of Lemma~\ref{lemma:R01 bound ascd}, we can bound $G_4$ by
\begin{align*}
G_4 &\leq \ct A \sum_{i = 0}^{\ell - 1}(3^i \cdot (i + 4)!)^2(BC)^i+ \ct (3^{\ell}\cdot (\ell + 4)!)^2(BC)^\ell G_{\ell + 4}\\
&\leq \ct (dL^2 a_j + \theta^{2\ell}dM^2).
\end{align*}
The result follows assuming $\tau = \mathcal{O}(\sqrt[6]{d})$ and $\gamma = \frac{\theta}{6dL \kappa}$.   \qquad
\end{proof}

\begin{remark}
We believe that if we use the same bounding technique that we applied for $R_2^j$ on $R_0^j$ and $R_1^j$, then we can significantly improve the restrictive bound on $\tau$.
\end{remark}

\subsection{\CM{}}
\label{sec:svrg_proofs}
\subsubsection{Bounding $R_0^j$ and $R_1^j$}
\svrgRzeroone*
\begin{proof}
Let $A = 4L^2( a_j +a_0)$, $B = 4L^2$, and $C = (\gamma\tau)^2$.
Then,  the inequalities derived above can be rewritten as
\begin{equation}
G_r \leq A+B\Delta_r \text{ and } \Delta_r \leq 3^2(r+1)^2C G_{r+1}.
\end{equation}
If we expand the formulas, we get for $R_0^j$ the following upper bound
$$
R_0^j = G_0 \leq A\sum_{i=0}^{\ell-1} (3^i\cdot i!)^2 (BC)^i+ (3^\ell\cdot \ell!)^2 (BC)^\ell G_{\ell}.
$$
We chose $\gamma = \frac{\theta}{12L\kappa}$ and assumed that $\tau \leq \kappa/\ell$, where $\kappa = \frac{L}{m}$ is the condition number and $\theta \leq 1$. We chose $\gamma$ to be proportional to the step-size of the serial SVRG, and the assumption on $\tau$ is made so that the sum in the above inequality is significantly is significantly small. Then,
$$(3^i\cdot i!)^2 (BC)^i\le \left(3 i\right)^{2i}\left(4L^2\frac{\theta^2}{4^2\cdot 3^2L^2 \kappa^2}\frac{\kappa^2}{\ell^2}\right)^i\leq \frac{\theta^2}{4^i}\left(\frac{i}{\ell}\right)^{2i}$$
and hence
$$ \sum_{i=0}^{l-1}(3^i\cdot i!)^2 (BC)^i \le \sum_{i=0}^\infty 2^{-2i} \leq 2.$$
As in the previous sections we assume a uniform upper bound $M > 0$ on the size of the gradient steps: $\max_j\EE\|\vecv(\hat\vecx_j, s_j)\|^2 = M^2$. Therefore
$$R^j_0 =G_0 \leq \mathcal{O}(1)\left(L^2(a_j +a_0)+ \theta^{2\ell} M^2\right).$$ 
After an analogous derivation one can see that
$$
R_1^j = \Delta_0 \leq \ct\left(\theta^2(a_j + a_0) + \theta^{2\ell}\frac{M^2}{L^2}\right),
$$
and thus we obtain the result. \qquad
\end{proof}

\subsubsection{Bounding $R_2^j$}
\label{sec:svrgR2}

\svrgRtwo*

\begin{proof}
From \eqref{eq: shift decomposition svrg} we can upper bound $R_2^j$ as follows. 
\begin{equation}
\label{eq:R2 sum expansion}
R_2^j = \EE \langle\hat\vecx_j-\vecx_j,\vecv(\hat\vecx_j,s_j)\rangle 
\le 
\gamma\cdot \sum_{i=j-\tau, i \ne j}^{j+\tau} \EE \|\vecv(\hat{\vecx}_i, s_i)\|\cdot \|\vecv(\hat\vecx_j,s_j)\| \cdot \indi{(s_i \cap s_j \neq \emptyset )}.
\end{equation}
The random variable  $\indi{(s_i \cap s_j \neq \emptyset )}$ encodes the sparsity of the gradient steps. As in the proof of Lemma~\ref{lemma:R2 ascd bound}, we replace $\hat\vecx_i$ and $\hat\vecx_j$ in the above sum by $\hat\vecx_{j - 3\tau}$. When $j < 3\tau$ we define $\hat\vecx_{j - 3\tau} = \vecx_0$. Since $f_{e_i}$ are $L$-smooth, we have
\begin{align*}
\|\vecv(\hat\vecx_j,s_j)\| &\le \|\vecv(\hat\vecx_{j-3\tau},s_j)\|+L\|\hat\vecx_{j-3\tau}-\hat\vecx_j\|\\
\|\vecv(\hat\vecx_i,s_i)\| & \le \|\vecv(\hat\vecx_{j-3\tau},s_i)\|+L\|\hat\vecx_{j-3\tau}-\hat\vecx_i\|.
\end{align*}
Then, the expectation of a term $\|\vecv(\hat{\vecx}_i, s_i)\|\cdot \|\vecv(\hat\vecx_j,s_j)\| \cdot \indi{(s_i\cap s_j)}$ in the sum \eqref{eq:R2 sum expansion ascd} is upper bounded by
\begin{align*}
\EE\biggl\{
\biggl(&\|\vecv(\hat\vecx_{j-3\tau},s_i)\|\; \|\vecv(\hat\vecx_{j-3\tau},s_j)\| + L^2  \|\hat\vecx_{j-3\tau}-\hat\vecx_i\|\; \|\hat\vecx_{j-3\tau}-\hat\vecx_j\|\\
+&
L  \|\vecv(\hat\vecx_{j-3\tau},s_j)\|\; \|\hat\vecx_{j-3\tau}-\hat\vecx_i\| + L  \|\vecv(\hat\vecx_{j-3\tau},s_i)\|\; \|\hat\vecx_{j-3\tau}-\hat\vecx_j\| \biggr)\cdot \indi{(s_i\cap s_j)}
\biggr\}.
\end{align*}
Then, since $\EE \indi{(s_i \cap s_j \neq \emptyset )} \leq \frac{2\cdeg}{n}$ (recall that $\cdeg$ is the average conflict degrees), $R_2^j$ can be shown to satisfy the inequality
$$
R_2^j \leq \ct\sqrt{\frac{\cdeg}{n}}\gamma\tau^3\left(L^2(a_j+ a_0) + \theta^{2\ell}M^2\right)
$$
as in the proof of Lemma~\ref{lemma:R2 ascd bound}.
The conclusion follows because $\tau = \mathcal{O}\left(\sqrt[6]{\frac{n}{\cdeg}}\right)$ and $\gamma = \frac{\theta}{12L\kappa} = \frac{m\theta}{12L^2}$. \qquad
\end{proof}

\begin{remark}
Similar to ASCD, by using the same bounding technique of $R_2^j$ on $R_0^j$ and $R_1^j$, we should significantly improve the restrictive bound on $\tau$ in the convergence result of \CM{}.
\end{remark}

\end{document}